%% file: main_arxiv.tex
\PassOptionsToPackage{table,dvipsnames}{xcolor}
\documentclass[11pt, a4paper]{causalgame}

\usepackage[numbers,sort&compress,round]{natbib}
\usepackage{mathtools}
\usepackage{bbm}

\usepackage{subcaption}
\usepackage{array}
\usepackage{multirow}
\usepackage{makecell}
\usepackage[normalem]{ulem}
\usepackage{xspace}
\usepackage{pifont}
\usepackage{tikz}
\usepackage[most]{tcolorbox}
\definecolor{pink-color}{RGB}{200,80,120}
\usetikzlibrary{shapes,positioning,arrows.meta,fit,calc,backgrounds,shadows,decorations.pathreplacing,shapes.geometric}

\usepackage{algorithm}
\usepackage{algorithmic}

\theoremstyle{plain}
\newtheorem{theorem}{Theorem}

\newtheorem{lemma}[theorem]{Lemma}
\newtheorem{corollary}[theorem]{Corollary}
\theoremstyle{definition}
\newtheorem{definition}[theorem]{Definition}
\newtheorem{assumption}[theorem]{Assumption}
\theoremstyle{remark}

\newenvironment{myquotation}{\begin{quote}\itshape}{\end{quote}}

\newcommand{\ours}{\texttt{CANA}\xspace}
\newcommand{\oursfull}{\textbf{C}ausal \textbf{Ana}logical Researcher\xspace}
\newcommand{\adrfull}{\text{Analogical Deep Research}\xspace}
\newcommand{\adr}{\texttt{ADR}\xspace}
\newcommand{\sab}{\texttt{SAB}\xspace}

\newcommand{\dr}{\texttt{DR}\xspace}
\newcommand{\adrb}{\texttt{ADR-bench}\xspace}

\newcommand{\chatgpt}{\texttt{OpenAI Deep Research}\xspace}
\newcommand{\gemini}{\texttt{Gemini Deep Research}\xspace}
\newcommand{\qwen}{\texttt{Qwen Deep Research}\xspace}
\newcommand{\miro}{\texttt{MiroFlow}\xspace}

\title{Retrieving and Integrating Historical Analogies for Foresight Analysis}

\author[1,2]{Yongqiang Chen}
\author[1,2]{Guangyi Chen}
\author[1,2]{Yuewen Sun}
\author[1,2]{Kun Zhang}
\affil[1]{Mohamed bin Zayed University of Artificial Intelligence}
\affil[2]{Carnegie Mellon University}

\begin{abstract}
Systematic comparisons between current situations and structurally similar past events in history, i.e., historical analogies, is among the most powerful tools for foresight analysis. In this work, we present a new task called \adrfull (\adr) to Large Language Model (LLM) agents and construct the first ADR benchmark \adrb to study whether LLM agents are able to find and leverage historical analogies when doing foresight analysis.
Our investigation reveals a key obstacle: LLM agents are poor at finding analogies because they match on surface features rather than underlying mechanisms.
We argue that \adr is inherently a causal question as it requires understanding why the event occurred. Based on our theoretical analysis, we propose two principles required for \adr, including the mechanism alignment and cross-analogy confirmation.
Built upon our theoretical results, we propose a new agentic framework called \oursfull (\ours) that guides LLMs to find and integrate historical analogies.
\ours incorporates a simple yet effective structural decomposition representation, and integrates structural feedback for reflective improvements of historical analogy identification and integration.
We show that \ours brings up to 10\% improvements in historical analogy generation, and surpasses the state-of-the-art deep research agents in the \adrb.
Case studies with the ongoing events confirm the effectiveness of \ours in leveraging historical analogies.
\end{abstract}

\begin{document}

\maketitle

\section{Introduction}
\label{sec:intro}
In the study of history, similar events are sometimes found to re-occur and hence systematic comparisons between current situations and similar histories provide us insights for foresight analysis~\citep{achenbaum1983making,guldi2014history,parsons2016historical,ghilani2017looking,HistoricalAnalogies}.
Historical analogies, i.e., structurally similar past events, have been widely used in human history, such as policy makers~\citep{neustadt1986thinking}, scientists~\citep{dunbar1995invivo}, and practitioners~\citep{dalio2017principles}. In particular, \citet{green2007structured} showed that structurally aligned analogies improve expert conflict predictions from 32\% to 60\% accuracy.
Nevertheless, untrained human tend to only come up with superficial analogies instead of structural analysis \citep{ghilani2017looking, khong1992analogies}.

In the meantime, recently Large Language Models (LLMs) has emerged and demonstrated great promise in tackling complex reasoning tasks by learning from the extensive data~\citep{chatgpt,Bubeck2025EarlySA,o1,guo2025deepseek}.
Built upon powerful LLMs, Deep research (\dr) agents that are equipped with more advanced planning capabilities and access to external tools, can further produce in-depth, comprehensive answers to complex research tasks~\citep{openai2025deepresearch}.
Recent sophisticated \dr agents can tackle expert-level question answering problems~\citep{phan2025humanity}, search for the answer with complex constraints~\citep{wei2025browsecomp}, and even do research tasks~\citep{Mitchener2025KosmosAA}.
Therefore, it raises an intriguing research question:
\begin{myquotation}\centering
    \textit{Can \dr agents do analogical search and reasoning for foresight analysis?}
\end{myquotation}
In fact, early attempts with LLMs in finding analogies showed mixed results~\citep{jiayang2023storyanalogy,ye2024analobench,sourati2024arn,li2025past}. LLMs tend to struggle finding analogies from narrative stories~\citep{jiayang2023storyanalogy}, cross-domain narratives~\citep{sourati2024arn}, analogies in the long context~\citep{ye2024analobench}, and exhibit human-like tendencies to superficially similar events~\citep{li2025past}.
In order to answer the question, this work introduces a new task called \adrfull (\adr) to LLM agents, where one aims to find historical events that align with the given target event (or the current situation) in terms of the underlying mechanism.
We construct the first \adr benchmark called \adrb that consists of $15$ events, including $10$ past events with literature-documented historical analogies, as well as $5$ recent events without literature documents.
Through \adrb, we find that the state-of-the-art \dr agents, including \chatgpt~\citep{openai2025deepresearch}, \gemini~\citep{google2024geminideepresearch}, \qwen~\citep{tongyi2025deepresearch}, and \miro~\citep{miromind2026miroflow} do not actively leverage the historical analogies when doing foresight analysis. Even with historical analogies identified, the analysis is rather shallow.

We show that \adr is inherently a causal problem that the agent requires understanding the \textit{underlying hidden mechanism and factors} shared across historical events and the current conditions~\citep{HistoricalAnalogies}.
Aligned with the observations in the literature~\citep{li2025past}, identifying analogies merely based on the surface-level similarity suffers from a non-identifiability issue and degraded foresight prediction.
In addition, sufficient foresight requires sufficient coverage of the underlying structure.

\input{Figures/ADR.tex}

Motivated by our theoretical analysis, we propose \oursfull (\ours), that processes events into structural decompositions including preconditions, temporal chains, mechanisms, and outcomes. Retrieving through the decomposed structural representation of events improves more than 10\% than the state-of-the-art self-reflection method by~\citet{li2025past}. Furthermore, we integrate the structural decomposition into the open-sourced \dr agent \miro, that iteratively retrieves new analogies to cover sufficient aspects of the underlying mechanisms. In \adrb, we show that \ours helps identify more informative analogies, uncovering hidden driving factors, and improves the depth of foresight analysis.
Our contributions can be summarized as follows:
\begin{itemize}[leftmargin=*]
    \item We present the task of \adrfull with the first \adrb, showing the limitation of existing \dr in retrieving and integrating historical analogies for foresight analysis;
    \item We present a formal framework of \adr and show that structural representation and sufficent coverage are two principles to retrieve satisfying historical analogies;
    \item We propose the first \adr agent \ours, that achieves more than 10\% performance in historical analogy retrieval and integration than the state-of-the-art methods.
\end{itemize}

\section{Related Work}
\label{sec:related}
\textbf{Historical analogies for foresight analysis.}
In human history, historical analogies are widely used or cited for foresight~\citep{achenbaum1983making,guldi2014history,parsons2016historical,ghilani2017looking,HistoricalAnalogies}. For example, policymakers often exploit analogies to reason about the unprecedented scenarios~\citep{neustadt1986thinking,houghton1996role,khong1992analogies,brunk2008curing}. Scientists use analogical reasoning to identify useful hypotheses~\citep {dunbar1995invivo,nersessian2008creating}.\citet{green2007structured} also showed that forecasting with analogies can be largely improved from $32\%$ to $60\%$.
Analogies refer to events that share similar mechanisms or aligned structures~\citep{gentner1983structure,falkenhainer1989sme}.
Structural alignment of analogies helps with foresight analysis~\citep{bartha2010parallel}, which is inherently a causal problem where separate partial observations are aligned to recover the underlying causal relations~\citep{huang2020cdmini,adams2021identification,yao2024multiview}. We exploit the conceptual connection between analogy identification and causal learning to build our framework.

\textbf{Analogical reasoning with LLMs.}
Due to the usefulness of analogies, there is a growing body of works trying to use LLMs to do analogical reasoning~\citep{webb2023emergent}. \citet{opielka2025analogical} showed LLMs can abstract concepts and perform analogical reasoning. \citet{yasunaga2024analogical} found that self-generated analogous examples help improve LLM reasoning. \citet{yu2024thought} showed that relating analogical sub-problems helps solve a challenging composite problem with LLMs.
As LLMs are trained on extensive datasets, finding analogies with LLMs is also of great interests~\citep{jiayang2023storyanalogy,yuan2024analogykb,sourati2024arn,li2025past}.
\citet{jiayang2023storyanalogy} found that LLMs are relatively limited in finding analogies from stories.
\citet{sourati2024arn} also showed the limitations of LLMs in finding cross-domain analogies. \citet{ye2024analobench} provided evidence on the limitations of LLMs in finding analogies in the long context.
The most relevant work to ours is \citet{li2025past}, which introduced the historical-analogy benchmark and showed that reflecting on the topics, background, process, and result can improve the historical analogy capabilities of LLMs. Nevertheless, none of the existing works fully define the problem and the principles required for historical analogy identification and integration.

\textbf{Deep research and forecasting with LLMs.}
Deep research (\dr) agent is a systematic framework to equip LLMs with advanced capabilities such as planning and tool use~\citep{Xu2025ACS}, in order to tackle complex real-world problems that require multi-step planning and reasoning~\citep{openai2025deepresearch,li2025webthinker}.
For example, \citet{li2025webthinker,miromind2026miroflow} showed that synthesis of comprehensive search results help tackle challenging reasoning tasks~\citep{wei2025browsecomp,mialon2023gaia,phan2025humanity} and even future prediction~\citep{zeng2026futurex}. In particular, forecasting is inherently a challenging task that requires gather substantive information~\citep{karger2025forecastbench} while of great downstream application value~\citep{yu2023finmem,zhang2024finagent,sen2026llms}.
Despite the importance of historical analogies, we found that \dr agents do not proactively exploit the use of historical analogies by the first \adrb, and propose a new \adr agent to tackle the limitation.

\input{2_ADR.tex}

\input{3_ADR-agent.tex}
\input{4_exp.tex}
\section{Conclusion}
\label{sec:conclusion}
In this work, we introduced Analogical Deep Research (\adr) to LLM agents. We presented a theoretical framework to demonstrate the importance of identifying and aligning historical analogies for foresight analysis. By constructing the first \adr benchmark (\adrb), we showed that existing \dr agents lack the capability of proactively finding and leveraging historical analogies when doing foresight analysis. We derived two principles from theoretical analysis that \adr requires structural alignment and cross-analogy confirmation, and realized the two principles via a framework called \ours.
Our results show that (i) causal-inspired dimensions improve analogy retrieval by $10\%$, (ii) \ours significantly improves the \adr on \adrb.

\bibliography{ref_0_ai_sci,ref_1_llm,ref_2_causality,ref_3_others,ref_adr_related}

\input{9_appdx.tex}

\end{document}

%% file: Figures/ADR.tex
\begin{figure}[!t]
    \centering
    \definecolor{memBlue}{RGB}{220, 235, 255}
    \definecolor{progGreen}{RGB}{220, 255, 220}
    \definecolor{obsGray}{RGB}{240, 240, 240}
    \definecolor{borderGray}{RGB}{100, 100, 100}

\resizebox{\linewidth}{!}{
\begin{tikzpicture}[
    >={Stealth[scale=1.2, inset=1pt]},
    %
    latent node/.style={circle, draw=borderGray, fill=memBlue, thick, minimum size=1.5cm, font=\sffamily\LARGE\bfseries, text=black!80},
    %
    observed node/.style={circle, draw=borderGray, fill=progGreen, thick, minimum size=1.5cm, font=\sffamily\LARGE\bfseries, text=black!80},
    %
    hidden node/.style={circle, draw=borderGray, fill=white, dashed, line width=1.2pt, minimum size=1.5cm, font=\sffamily\Huge\bfseries, text=borderGray},
    %
    missing node/.style={circle, draw=borderGray!50, densely dashed, fill=gray!5, thick, minimum size=1.5cm, font=\sffamily\Huge\bfseries, text=borderGray!60},
    %
    desc box/.style={rectangle, draw=borderGray, fill=obsGray, thick, rounded corners=6pt, minimum height=1.9cm, minimum width=4.6cm, align=left, font=\sffamily\large, text=black!85, drop shadow={opacity=0.12, shadow xshift=1.5mm, shadow yshift=-1.5mm}},
    %
    causal edge/.style={->, thick, color=borderGray, shorten >=2pt, shorten <=2pt},
    mapping edge/.style={->, line width=1.2pt, color=borderGray!60, shorten >=4pt, shorten <=4pt},
    %
    lbl/.style={font=\sffamily\Large\bfseries, align=center, text=borderGray},
    view title/.style={font=\sffamily\LARGE\bfseries, align=right, anchor=east, text=borderGray!100}
]

    \def\xstep{3.1}   
    \def\ystep{-2.6}  
    \def\xbox{6.9}    

    \node[latent node] (L1) at (1*\xstep, 0) {$s_1$};
    \node[latent node] (L2) at (2*\xstep, 0) {$s_2$};
    \node[latent node] (L3) at (3*\xstep, 0) {$s_3$};
    \node[latent node] (L4) at (4*\xstep, 0) {$s_4$};
    \node[latent node] (L5) at (5*\xstep, 0) {$s_5$};

    \draw[causal edge, very thick] (L1) -- (L2);
    \draw[causal edge, very thick] (L2) -- (L3);
    \draw[causal edge, very thick] (L3) -- (L4);
    \draw[causal edge, very thick] (L4) -- (L5);

    \node[lbl, above=0.35cm of L1] {Trigger};
    \node[lbl, above=0.35cm of L2] {Enabler};
    \node[lbl, above=0.35cm of L3] {Amplifier};
    \node[lbl, above=0.35cm of L4] {Mediator};
    \node[lbl, above=0.35cm of L5] {Outcome};
    
    \node[view title, left=0.6cm of L1] {Latent Mechanisms\\ and Structural Roles};

    \begin{scope}[on background layer]
        \foreach \i in {1,...,5} {
            \draw[dotted, borderGray!40, line width=1pt] (\i*\xstep, -0.9) -- (\i*\xstep, 4.2*\ystep);
        }
    \end{scope}


    \node[view title] (T_lbl) at (0.4*\xstep, 1*\ystep) {Target Event $E_T$\\[0.4ex] {\large\normalfont\color{borderGray!90} 2008 GFC}};
    \node[observed node] (T1) at (1*\xstep, 1*\ystep) {$s_1$};
    \node[observed node] (T2) at (2*\xstep, 1*\ystep) {$s_2$};
    \node[hidden node]   (T3) at (3*\xstep, 1*\ystep) {?};
    \node[hidden node]   (T4) at (4*\xstep, 1*\ystep) {?};
    \node[hidden node]   (T5) at (5*\xstep, 1*\ystep) {?};

    \draw[causal edge] (T1) -- (T2);
    \draw[causal edge, dashed] (T2) -- (T3);
    \draw[causal edge, dashed] (T3) -- (T4);
    \draw[causal edge, dashed] (T4) -- (T5);

    \node[view title] (V1_lbl) at (0.4*\xstep, 2*\ystep) {Analogy 1\\[0.4ex] {\large\normalfont\color{borderGray!90} 1907 Panic}};
    \node[observed node] (V1_1) at (1*\xstep, 2*\ystep) {$s_1$};
    \node[observed node] (V1_2) at (2*\xstep, 2*\ystep) {$s_2$};
    \node[observed node] (V1_3) at (3*\xstep, 2*\ystep) {$s_3$};
    \node[missing node]  (V1_4) at (4*\xstep, 2*\ystep) {$\times$};
    \node[missing node]  (V1_5) at (5*\xstep, 2*\ystep) {$\times$};

    \draw[causal edge] (V1_1) -- (V1_2);
    \draw[causal edge] (V1_2) -- (V1_3);

    \node[view title] (V2_lbl) at (0.4*\xstep, 3*\ystep) {Analogy 2\\[0.4ex] {\large\normalfont\color{borderGray!90} Japan 1990}};
    \node[observed node] (V2_1) at (1*\xstep, 3*\ystep) {$s_1$};
    \node[missing node]  (V2_2) at (2*\xstep, 3*\ystep) {$\times$};
    \node[observed node] (V2_3) at (3*\xstep, 3*\ystep) {$s_3$};
    \node[observed node] (V2_4) at (4*\xstep, 3*\ystep) {$s_4$};
    \node[missing node]  (V2_5) at (5*\xstep, 3*\ystep) {$\times$};

    \draw[causal edge] (V2_3) -- (V2_4);

    \node[view title] (V3_lbl) at (0.4*\xstep, 4*\ystep) {Analogy 3\\[0.4ex] {\large\normalfont\color{borderGray!90} LTCM 1998}};
    \node[missing node]  (V3_1) at (1*\xstep, 4*\ystep) {$\times$};
    \node[observed node] (V3_2) at (2*\xstep, 4*\ystep) {$s_2$};
    \node[observed node] (V3_3) at (3*\xstep, 4*\ystep) {$s_3$};
    \node[missing node]  (V3_4) at (4*\xstep, 4*\ystep) {$\times$};
    \node[observed node] (V3_5) at (5*\xstep, 4*\ystep) {$s_5$};

    \draw[causal edge] (V3_2) -- (V3_3);

    \node[view title, font=\sffamily\LARGE\bfseries, align=center] (D_title) at (\xbox*\xstep, 0) {Descriptions};
    
    \node[desc box] (D_T) at (\xbox*\xstep, 1*\ystep) {\textbf{Observed:} $D(E_T)$\\[0.6ex] \normalsize Era: 2008\\ Sector: Housing};
    \node[desc box] (D_1) at (\xbox*\xstep, 2*\ystep) {\textbf{Observed:} $D(E_1)$\\[0.6ex] \normalsize Era: 1907\\ Sector: Copper};
    \node[desc box] (D_2) at (\xbox*\xstep, 3*\ystep) {\textbf{Observed:} $D(E_2)$\\[0.6ex] \normalsize Era: 1990\\ Country: Japan};
    \node[desc box] (D_3) at (\xbox*\xstep, 4*\ystep) {\textbf{Observed:} $D(E_3)$\\[0.6ex] \normalsize Era: 1998\\ Entity: Hedge Fund};

    \begin{scope}[on background layer]
        \node[rectangle, draw=borderGray!60, dashed, thick, rounded corners=12pt, fill=gray!5, inner sep=16pt, fit=(D_title) (D_T) (D_1) (D_2) (D_3)] {};
    \end{scope}

    \draw[mapping edge] (T5.east) -- (D_T.west);
    \draw[mapping edge] (V1_5.east) -- (D_1.west);
    \draw[mapping edge] (V2_5.east) -- (D_2.west);
    \draw[mapping edge] (V3_5.east) -- (D_3.west);

    
    \draw[decorate, decoration={brace, amplitude=10pt}, thick, borderGray] 
        ([xshift=-0.2cm, yshift=-0.2cm]V3_lbl.south west) -- ([xshift=-0.2cm, yshift=0.2cm]V1_lbl.north west) 
        node[midway, left=0.5cm, font=\sffamily\LARGE\bfseries, align=right, text=borderGray] {Historical\\Analogies\\ \Large $\mathcal{A}$};

    \begin{scope}[shift={(3.5, 5.2*\ystep)}]
        \draw[thick, draw=borderGray!40, fill=obsGray!50, rounded corners=8pt] (-0.3, 0.9) rectangle (12.2, -0.9);
        \node[font=\sffamily\Large\bfseries\color{black!80}] at (0.2, 0) [anchor=west] {Legend:};
        
        \node[observed node, minimum size=0.9cm, font=\sffamily\normalsize] at (2.8, 0) {};
        \node[anchor=west, text=black!80, font=\sffamily\large] at (3.3, 0) {Observed};
        
        \node[hidden node, minimum size=0.9cm, font=\sffamily\Large] at (6.3, 0) {?};
        \node[anchor=west, text=black!80, font=\sffamily\large] at (6.8, 0) {Target Hidden};
        
        \node[missing node, minimum size=0.9cm, font=\sffamily\Large] at (10.0, 0) {$\times$};
        \node[anchor=west, text=black!80, font=\sffamily\large] at (10.5, 0) {Missing};
    \end{scope}

\end{tikzpicture}
}
    \caption{Illustration of the \adrfull process. Given the target event $E_T$ as the 2008 Global Financial Crisis (GFC) (target), at the time of August 2007, the target event $E_T$ has two observed positions $s_1$: subprime mortgage defaults (trigger), while $s_3$--$s_5$ are hidden. One could find useful historical analogies where each provides a different partial view of the same pattern. The \textit{1907 Panic} shows a trigger (a failed speculation), a vulnerability (unregulated institutions), and an amplifier (shared directors spreading panic across banks). \textit{Japan 1990} shows a trigger (interest rate hike), an amplifier (cross-ownership ties hiding losses), and a transmission channel (banks kept lending to insolvent borrowers for a decade). \textit{LTCM 1998} shows a vulnerability (an unregulated hedge fund), an amplifier (opaque bilateral contracts), and an outcome (a coordinated bailout).
    No single analogy covers all hidden positions, but their union does. In particular, $s_3$ (amplifier) appears in all three analogies as interlocks, keiretsu, and OTC exposure, establishing it as a cross-analogy invariant and enabling inference that a hidden amplifier (the CDS web) likely exists in the target.
    However, the event descriptions may not share high similarity, and it is challenging for \dr agents to find them.}
    \label{fig:ADR_graph}
\end{figure}

%% file: 2_ADR.tex
\section{Foresight with Historical Analogies}
\label{sec:theory}
In this section, we formally define the problem of \adrfull, along with its objectives and principles to realize \adr.

\subsection{Problem Definition}
\label{sec:definition}
We begin by establishing necessary concepts, and full details of the theoretical framework can be found in Appendix~\ref{appdx:theory}.
\begin{definition}[Event Representation]
\label{def:event}
An event $E$ has two representations: (i)~\textbf{descriptive} $D(E) \in \mathbb{R}^p$, capturing observable surface features (entities, domain, timeline); and (ii)~\textbf{mechanistic} $M(E) = (V_E, \mathcal{E}_E)$, where $V_E = \{v_1, \ldots, v_k\}$ are causal factors, $G_E = (V_E, \mathcal{E}_E)$ is a directed graph encoding causal relationships. Depending on the event-specific context $\chi_E$, the same $M(E)$ can have different instantiations at the descriptive level $D(E) = h(M(E), \chi_E)$.
\end{definition}
Given a target event $E_T$, we aim to find an aligned source event $E_S$ where they share a set of structural positions $\{s_1, \ldots, s_L\}$ in the underlying causal graphs describing the mechanisms of $E_S$ and $E_T$. Positions are not pre-specified but discovered through cross-event alignment~\citep{gentner1983structure}. Shown as in Fig.~\ref{fig:ADR_graph}, roles can be assigned to the objects depending on the relational structure in the event~\citep{clem1991systematicity}.

\textbf{Partial observability.} Usually, the target event describes the current situation, and one can only observe part of the underlying objects $V^{\mathrm{obs}} \subseteq V_{E_T}$ as well as the structural positions  $\mathcal{S}_T^{\mathrm{obs}} = \psi_{E_T}(V^{\mathrm{obs}})$  until the current time, or cutoff time $t_c$, where $\psi_{E_T}(\cdot)$ is a structural mapping that maps underlying objects in $V_{E_T}$ to the corresponding structural roles.
In addition, the remaining objects are hidden $\mathcal{V}_T^{\mathrm{hid}} = V_{E_T} \setminus V^{\mathrm{obs}}$ occupying the hidden positions $\mathcal{S}_T^{\mathrm{hid}} = \psi_{E_T}(V_{E_T}) \setminus \mathcal{S}_T^{\mathrm{obs}}$.

\textbf{\adrfull.} \adr for foresight analysis requires one to find sufficient analogies in the history and align the underlying structural positions to make predictions about future trajectories.

\begin{definition}[Foresight Risk with Analogies]
\label{def:surface}
Let $Y_F = \Gamma(\{\tau_s(t) : s \in \mathcal{S}_T, t > t_c\})$  denote the foresight target, where $\tau_s(t)$ describes the future trajectories of $s$ at time $t$, 
let $P_F^w$ denote the true distribution and $\widehat{P}_F^{\mathcal{A}}$ denote the forecasted distribution through the analogy set $\mathcal{A}$. Then the foresight risk is
$
\mathcal{R}(\mathcal{A}, w) = \sum_{s \in \mathcal{S}_T} \mu_s \cdot \mathrm{TV}(\widehat{P}_s^{\mathcal{A}}, P_s^w)
$
where $\mathrm{TV}(\cdot,\cdot)$ measures the distance between two input distributions, $\mu_s \geq 0$, $\sum_s \mu_s = 1$, and $P_s^w$ is the true trajectory distribution at position $s$.
\end{definition}
Intuitively, let $\mathcal{U}$ denote the universe of events, the ADR problem is to select $\mathcal{A}^* \subseteq \mathcal{U}$ and produce foresight report consisting of uncovered hidden factors, mechanisms, predicted trajectories, conditional scenarios, and key uncertainties. The success of \adr relies on the following assumption.
\begin{assumption}[Mechanism Transfer]
\label{ass:transfer}
If factors $v_S \in V_{E_S}$ and $v_T \in V_{E_T}$ occupy the same structural position ($\psi_{E_S}(v_S) = \psi_{E_T}(v_T) = s$), both are active at their respective cutoffs, and $E_S$ has progressed beyond $E_T$'s current stage, then
$\mathrm{TV}(\widehat{P}_s^{E_S}, P_s^T) \leq \alpha_s$
for some $\alpha_s > 0$.
\end{assumption}
Assumption~\ref{ass:transfer} is at the core of the literature of applied history, where people widely use historical analogies for foresight analysis~\citep{HistoricalAnalogies}.

\subsection{\adrfull Benchmark}
\label{sec:benchmark}
After defining the goal of \adr, we can explore whether the state-of-the-art \dr agents are able to find useful historical analogies for foresight analysis.
We begin by constructing the first \adr benchmark \adrb with the aim of providing insights on the performance of existing \dr agents.

\textbf{Event corpus.} Essentially, we need events where both historical analogies are available and the foresight quality can be verified.
However, one can hardly find events that jointly satisfy all the criteria. Hence, we incorporate two types of events: 10 \textbf{historical events} that are widely studied in the literature, and $5$ \textbf{forward events} covering currently unfolding situations. The $15$ events span financial ($5$), geopolitical ($6$), and technology ($4$) domains.

\textbf{Per-event annotation and task design.}
Each event is annotated with:
(1) temporal cutoff with rationale;
(2) pre-cutoff analyst's brief;
(3) causal structure: including preconditions, causal chain, mechanism, outcome, and structural pattern. For forward events, we only have the approximated mechanism and current outcome since they are unfolding);
(4) oracle analogies with consensus levels and documented shared mechanisms for historical events;
(5) cross-analogy invariants with mechanisms that recur across the oracle analogies; and
(6) hidden factors with foreseeability.
%
We focus on foresight analysis: given only the pre-cutoff brief, produce a forward-looking analysis. The task asks the agent to ``assess the underlying dynamics, evaluate the risks, and provide your forward-looking view,'' without mentioning analogies, historical parallels, or hidden factors.
If an agent uses analogies, it is because its architecture drives it to do so.
 
\textbf{Evaluation design.}
We use a rubric-based evaluation protocol. Detailed instructions for evaluation can be found in Appendix~\ref{appdx:adrb}.
Specifically, we will extract 5-level claims, including (L1) surface fact, (L2) single-analogy insight, (L3-D) cross-analogy pattern without structural role, (L3-S) cross-analogy pattern with causal role specified and L4 hidden factor inferred from cross-analogy evidence from $\geq$2 events. We also report the hit rate of hidden factors.
We also score analysis quality (RAS), the depth of foresight analysis (FQS), and the mechanism grounding quality (Gr).

\begin{figure}[ht]
    \centering
    \includegraphics[width=\linewidth]{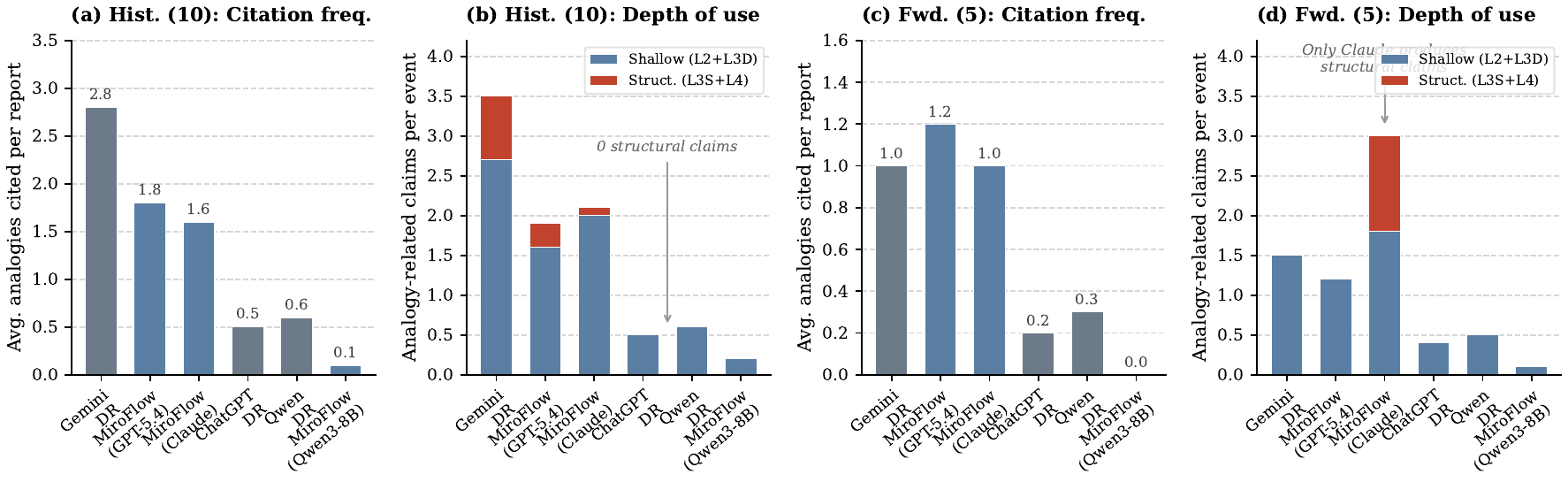}
    \caption{Preliminary results of \dr agents in \adr tasks.}
    \label{fig:adr_prelim}
\end{figure}

\textbf{Preliminary results.}
We evaluate four commercial \dr agents on all $15$ events using the natural prompt: \chatgpt~\citep{openai2025deepresearch}, \gemini~\citep{google2024geminideepresearch}, \qwen~\citep{tongyi2025deepresearch}, and \miro~\citep{miromind2026miroflow} with three backbone LLMs (\texttt{Claude Sonnet 4.5}~\citep{anthropic2025claude45}, \texttt{GPT-5.4}~\citep{openai2026gpt54}, and \texttt{Qwen3-8B}~\citep{yang2025qwen3}). The results are given in Fig.~\ref{fig:adr_prelim}, where one can find that most of existing \dr agents fail to find useful analogies in foresight analysis. The relatively most capable agent \gemini, can find limited analogies, but fail to produce in-depth analysis by effectively leveraging the analogies.

\subsection{Theoretical Discussions}
\label{sec:theory-disc}
To understand why existing \dr agents fail to identify useful analogies to perform foresight analysis, we provide the following theoretical discussions.
To begin with, most existing \dr agents are trained on planning and synthesis queries and results at the surface-level descriptions~\citep{huang2025deep}, while finding historical analogies require structural alignment of the underlying objects, roles and mechanisms.
\begin{theorem}[Surface Non-Identifiability]
\label{thm:nonid}
Suppose there exist two admissible worlds $w^0, w^1$ with identical surface observations $\mathcal{O}_D^{\infty}(w^0) = \mathcal{O}_D^{\infty}(w^1)$, but for a hidden position $s$, $Z_s(w^0) \neq Z_s(w^1)$ where $Z_s(w) = \mathbbm{1}[s \in \mathcal{S}_T^{\mathrm{true}}(w)]$ that $s$ does not align between $w^0$ and $w^1$,
Then, no surface-level method can identify whether $s$ is active: $\inf_{\hat{Z}_s} \max_{j} \Pr_{w^j}(\hat{Z}_s \neq Z_s(w^j)) \geq 1/2$. If the two worlds imply separated foresight distributions $\mathrm{TV}(P_F^{w^0}, P_F^{w^1}) \geq 2\Delta_s$, then any surface-level forecast satisfies $\max_j \mathrm{TV}(\widehat{P}_F, P_F^{w^j}) \geq \Delta_s$. This holds for any number of analogies retrieved.
\end{theorem}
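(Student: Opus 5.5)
The argument is a standard two-point (Le Cam) indistinguishability argument. First we formalize a \emph{surface-level method}: any estimator $\hat Z_s$ or forecast $\widehat P_F$ that is a (possibly randomized) measurable function of the surface observations $\mathcal{O}_D^{\infty}(w)$. This includes descriptive representations $D(E)$ of the target and of any collection of retrieved events, together with the algorithm's internal randomness. Under this definition, the hypothesis $\mathcal{O}_D^{\infty}(w^0)=\mathcal{O}_D^{\infty}(w^1)$ implies that the law of the method's output is identical under $w^0$ and $w^1$. Call this common law $Q$. This is the crux, and it is where the phrase ``for any number of analogies retrieved'' enters. Retrieval is itself a surface-level operation, so the retrieved set $\mathcal{A}$ and its descriptions are part of $\mathcal{O}_D^{\infty}$, which is the infinite-sample surface record. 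Enlarging $\mathcal{A}$ therefore cannot break the equality.

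For the identification claim, we fix any $\hat Z_s$ and use $Z_s(w^0)\neq Z_s(w^1)$. Since $Z_s\in\{0,1\}$, without loss of generality $Z_s(w^0)=0$ and $Z_s(w^1)=1$. Then
\[
\Pr_{w^0}(\hat Z_s\neq 0)+\Pr_{w^1}(\hat Z_s\neq 1)=Q(\hat Z_s=1)+Q(\hat Z_s=0)=1 .
\]
Hence $\max_j \Pr_{w^j}(\hat Z_s\neq Z_s(w^j))\ge 1/2$. Taking the infimum over $\hat Z_s$ gives the bound. Equivalently, the Bayes risk under a uniform prior on $\{w^0,w^1\}$ equals $\tfrac12(1-\mathrm{TV}(Q,Q))=\tfrac12$.

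For the forecasting claim, the forecast $\widehat P_F$ is a fixed distribution that is the same in both worlds, since it is a function of identical observations. If it is randomized, we apply the argument pointwise, or to the mixture, using convexity of $\mathrm{TV}$. The triangle inequality then gives
\[
2\Delta_s\le \mathrm{TV}(P_F^{w^0},P_F^{w^1})\le \mathrm{TV}(\widehat P_F,P_F^{w^0})+\mathrm{TV}(\widehat P_F,P_F^{w^1})\le 2\max_j \mathrm{TV}(\widehat P_F,P_F^{w^j}).
\]
This yields the bound $\Delta_s$. Since neither step used anything about $|\mathcal{A}|$, the conclusion is uniform in the number of analogies.

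The main obstacle is not the calculation. It is making the notion of a ``surface-level method'' precise enough that its output depends on $w$ only through $\mathcal{O}_D^{\infty}(w)$. In particular, we must check that the retrieval step and the descriptions $D(E)=h(M(E),\chi_E)$ of retrieved events are included in $\mathcal{O}_D^{\infty}$. I would state this explicitly as the definition of the surface observation channel, so that both worlds induce identical joint distributions over everything the method sees.
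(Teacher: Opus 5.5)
Your proposal is correct and follows essentially the same route as the paper: identical output laws under $w^0,w^1$ give error probabilities $r$ and $1-r$, the triangle inequality for total variation gives the $\Delta_s$ bound, and uniformity in the number of analogies follows because any budget-$K$ retrieval is a function of $\mathcal{O}_D^{\infty}$. For randomized forecasts, your mixture-and-convexity variant is an equally valid alternative to the paper's step of taking $\mathbb{E}_R$ of the summed triangle inequality.
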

The full statement and the proof of Theorem~\ref{thm:nonid} is given in Appendix~\ref{appdx:theory}. Intuitively, Theorem~\ref{thm:nonid} implies that aligning analogies for foresight analysis requires aligning the underlying structural roles instead of the surface-level descriptions.
It also explains why it is widely observed in the literature that LLMs are struggling with finding the analogies~\citep{sourati2024arn,li2025past}. 

Nevertheless, given the partial observability of both the target events and the potential source events, for $K$ retrieved analogies, we may only align a few positions, which may suffer from misalignment issues, since multiple events may coincidentally align with each other.
\begin{theorem}[Cross-Analogy Confirmation]
\label{thm:recovery}
Let $X_{k,s} = \mathbbm{1}[\text{analogy } E_k \text{ confirms position } s]$ with $P(X_{k,s}=1 \mid Z_s=1) = q_{k,s}$ and $P(X_{k,s}=1 \mid Z_s=0) = p_{k,s}$, conditionally independent across $k$ given $Z_s$, then
$
\log \frac{P(Z_s = 1 \mid X_{\mathcal{A}})}{P(Z_s = 0 \mid X_{\mathcal{A}})} = \log\frac{\pi_s}{1-\pi_s} + \sum_{k \in \mathcal{A}} \left[ X_{k,s} \log\frac{q_{k,s}}{p_{k,s}} + (1-X_{k,s}) \log\frac{1-q_{k,s}}{1-p_{k,s}} \right],
$ 
where $\pi_s = P(Z_s = 1)$ is the prior probability 
that position $s$ is active before observing any 
analogical evidence.
Each independent confirming analogy multiplies posterior odds by $q_{k,s}/p_{k,s}$, so coincidence probability decays exponentially. 
\end{theorem}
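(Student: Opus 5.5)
The identity is Bayes' rule plus conditional independence, so I will compute the posterior odds directly. Write $X_{\mathcal{A}} = (X_{k,s})_{k\in\mathcal{A}}$ for the observed confirmation vector.

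Bayes' rule gives
\[
\frac{P(Z_s=1\mid X_{\mathcal{A}})}{P(Z_s=0\mid X_{\mathcal{A}})} = \frac{\pi_s}{1-\pi_s}\cdot\frac{P(X_{\mathcal{A}}\mid Z_s=1)}{P(X_{\mathcal{A}}\mid Z_s=0)}.
\]
The normalizing constant $P(X_{\mathcal{A}})$ cancels. I assume $0<\pi_s<1$ and $0<p_{k,s},q_{k,s}<1$ so that every ratio and logarithm is finite; I will state these as standing nondegeneracy conditions.

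Conditional independence across $k$ given $Z_s$ lets each likelihood factor as a product of Bernoulli terms. For $z\in\{0,1\}$,
\[
P(X_{\mathcal{A}}\mid Z_s=z)=\prod_{k\in\mathcal{A}} r_{k,z}^{X_{k,s}}(1-r_{k,z})^{1-X_{k,s}},
\]
where $r_{k,1}=q_{k,s}$ and $r_{k,0}=p_{k,s}$. Dividing the two products term by term and taking logarithms gives exactly the stated sum:
\begin{itemize}
\item each confirming analogy ($X_{k,s}=1$) contributes $\log(q_{k,s}/p_{k,s})$;
\item each non-confirming analogy ($X_{k,s}=0$) contributes $\log\frac{1-q_{k,s}}{1-p_{k,s}}$.
\end{itemize}
In odds form, each confirmation therefore multiplies the posterior odds by $q_{k,s}/p_{k,s}$, which is the second claim.

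The only step needing care is making the ``coincidence probability decays exponentially'' remark precise, since it is informal. I will read it as follows. Assume confirmations are informative, $q_{k,s}/p_{k,s}\ge \rho>1$ for all $k$. Suppose $Z_s=0$ and all $K$ analogies confirm. The probability of this event is $\prod_k p_{k,s}\le \bar p^{\,K}$ with $\bar p=\max_k p_{k,s}<1$. Equivalently, after $K$ confirmations the posterior odds against activity, $P(Z_s=0\mid X)/P(Z_s=1\mid X)$, are at most $\frac{1-\pi_s}{\pi_s}\rho^{-K}$.

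Both statements follow immediately from the formula already derived, so the main work is only in stating these assumptions explicitly. If time permits, I would also note a Chernoff/Hoeffding refinement: under $Z_s=0$, the log-likelihood ratio sum has negative drift, namely $-\mathrm{KL}(\mathrm{Bern}(p)\|\mathrm{Bern}(q))$ per analogy. Hence the probability that the posterior wrongly favors $Z_s=1$ decays exponentially in $K$ even when some analogies fail to confirm.
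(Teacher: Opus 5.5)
Your proof is correct and follows the same route as the paper's proof: Bayes' rule with the normalizer cancelling, factorization of both likelihoods into Bernoulli products by conditional independence, then a term-by-term ratio and logarithm. Two points go beyond the paper. First, your formalization of ``coincidence probability decays exponentially'' (via $q_{k,s}/p_{k,s}\ge\rho>1$, and optionally the KL-drift argument) is more than the paper does; the paper only illustrates the decay through the homogeneous Bayes factor $(q_s/p_s)^K$ and the count $n_s^*$. Second, your nondegeneracy assumption $q_{k,s}<1$ is slightly stronger than needed. The paper's own calibrated example uses $q_s=1$, and that case is harmless when every $X_{k,s}=1$, since the factor $(1-q_{k,s})^{1-X_{k,s}}=0^0=1$ and only the logged form needs the convention $0\cdot\log 0=0$.
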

The full statement and proof of Theorem~\ref{thm:recovery} are given in Appendix~\ref{appdx:theory}. Theorem~\ref{thm:recovery} states the conditions in order to sufficiently align the structures between the target events and the analogies.
For example, given $\pi_s{=}0.5, q_s{=}1, p_s{=}0.2, \delta{=}0.05$, one requires 2 independent analogies per position to distinguish the structural necessity.

\textbf{Discussions with existing causal identifiability theory.} As \adr is essentially a causal problem, one could also draw analogical relations between our theoretical results and the causal identifiability theories. For example, Theorem~\ref{thm:recovery} parallels the positive results from causal discovery on multiple datasets~\citep{huang2020cdmini} and from multi-view causal representation learning~\citep{yao2024multiview,brehmer2022weakly} where multiple partial views of shared latent structure enable identification of the underlying causal structure. 

\textbf{Design implications.} One could draw practical implications from the two theoretical results as well. Specifically, we can derive two principles:  \textbf{Principle 1} (Theorem~\ref{thm:nonid}): analogy retrieval should operate on $M(E)$ rather than $D(E)$, since surface observations collapse mechanism-distinct worlds;
and \textbf{Principle 2} (Theorem~\ref{thm:recovery}): cross-analogy confirmation with $K_s^* \geq 2$ independent analogies per position is needed to distinguish structural necessity from coincidence. 

%% file: 3_ADR-agent.tex
\section{\adrfull Agent}
\label{sec:method}
Built upon discussions in Sec.~\ref{sec:theory}, we propose \oursfull (\ours) to realize the two principles, via (i) structural decomposed representation; and (ii) structural reflective generation. An overview of \ours is illustrated in Fig.~\ref{fig:framework}.

\begin{figure}[t]
    \centering
    \includegraphics[width=\linewidth]{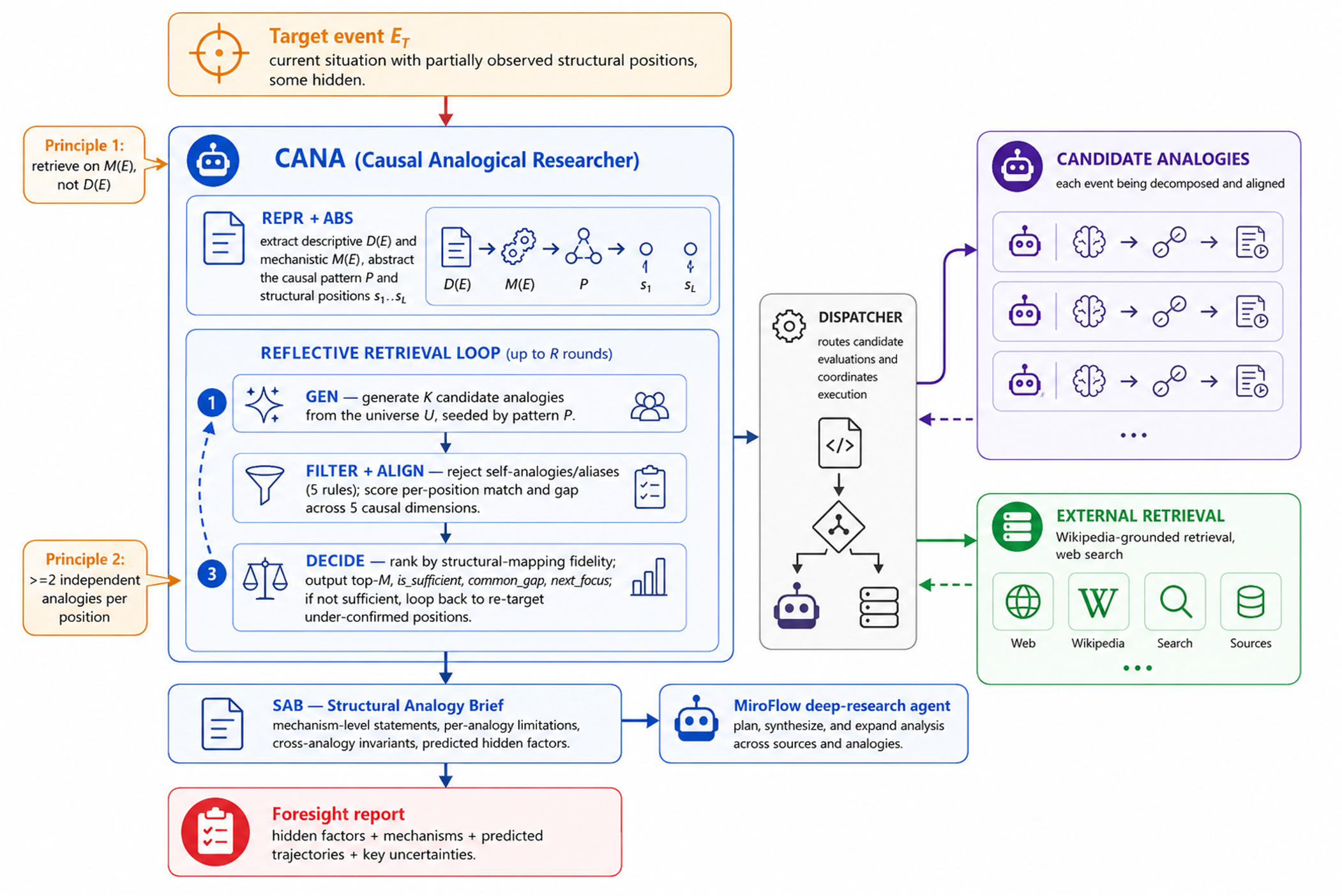}
    \caption{Overview of \oursfull (\ours). \ours decomposes each event
    into a structural representation (\texttt{preconditions, temporal
    chains, mechanisms, outcomes}), realizing \textbf{Principle~1}
    (retrieve on the mechanism $M(E)$ rather than the surface description
    $D(E)$). It then performs structural reflective generation, using
    feedback on covered and uncovered structural positions to iteratively
    retrieve cross-confirming analogies until each hidden position is
    supported by $\geq 2$ independent analogies, realizing
    \textbf{Principle~2}. The resulting Structural Analogy Brief (\sab),
    which contains mechanism statements, per-analogy limitations,
    cross-analogy invariants, and predicted hidden factors, is passed as
    context to the main \dr agent (\miro).}
    \label{fig:framework}
\end{figure}

\subsection{\oursfull}
\textbf{Structural decomposed representation.} Motivated by~\citet{clement1991systematicity}, we aim to seek a general decomposed representation of historical events, such that the follow-up structural alignment is easier. Therefore, we propose to leverage LLMs to recognize the temporal roles of events, including \texttt{preconditions, temporal chains, mechanisms, and outcomes}.
As mentioned in Def.~\ref{sec:definition} that, the positions is not unique, one may assign different roles to the objects in an event, depending on the objective and the context (e.g., the trigger--enabler--amplifier--mediator--outcome form in Fig.~\ref{fig:ADR_graph}). Since events usually unfold in a temporal order~\citep{HistoricalAnalogies}, we find that our decomposition is empirically general and useful. Detailed prompts are given in Appendix~\ref{appdx:adr_gen}.

\textbf{Structural reflective generation.} 
Following~\citet{li2025past} that uses parametric knowledge to retrieve historical analogies, we feed structurally decomposed target events to LLMs and ask the LLM to come up with relative analogies. To realize the second principle, we also construct feedback signals to prompt the LLM to iteratively reflect and self-improve upon the generation results. Following the insights given by Theorem~\ref{thm:recovery}, we prompt LLMs to self-critique and reflect based on the covered and not covered aspects of the target events. After a few rounds, one could observe more cross-confirmed events and find the desired historical analogies.

Although \citet{li2025past} also proposed a decomposition of events and leveraged reflective signals to improve the analogy retrieval, the lack of theoretical foundations and principles make the generated results remain focused on surface features, as we will show in the experiments.
 

\subsection{Agentic Modules for Deep Research}
With the \ours developed for effective historical analogy generation, we further incorporate \ours as a lightweight module in the state-of-the-art open-source \dr framework \miro~\citep{miromind2026miroflow}.

\ours synthesizes the Structural Analogy Brief (\sab) as additional context to the main agent \dr multi-agent framework. The construction of \sab relies on the \ours decomposition. A critical difference from previous structural reflective generation is that, \sab is designed find a more diverse set of analogies in order to cover a broad domain of possible analogies, and then do composition. Specifically, the \sab provides mechanism-level statements, limitations of each retrieved analogy, cross-analogy insights as well as predicted hidden factors. The rich context provides grounds for \dr.

%% file: 4_exp.tex
\input{tables/MDS.tex}
\input{tables/v4.tex}
\section{Experiments}
\label{sec:experiments}
We conduct experiments using both benchmarks from~\citet{li2025past} as well as \adrb, in order to verify the effectiveness of \ours in analogy generation, and in \adr task, respectively.

\subsection{Historical Analogy Generation}
\label{sec:acl}

\textbf{Setup.} We used more updated LLMs ranging in different capabilities, including \texttt{Qwen3-8B}~\citep{yang2025qwen3},  \texttt{GPT-5.4-mini}~\citep{openai2026gpt54mini}, and \texttt{Claude-Sonnet-4.6}~\citep{anthropic2026sonnet46card}. We compare the state-of-the-art methods proposed by~\citet{li2025past} (i.e., self-reflection), direct generation, and summary-then-generation. 

\textbf{Evaluation.} We consider two sets of evaluation protocols, one is the topic-background-process-result (MDS) decomposition by~\citet{li2025past}, and the other is motivated by \ours decomposition, where we consider more dimensions involving the historical analogy alignment, including chain isomorphism, direction consistency, mapping consistency, overlap sufficiency, idiosyncrasy coverage, system alignment, novelty, surface similarity, and calibration bonus. Detailed rubrics for evaluating the 9-dimension are given in Appendix~\ref{appdx:his_gen_eval}.
 
\textbf{Results.} Results using \citet{li2025past} and our 9-item finegrained evaluation are given in Table~\ref{tab:mds-result}, Table~\ref{tab:v4-result}, respectively. 
We can find that, although self-reflection by \citet{li2025past} uses a more aligned decomposition as MDS evaluation protocol, \ours still achieves the best performance across all LLMs. Interestingly, \texttt{GPT-5.4-mini} can achieve better results than self-reflection when using summary-then-generation.

Moreover, one could also notice that when using the relatively powerful LLM like \texttt{Claude-Sonnet-4.6}, MDS scores saturate to around $4.1$ to $4.2$. That is because the decomposition of MDS does not sufficiently align with the underlying structural form of the events. In contrast, the fine-grained 9-dimension decomposition in Table~\ref{tab:v4-result} provides a better lens to understanding the differences of methods. Although \ours does not use a similar fine-grained decomposition, surprisingly, \ours achieves up to 25\% improvements than self-reflection when using \texttt{GPT-5.4-mini}. In addition, the weakest \texttt{Qwen3-8B} using \ours can surpass \texttt{GPT-5.4-mini} with self-reflection, or \texttt{Claude-Sonnet-4.6} with summarization, implying the effectiveness and generality of \ours.

\textbf{Case studies.} Concretely, in Table~\ref{tab:failure} in Appendix~\ref{appdx:failure}, we present examples showing the seven systematic failure modes of descriptive matching via MDS decomposition, each resolved by \ours decomposition. It also echoes the importance of alignment via $M(E)$ as predicted by our theory.

\subsection{\adrfull Evaluation}
 
\textbf{Setup.} 
We evaluate $10$ agent configurations on all $15$ \adrb events:
$3$ commercial \dr agents (\chatgpt~\citep{openai2025deepresearch}, \gemini~\citep{google2024geminideepresearch}, \qwen~\citep{tongyi2025deepresearch});
$3$ vanilla \miro agents~\citep{miromind2026miroflow} with \texttt{Claude Sonnet 4.5}~\citep{anthropic2025claude45}, \texttt{GPT-5.4}~\citep{openai2026gpt54}, and \texttt{Qwen3-8B}~\citep{yang2025qwen3}; and
$3$ \ours agents with the same three backbones.
All agents receive the identical prompts as given by \adrb. Scoring uses the rubric-based evaluation with \texttt{Claude Sonnet 4.5}. We report the evaluation metrics as introduced in Sec.~\ref{sec:benchmark}.

\input{tables/ard_results.tex}
Table~\ref{tab:main_results} shows the main results across all $15$ events. From the results, we can find that

\textbf{Finding 1: \dr agents are struggling in foresight analysis with historical analogies.}
Complementary to previous preliminary results, we can find that most \dr agents fail to write in-depth foresight analysis, and the reports given by \dr agents lack of grounding mechanisms.

\textbf{Finding 2: \ours improves significantly for \adr.}
No matter which backbone LLM is used, one could find that \miro equipped with \ours show great improvements than both previous commercial \dr agents, as well as \miro agents without \ours. 
Even using the weakest LLM \texttt{Qwen3-8B}, \miro with \ours achieves commercial \dr agent level performance and even surpasses.
 
\textbf{Finding 3: Complementary strengths across backbones.}
Different LLMs with \ours show different characteristics.
Interestingly, \ours with \texttt{Sonnet-4.5} comes up with substantial  volume of claims ($223$ L3-S+L4) and hidden-factors ($16/42$ HF@L4). \ours with \texttt{GPT-5.4} focuses on high-level mechanism grounding synthesis, as well as most in-depth foresight quality. Both achieve RAS $= 1.00$. This suggests that the lightweight \sab with \ours provides the structural reasoning scaffold, while the backbone model determines execution quality.
 
\textbf{Connecting to theory.}
The HF@L4 $= 0/42$ result for all commercial agents validates Theorem~\ref{thm:nonid}: surface-level methods cannot identify hidden structural positions regardless of model capability. The lift from \sab validates Theorem~\ref{thm:recovery}: cross-analogy mechanism confirmation enables hidden factor identification and high-quality foresight analysis. 

\input{Figures/case.tex}

\textbf{Case studies on forward event.} We present a case studies of a forward event that ask agents to analyze the macro financial trends given by the Iran War 2026 and the new high of S\&P 500 when the oil prices remain significantly high.

Every agent addresses this anomaly, but with dramatically different analytical depth. As shown in Fig.~\ref{fig:case_study}, existing \dr agents cite descriptive evidence (L1), and explains that it is because of the ceasefire effects. Nevertheless, \ours cites the historical analogies that ``across 1973, 1979, 1990--91, equity inflection \textsc{precedes} supply normalization by 6--18 months'', which provides more in-depth reasoning underlying the surface, aligned with the application of historical analogies by humans~\citep{HistoricalAnalogies}. 

%% file: tables/MDS.tex
\newcolumntype{a}{>{\columncolor{black!8}\centering\arraybackslash}p{0.85cm}} 
\newcolumntype{b}{>{\columncolor{red!10!gray!10}\centering\arraybackslash}p{0.85cm}}
\newcolumntype{d}{>{\columncolor{teal!12!gray!10}\centering\arraybackslash}p{0.85cm}}
\newcolumntype{q}{>{\columncolor{violet!12!gray!10}\centering\arraybackslash}p{0.85cm}}
\newcolumntype{s}{>{\columncolor{brown!15!white!50}\centering\arraybackslash}p{0.85cm}}
\newcolumntype{e}{>{\columncolor{blue!25!gray!15}\centering\arraybackslash}p{0.85cm}}
\renewcommand{\arraystretch}{1.15} 
\setlength\tabcolsep{1.3pt}
\begin{table*}[!t]
  \centering
  \caption{MDS evaluation results on General Analogies with the Judge as GPT-5.4. ``T'', ``B'', ``P'', ``R'' denote the four descriptive dimensions of Topic, Background, Process, Result. ``Abs'' is abstract similarity; ``Lit'' is literal similarity via Jaccard token overlap, lower is better; ``MDS'' is the weighted composite TBPR scores with penalty for events flagged as near\_self by the alias detector. Best in each LLM block is \textbf{bolded}, second best \uline{underlined}.}
  \small
  \begin{tabular}{laaabbbdddqqqe}
  \toprule
  \multicolumn{1}{c}{\textbf{Method}} & \textbf{T$_{\texttt{Abs}}$} & \textbf{T$_{\texttt{Lit}}$} & \textbf{T$_{\texttt{All}}$}  & \textbf{B$_{\texttt{Abs}}$} & \textbf{B$_{\texttt{Lit}}$} & \textbf{B$_{\texttt{All}}$}  & \textbf{P$_{\texttt{Abs}}$} & \textbf{P$_{\texttt{Lit}}$} & \textbf{P$_{\texttt{All}}$}  & \textbf{R$_{\texttt{Abs}}$} & \textbf{R$_{\texttt{Lit}}$} & \textbf{R$_{\texttt{All}}$}  & \textbf{MDS} \\
  \midrule
\multicolumn{13}{c}{gpt-5.4-mini} \\
\cmidrule{1-13}
  Direct Gen.      & 2.24 & \textbf{0.14} & 0.46 & 2.36 & \uline{0.11} & 0.53 & 2.38 & \textbf{0.09} & 0.60 & 2.32 & \textbf{0.10} & 0.55 & 3.05 \\
  Summarizing      & \uline{2.99} & 0.18 & \uline{0.51} & \textbf{3.03} & 0.12 & \uline{0.67} & \uline{3.03} & 0.11 & \uline{0.69} & \textbf{2.95} & 0.13 & \uline{0.64} & \uline{3.59} \\
  Self-reflection  & 2.70 & 0.23 & 0.45 & \uline{3.00} & 0.18 & 0.59 & 3.00 & 0.15 & 0.66 & 2.78 & 0.17 & 0.60 & 3.34 \\
  \ours             & \textbf{3.13} & \uline{0.14} & \textbf{0.63} & 2.98 & \textbf{0.11} & \textbf{0.71} & \textbf{3.10} & \uline{0.09} & \textbf{0.79} & \textbf{2.95} & \uline{0.10} & \textbf{0.70} & \textbf{4.01} \\
\cmidrule{1-13} \multicolumn{13}{c}{sonnet-4-6} \\
\cmidrule{1-13}
  Direct Gen.      & 1.46 & 0.19 & 0.29 & 2.23 & 0.15 & 0.42 & 2.12 & 0.14 & 0.43 & 2.21 & 0.14 & 0.41 & 2.24 \\
  Summarizing      & \uline{3.16} & 0.16 & 0.59 & \uline{3.00} & \uline{0.10} & 0.72 & \uline{3.08} & 0.10 & 0.76 & \uline{3.06} & 0.11 & 0.70 & 3.91 \\
  Self-reflection  & \textbf{3.36} & \uline{0.15} & \uline{0.65} & \textbf{3.04} & 0.10 & \uline{0.73} & \textbf{3.09} & \uline{0.09} & \uline{0.78} & \textbf{3.10} & \uline{0.10} & \textbf{0.75} & \uline{4.11} \\
  \ours             & \uline{3.16} & \textbf{0.13} & \textbf{0.69} & 2.93 & \textbf{0.09} & \textbf{0.75} & 3.02 & \textbf{0.08} & \textbf{0.79} & 2.96 & \textbf{0.10} & \uline{0.74} & \textbf{4.17} \\
\cmidrule{1-13} \multicolumn{13}{c}{qwen3-8b} \\
\cmidrule{1-13}
  Direct Gen.      & 1.00 & \textbf{0.05} & 0.30 & 1.09 & \textbf{0.05} & 0.32 & 1.11 & \textbf{0.05} & 0.33 & 1.09 & \textbf{0.05} & 0.32 & 1.76 \\
  Summarizing      & 2.71 & 0.17 & 0.53 & 2.76 & 0.12 & 0.63 & 2.68 & 0.11 & 0.66 & 2.69 & 0.12 & 0.62 & 3.47 \\
  Self-reflection  & \textbf{3.06} & 0.15 & \uline{0.61} & \textbf{2.96} & 0.10 & \uline{0.70} & \textbf{3.03} & 0.09 & \textbf{0.77} & \textbf{3.02} & 0.10 & \uline{0.70} & \uline{3.94} \\
  \ours             & \uline{3.03} & \uline{0.12} & \textbf{0.68} & \uline{2.82} & \uline{0.08} & \textbf{0.74} & \uline{2.80} & \uline{0.08} & \uline{0.76} & \uline{2.91} & \uline{0.09} & \textbf{0.75} & \textbf{4.11} \\
  \bottomrule
  \end{tabular}%
  \label{tab:mds-result}%
\vspace{-0.15in}
\end{table*}%

%% file: tables/v4.tex

\newcolumntype{a}{>{\columncolor{black!8}\centering\arraybackslash}p{0.85cm}} 
\newcolumntype{b}{>{\columncolor{red!10!gray!10}\centering\arraybackslash}p{0.85cm}}
\newcolumntype{d}{>{\columncolor{teal!12!gray!10}\centering\arraybackslash}p{0.85cm}}
\newcolumntype{q}{>{\columncolor{violet!12!gray!10}\centering\arraybackslash}p{0.85cm}}
\newcolumntype{s}{>{\columncolor{brown!15!white!50}\centering\arraybackslash}p{0.85cm}}
\newcolumntype{e}{>{\columncolor{blue!25!gray!15}\centering\arraybackslash}p{0.85cm}}
\renewcommand{\arraystretch}{1.0}
\setlength\tabcolsep{3pt} 
\begin{table*}[!t]
  \centering
  \caption{Detailed evaluation results on General Analogies using a 9-item structured rubric. 
The column headers denote: 
\textbf{C. Isom.} (Chain Isomorphism), 
\textbf{Dir. Con.} (Direction Consistency), 
\textbf{Map. Con.} (Mapping Consistency), 
\textbf{Over. Suf.} (Overlap Sufficiency), 
\textbf{Idio. Cov.} (Idiosyncrasy Coverage), 
\textbf{Sys. Alig.} (System Alignment), 
\textbf{Nov.} (Novelty), 
\textbf{Surf. Sim.} (Surface Similarity), and 
\textbf{Calib. B.} (Calibration Bonus). 
Final represents the weighted composite score. 
All evaluations are conducted by GPT-5.4. 
Best results within each LLM block are \textbf{bolded}, and second best are \uline{underlined}.}
  \small
  \begin{tabular}{l aaa bb d q ss e}
  \toprule
  \textbf{Method} & 
  \makecell[b]{\textbf{C.}\\\textbf{Isom.}} & \makecell[b]{\textbf{Dir.}\\\textbf{Con.}} & \makecell[b]{\textbf{Map.}\\\textbf{Con.}} & 
  \makecell[b]{\textbf{Over.}\\\textbf{Suf.}} & \makecell[b]{\textbf{Idio.}\\\textbf{Cov.}} & 
  \makecell[b]{\textbf{Sys.}\\\textbf{Alig.}} & \makecell[b]{\textbf{Nov.}\\ } & 
  \makecell[b]{\textbf{Surf.}\\\textbf{Sim.}} & \makecell[b]{\textbf{Calib.}\\\textbf{B.}} & \textbf{Final} \\
  \midrule
\multicolumn{11}{c}{\textit{gpt-5.4-mini}} \\ 
\midrule
  Direct Gen.      & 1.46 & 1.77 & 1.68 & 1.84 & 0.96 & 0.95 & \textbf{2.07} & 2.02 & -0.69 & 3.31 \\
  Summarizing      & \textbf{1.96} & \textbf{2.16} & \textbf{2.26} & \uline{2.75} & 1.52 & \uline{1.79} & 1.47 & 2.77 & -0.65 & \uline{4.49} \\
  Self-reflection  & 1.94 & 2.10 & \uline{2.23} & 2.64 & \textbf{1.57} & \textbf{1.88} & 1.18 & 2.77 & -0.58 & 4.12 \\
  \ours            & \uline{1.95} & \uline{2.14} & 2.20 & \textbf{2.77} & \uline{1.54} & 1.78 & \uline{1.84} & 2.64 & -0.57 & \textbf{5.22} \\
\midrule
\multicolumn{11}{c}{\textit{sonnet-4-6}} \\
\midrule
  Direct Gen.      & 1.69 & 2.07 & 1.96 & 2.07 & 1.29 & 1.48 & \textbf{2.99} & 2.19 & -0.46 & 3.43 \\
  Summarizing      & 1.89 & 2.05 & \uline{2.21} & 2.66 & 1.41 & 1.68 & 1.58 & 2.71 & -0.67 & 4.65 \\
  Self-reflection  & \uline{1.97} & \textbf{2.15} & \textbf{2.24} & \uline{2.74} & \uline{1.50} & \uline{1.88} & 1.73 & 2.73 & -0.56 & \uline{5.23} \\
  \ours            & \textbf{1.98} & \uline{2.11} & 2.21 & \textbf{2.84} & \textbf{1.52} & \textbf{1.91} & \uline{1.89} & 2.62 & -0.47 & \textbf{5.70} \\
\midrule
\multicolumn{11}{c}{\textit{qwen3-8b}} \\
\midrule
  Direct Gen.      & 0.855 & 1.18 & 0.59 & 0.26 & 0.11 & 0.10 & \textbf{2.94} & 0.36 & -0.17 & 2.14 \\
  Summarizing      & \textbf{1.88} & \textbf{2.08} & \uline{2.11} & 2.53 & \uline{1.39} & \uline{1.64} & 1.79 & 2.64 & -0.66 & 4.37 \\
  Self-reflection  & \uline{1.87} & \uline{2.04} & \textbf{2.11} & \textbf{2.66} & \textbf{1.43} & \textbf{1.66} & 1.73 & 2.69 & -0.68 & \uline{4.63} \\
  \ours            & 1.77 & 1.94 & 1.98 & \uline{2.66} & 1.34 & 1.49 & \uline{2.06} & 2.54 & -0.70 & \textbf{4.87} \\
  \bottomrule
  \end{tabular}
  \label{tab:v4-result}
  \vspace{-0.05in}
\end{table*}

%% file: tables/ard_results.tex
\newcolumntype{s}{>{\columncolor{cyan!8}\centering\arraybackslash}c}
\newcolumntype{q}{>{\columncolor{gray!10}\centering\arraybackslash}c}
\newcolumntype{Z}{>{\centering\arraybackslash}p{0.25cm}}


\begin{table*}[!htbp]
\vspace{-0.05in}
\centering\small
\caption{Results on \adrb. Left: $10$ historical events ($22$ hidden factors). Right: $5$ forward events ($20$ hidden factors, outcomes unfolding). L3-S+L4 = cross-analogy structural claims per event; HF@L4 = hidden factors inferred via analogical reasoning; RAS = analysis quality; FQS$_d$ = depth-filtered foresight quality; Gr\% = fraction of claims with mechanism grounding $\geq 2$. All commercial \dr agents produce \textbf{zero} HF@L4 across both splits.}
\label{tab:main_results}
\renewcommand{\arraystretch}{1.3} 
\setlength{\tabcolsep}{4.5pt}     
\resizebox{\textwidth}{!}{
\begin{tabular}{@{} ll ssqqq Z ssqqq @{}}
\toprule
& & \multicolumn{5}{c}{\textbf{Historical events (10)}} & & \multicolumn{5}{c}{\textbf{Forward events (5)}} \\
\cmidrule(lr){3-7} \cmidrule(lr){9-13}
& \textbf{Agent} & 
\textbf{L3S+L4} & \textbf{HF@L4} & \textbf{RAS} & \textbf{FQS$_d$} & \textbf{Gr\%} & & 
\textbf{L3S+L4} & \textbf{HF@L4} & \textbf{RAS} & \textbf{FQS$_d$} & \textbf{Gr\%} \\
\midrule
\multirow{3}{*}{\rotatebox{90}{\textbf{\textsf{Comm.}}}}
& Gemini DR        &  0.8 &  0/22 & 0.967 & 0.71 & 0.18 & &  0.0 & 0/20 & 0.889 & 0.79 & 0.21 \\
& ChatGPT DR       &  0.0 &  0/22 & 0.741 & 0.53 & 0.13 & &  0.0 & 0/20 & 0.630 & 0.47 & 0.07 \\
& Qwen DR          &  0.0 &  0/22 & 0.689 & 0.37 & 0.15 & &  0.0 & 0/20 & 0.600 & 0.27 & 0.08 \\
\midrule
\multirow{3}{*}{\rotatebox{90}{\textbf{\textsf{Vanilla}}}}
& \miro (GPT-5.4)  &  0.3 &  1/22 & 0.933 & 0.73 & 0.34 & &  0.0 & 0/20 & 0.837 & 0.86 & 0.42 \\
& \miro (Sonnet-4.5)    &  0.1 &  0/22 & 0.819 & 0.82 & 0.35 & &  1.2 & 1/20 & 0.800 & 0.55 & 0.12 \\
& \miro (Qwen3-8B) &  0.0 &  0/22 & 0.319 & 0.29 & 0.07 & &  0.0 & 0/20 & 0.570 & 0.00 & 0.00 \\
\midrule
\multirow{3}{*}{\rotatebox{90}{\textbf{\textsf{Ours}}}}
& \ours (Sonnet-4.5)   & \textbf{14.0} & \textbf{13/22} & \textbf{1.000} & 0.77 & 0.74 & & \textbf{16.6} & 3/20 & \textbf{1.000} & 0.56 & 0.84 \\
& \ours (GPT-5.4)  & 11.6 &  7/22 & 0.993 & \textbf{0.92} & \textbf{0.94} & & 13.4 & 3/20 & \textbf{1.000} & \textbf{0.86} & \textbf{0.94} \\
& \ours (Qwen3-8B) &  3.2 &  1/22 & 0.607 & 0.82 & 0.81 & &  6.0 & \textbf{3/20} & 0.807 & 0.62 & 0.81 \\
\bottomrule
\end{tabular}}
\end{table*}

%% file: Figures/case.tex
\begin{figure*}[t]
\centering
\vspace{-0.15in}

\definecolor{oursColor}{RGB}{193, 67, 46}       
\definecolor{oursLight}{RGB}{253, 242, 240}     
\definecolor{baseColor}{RGB}{115, 125, 135}     
\definecolor{baseLight}{RGB}{248, 249, 250}     
\definecolor{hlEvent}{RGB}{31, 97, 141}         

\newcommand{\leveltag}[2]{%
  \tikz[baseline=(tag.base)]{\node[fill=#1, text=white, font=\sffamily\scriptsize\bfseries, 
    rounded corners=3pt, inner xsep=4pt, inner ysep=2.5pt] (tag) {#2};}%
}
\resizebox{\textwidth}{!}{
\begin{tikzpicture}[
  mybox/.style 2 args={
    rectangle, rounded corners=6pt, draw=#1, fill=#2, 
    line width=0.8pt, text width=7.4cm, align=left, 
    inner xsep=10pt, inner ysep=10pt, font=\small,
    drop shadow={opacity=0.08, shadow xshift=2pt, shadow yshift=-2pt}
  },
  widebox/.style 2 args={
    rectangle, rounded corners=6pt, draw=#1, fill=#2, 
    line width=0.8pt, text width=15.5cm, align=left, 
    inner xsep=10pt, inner ysep=10pt, font=\small,
    drop shadow={opacity=0.08, shadow xshift=2pt, shadow yshift=-2pt}
  }
]

\node[font=\sffamily\large\bfseries, text=black!85, anchor=west] at (-8.1, 0.6) 
  {Case Study: ``Why is the S\&P near highs despite the energy shock?''};

\node[widebox={black!20}{black!2}, anchor=north] (Q) at (-0.2, 0)
  {\textit{\textbf{Scenario (Iran War 2026, $t_c$ = May 2026):} Strait of Hormuz closed, oil $-$10.1 mb/d, Brent \$118/bbl.}}; 

\node[font=\sffamily\normalsize\bfseries, text=baseColor, anchor=south west] at (-8.1, -1.9) {\dr agents};
\node[font=\sffamily\normalsize\bfseries, text=oursColor, anchor=south west] at (0.3, -1.9) {\ours agents};


\node[mybox={baseColor}{baseLight}, anchor=north west] (G) at (-8.1, -2.1) {
  \leveltag{baseColor}{L1} \hspace{2pt} \textbf{Gemini DR} \hfill \textcolor{baseColor}{\scriptsize\textit{39KB report, 0 events cited}}\\[5pt]
  ``A striking anomaly... equities staged a rapid V-shaped recovery. 
  [...] There is a \underline{natural lag effect} inherent in energy shocks. 
  Elevated costs have not yet permeated income statements.''\\[4pt]
  {\scriptsize\textcolor{baseColor}{\textit{$\rightarrow$ Generic mechanism, no historical anchor or temporal pattern.}}}
};

\node[mybox={baseColor}{baseLight}, anchor=north west] (CQ) at (-8.1, -6.0) {
  \leveltag{baseColor}{L1} \hspace{2pt} \textbf{ChatGPT DR / Qwen DR} \hfill \textcolor{baseColor}{\scriptsize\textit{0 events cited}}\\[5pt]
  Risks enumerated generically: ``ceasefire collapse, OPEC fragmentation.''\\
  {\scriptsize\textcolor{baseColor}{\textit{$\rightarrow$ No derivation from historical precedent.}}}
};


\node[mybox={oursColor}{oursLight}, anchor=north west] (OC) at (0.3, -2.1) {
  \leveltag{oursColor}{L3-S} \hspace{2pt} \textbf{\ours\ (Claude)} \hfill \textcolor{oursColor}{\scriptsize\textit{4 mechanisms $\times$ 3+ events}}\\[5pt]
  ``Equity market inflection \textsc{precedes} physical supply normalization by 6--18 months 
  when forward curves signal credible resolution 
  (\textcolor{hlEvent}{\textbf{1973 embargo}}, \textcolor{hlEvent}{\textbf{1990--91 Gulf War}}, 
  \textcolor{hlEvent}{\textbf{1980--88 Tanker War}}). 
  The April 8 ceasefire [...] triggered appreciation because investors price 
  normalization 6--12 months forward.''
};

\node[mybox={oursColor}{oursLight}, anchor=north west] (OQ) at (0.3, -6.0) {
  \leveltag{oursColor}{L3-S} \hspace{2pt} \textbf{\ours\ (Qwen3-8B)} \hfill \textcolor{oursColor}{\scriptsize\textit{8B model, 5KB report}}\\[5pt]
  ``S\&P hitting new highs reflects 
  \textcolor{hlEvent}{\textbf{1973 Oil Crisis}} and \textcolor{hlEvent}{\textbf{2020--21 tech surge}}, 
  where euphoria decoupled from fundamentals. 
  Aligns with invariant: \textit{`equity inflection precedes supply normalization by 6--18 months.'}''
};


\end{tikzpicture}
}
\caption{How different agents handle the same analytical question (Iran War 2026). 
\textbf{Left} (gray): baseline agents produce L1 claims regardless of report length or model capability. 
\textbf{Right} (red): \ours agents produce L3-S claims grounded in $\geq$2 historical events.}
\label{fig:case_study}
\vspace{-0.15in}
\end{figure*}

%% file: 9_appdx.tex
\newpage
\appendix
\onecolumn

\section{Related Work}
\label{appdx:related}

\textbf{Historical analogies for foresight analysis.}
In human history, historical analogies are widely used or cited for foresight~\citep{achenbaum1983making,guldi2014history,parsons2016historical,ghilani2017looking,HistoricalAnalogies}. For example, policymakers often exploit analogies to reason about unprecedented scenarios~\citep{neustadt1986thinking,houghton1996role,khong1992analogies,brunk2008curing}. Scientists use analogical reasoning to identify useful hypotheses~\citep{dunbar1995invivo,nersessian2008creating}. \citet{green2007structured} also showed that forecasting with analogies can be largely improved from $32\%$ to $60\%$.
Analogies refer to events that share similar mechanisms or aligned structures~\citep{gentner1983structure,falkenhainer1989sme}. Structural alignment of analogies helps with foresight analysis~\citep{bartha2010parallel}, which is inherently a causal problem where separate partial observations are aligned to recover the underlying causal relations~\citep{huang2020cdmini,adams2021identification,yao2024multiview}. We exploit the conceptual connection between analogy identification and causal learning to build our framework.

\textbf{Analogical reasoning with LLMs.}
Due to the usefulness of analogies, there is a growing body of work using LLMs for analogical reasoning~\citep{webb2023emergent}. \citet{opielka2025analogical} showed LLMs can abstract concepts and perform analogical reasoning. \citet{yasunaga2024analogical} found that self-generated analogous examples help improve LLM reasoning. \citet{yu2024thought} showed that relating analogical sub-problems helps solve a challenging composite problem with LLMs.
As LLMs are trained on extensive datasets, finding analogies with LLMs is also of great interest~\citep{jiayang2023storyanalogy,yuan2024analogykb,sourati2024arn,li2025past}.
\citet{jiayang2023storyanalogy} found that LLMs are limited in finding analogies from stories. \citet{sourati2024arn} showed limitations in finding cross-domain analogies. \citet{ye2024analobench} provided evidence on the limitations of LLMs in finding analogies in long contexts.
The most relevant work is \citet{li2025past}, which introduced a historical-analogy benchmark and showed that reflecting on topics, background, process, and result can improve historical analogy generation. None of these works fully define the problem and the principles required for historical analogy identification and integration.

\textbf{Deep research and forecasting with LLMs.}
Deep research (\dr) agents systematically equip LLMs with planning and tool use~\citep{Xu2025ACS} for complex real-world problems requiring multi-step reasoning~\citep{openai2025deepresearch,li2025webthinker}. For example, \citet{li2025webthinker,miromind2026miroflow} showed that synthesis of comprehensive search results helps tackle challenging reasoning tasks~\citep{wei2025browsecomp,mialon2023gaia,phan2025humanity} and even future prediction~\citep{zeng2026futurex}. Forecasting is inherently challenging~\citep{karger2025forecastbench} but valuable downstream~\citep{yu2023finmem,zhang2024finagent,sen2026llms}. Despite the importance of historical analogies, we find that \dr agents do not proactively exploit them, and propose the first \adr agent.

\section*{LLM Use Statement}
From the research side, this work studies the use of LLMs for analogical deep research. LLMs are also used to help do evaluation and experimentation. From the paper writing side, we use LLMs to assist with improving the writing of this work.

\section*{Limitation and Future Works} 
As a pilot work in \adr, the scale of \adrb might be a bit limited, and the evaluation also relies on the LLMs. Future works could scale up the curation of \adrb as well as more fine-grained and human-aligned evaluation protocols.

\section*{Broader impacts}
We study using LLMs for \adrfull that will benefit the whole humanity and society. This work does not involve human subjects or personally identifiable information beyond public benchmarks used under their licenses. 


\section{Additional Technical Details}
\label{appdx:technical}

\subsection{Notation}
\label{appdx:notation}
\begin{table}[ht]
\centering
\small
\caption{Notation for the ADR theory.}
\label{tab:adr-notation}
\begin{tabular}{ll}
\hline
Symbol & Meaning \\
\hline
$E_T$ & Target event \\
$\mathcal U$ & Candidate analogy universe \\
$D(E)$ & Descriptive representation of event $E$ \\
$M(E)$ & Mechanistic representation of event $E$ \\
$\mathfrak P$ & Abstract causal pattern \\
$\mathcal S_{\mathfrak P}$ & Structural positions in pattern $\mathfrak P$ \\
$\psi_E$ & Instantiation map from event factors to structural positions \\
$\phi_k$ & Structural mapping from analogy $E_k$ to target $E_T$ \\
$\Omega_{\mathrm{pos}}$ & Target-side position-level overlap \\
$\mathcal O_D^\infty$ & Full surface observation \\
$\mathcal O_M^\infty$ & Full mechanism-level observation \\
$Z_s$ & Indicator that position $s$ is truly active / structural \\
$X_{k,s}$ & Indicator that analogy $E_k$ confirms position $s$ \\
$\widehat P_s^{\mathcal A}$ & Analogy-based forecast for position $s$ \\
$P_s^w$ & True trajectory distribution at position $s$ in world $w$ \\
$P_F^w$ & True foresight distribution in world $w$ \\
$\alpha_s^{\mathrm{tr}}$ & Mechanism-transfer error bound \\
$\Delta_s$ & Position-wise two-world separation gap \\
$\Delta_F$ & Foresight-distribution two-world separation gap \\
$\delta_s$ & Posterior probability that inferred position $s$ is false \\
\hline
\end{tabular}
\end{table}
\section{More Details of the Theory}
\label{appdx:theory}

\subsection{Problem Definition}

\begin{definition}[Event Representation]
\label{def-appdx:event}
An event $E$ has two representations: (i)~\textbf{descriptive} $D(E) \in \mathbb{R}^p$, capturing observable surface features (entities, domain, timeline); and (ii)~\textbf{mechanistic} $M(E) = (V_E, G_E, \{K_{v,E}\}, \theta_E)$, where $V_E = \{v_1, \ldots, v_k\}$ are causal factors, $G_E = (V_E, \mathcal{E}_E)$ is a directed graph encoding causal relationships, $K_{v,E}$ are transition kernels, and $\theta_E$ are event-specific parameters. Each factor $v \in V_E$ has an associated trajectory $\tau_v: [0,T] \to \mathbb{R}$. The descriptive and mechanistic representations are related by a many-to-one mapping $D(E) = h(M(E), \chi_E)$, where $\chi_E$ is event-specific context.
\end{definition}

\begin{definition}[Abstract Pattern and Structural Position]
\label{def-appdx:pattern}
An abstract causal pattern is $\mathfrak{P} = (\mathcal{S}_{\mathfrak{P}}, G_{\mathfrak{P}})$, where $\mathcal{S}_{\mathfrak{P}} = \{s_1, \ldots, s_L\}$ is a finite set of structural positions and $G_{\mathfrak{P}}$ is a directed graph over $\mathcal{S}_{\mathfrak{P}}$. Event $E$ \textbf{instantiates} $\mathfrak{P}$ via a graph homomorphism $\psi_E: V_E \to \mathcal{S}_E \subseteq \mathcal{S}_{\mathfrak{P}}$, preserving edge structure. Factors $v \in V_{E_1}$ and $w \in V_{E_2}$ occupy the \textbf{same structural position} if $\psi_{E_1}(v) = \psi_{E_2}(w)$. Positions are not pre-specified but discovered through cross-event alignment, following Gentner's structure-mapping theory \citep{gentner1983structure}: objects are placed in correspondence according to their roles in the common relational structure \citep{clem1991systematicity}.
\end{definition}

\begin{definition}[Structural Mapping and Position-Level Overlap]
\label{def-appdx:mapping}
A structural mapping from source $E_S$ to target $E_T$ is a partial injection $\phi: V_{E_S} \rightharpoonup V_{E_T}$ satisfying edge preservation and position consistency: $\psi_{E_S}(v) = \psi_{E_T}(\phi(v))$ for all $v \in \mathrm{dom}(\phi)$. The \textbf{position-level overlap} for analogy set $\mathcal{A}$ is $\Omega_{\mathrm{pos}}(\mathcal{A}, E_T) = |\bigcup_k \psi_{E_T}(\mathrm{im}(\phi_k))| \,/\, |\mathcal{S}_T|$, measured on the target side.
\end{definition}

\begin{definition}[Partial Observation and Hidden Factors]
\label{def-appdx:hidden}
At cutoff $t_c$, the analyst observes $V^{\mathrm{obs}} \subseteq V_{E_T}$ and
\[
\mathcal{S}_T^{\mathrm{obs}} = \psi_{E_T}(V^{\mathrm{obs}}).
\]
The true active position set is
\[
\mathcal{S}_T^{\mathrm{true}}=\psi_{E_T}(V_{E_T}).
\]
Hidden factors are
\[
\mathcal{V}_T^{\mathrm{hid}} = V_{E_T} \setminus V^{\mathrm{obs}},
\]
and hidden positions are
\[
\mathcal{S}_T^{\mathrm{hid}}
=
\mathcal{S}_T^{\mathrm{true}}
\setminus
\mathcal{S}_T^{\mathrm{obs}}.
\]
\end{definition}

\begin{definition}[ADR Problem]
\label{def-appdx:adr}
Given target $E_T$ at cutoff $t_c$ with partial observation, and a universe $\mathcal{U}$ of historical events with fully observed representations, the ADR problem is to select $\mathcal{A}^* \subseteq \mathcal{U}$ and produce foresight report $\mathcal{F} = (\hat{\mathcal{V}}^{\mathrm{hid}}, \hat{Y}_T, \mathcal{C}, \mathcal{K})$ maximizing:
\begin{equation}
\mathcal{A}^* = \arg\max_{|\mathcal{A}| \leq K_{\max}} \; \mathrm{Qual}(\mathcal{F}(\mathcal{A})) - \mathrm{Qual}(\mathcal{F}(\emptyset))
\end{equation}
where $\mathrm{Qual}(\mathcal{F}) = \lambda_{\mathrm{cov}} \cdot \mathrm{Cov}_{\mathrm{eval}} + \lambda_{\mathrm{prec}} \cdot \mathrm{Prec}_{\mathrm{eval}} + \lambda_{\mathrm{depth}} \cdot \mathrm{Depth}_{\mathrm{eval}}$.
\end{definition}

\begin{definition}[Surface Observation and Foresight Risk]
\label{def-appdx:surface}
A \textbf{possible world} $w = (E_T, \mathcal{U}, \mathfrak{P}, \{\psi_E\}, t_c, \chi)$ specifies the target, analogy universe, pattern, instantiations, cutoff, and context. 

For two-world comparisons, we evaluate positions on a common finite candidate universe
$\mathcal{S}_\star \subseteq \mathcal{S}_{\mathfrak P}$. For each $s\in\mathcal{S}_\star$, define
\[
Z_s(w)=\mathbbm{1}[s\in\mathcal{S}_T^{\mathrm{true}}(w)].
\]
If $s$ is inactive in world $w$, $P_s^w$ denotes the corresponding null or baseline trajectory distribution.

The \textbf{full surface observation} is
\[
\mathcal{O}_D^{\infty}(w)
=
\left(
D(E_T),
\{D(E_k)\}_{E_k \in \mathcal{U}},
\{\mathrm{sim}(D(E_k), D(E_T))\}_{E_k\in\mathcal U}
\right).
\]
A \textbf{surface-level method} is any function of $\mathcal{O}_D^{\infty}(w)$.

The \textbf{full mechanism-level observation} is
\[
\mathcal{O}_M^\infty(w)
=
\left(
M^{\mathrm{obs}}(E_T),
\{M(E_k),\psi_{E_k},\phi_k\}_{E_k\in\mathcal U}
\right).
\]
A \textbf{mechanism-level method} is any function of $\mathcal{O}_M^\infty(w)$.

For a structural position $s$, let $\tau_s$ denote the trajectory induced by the target factor(s)
$v\in V_{E_T}$ with $\psi_{E_T}(v)=s$, or a fixed aggregation of those trajectories when multiple factors map to $s$.
The \textbf{foresight target} is
\[
Y_F = \Gamma(\{\tau_s(t) : s \in \mathcal{S}_\star, t > t_c\})
\]
with true distribution $P_F^w$ and forecast $\widehat{P}_F^{\mathcal{A}}$.

The \textbf{position-wise foresight risk} is:
\[
\mathcal{R}(\mathcal{A}, w)
=
\sum_{s \in \mathcal{S}_\star}
\mu_s \cdot \mathrm{TV}(\widehat{P}_s^{\mathcal{A}}, P_s^w),
\]
where $\mu_s \geq 0$, $\sum_s \mu_s = 1$.
\end{definition}

\subsection{Main Results}

We state one assumption and two theorems. Theorem~\ref{thm-appdx:nonid} is an information-theoretic impossibility result for surface matching; Theorem~\ref{thm-appdx:recovery} quantifies the benefit of cross-analogy mechanism matching. For readability, we state the results for deterministic methods. Randomized methods satisfy the same bounds in expectation over their internal randomness.

\begin{assumption}[Mechanism Transfer]
\label{ass-appdx:transfer}
If factors $v_S \in V_{E_S}$ and $v_T \in V_{E_T}$ occupy the same structural position
($\psi_{E_S}(v_S) = \psi_{E_T}(v_T) = s$), both are active at their respective cutoffs,
and $E_S$ has progressed beyond $E_T$'s current stage, then:
\[
\mathrm{TV}(\widehat{P}_s^{E_S}, P_s^T) \leq \alpha_s^{\mathrm{tr}}
\]
for some position-specific transfer-error bound $\alpha_s^{\mathrm{tr}} > 0$.
\end{assumption}

\begin{theorem}[Surface Non-Identifiability]
\label{thm-appdx:nonid}
Suppose there exist two admissible worlds $w^0, w^1$ such that $\mathcal{O}_D^{\infty}(w^0) = \mathcal{O}_D^{\infty}(w^1)$ (identical surface observations), but for a hidden position $s$: $Z_s(w^0) \neq Z_s(w^1)$, where $Z_s(w) = \mathbbm{1}[s \in \mathcal{S}_T^{\mathrm{true}}(w)]$. Then no surface-level method can identify whether $s$ is active:
\begin{equation}
\inf_{\hat{Z}_s = f(\mathcal{O}_D^{\infty})} \max_{j \in \{0,1\}} \Pr_{w^j}(\hat{Z}_s \neq Z_s(w^j)) \geq \tfrac{1}{2}
\end{equation}
If the two worlds imply separated foresight distributions, $\mathrm{TV}(P_F^{w^0}, P_F^{w^1}) \geq 2\Delta_F$, then:
\begin{equation}
\inf_{\widehat{P}_F = f(\mathcal{O}_D^{\infty})} \max_{j \in \{0,1\}} \mathrm{TV}(\widehat{P}_F, P_F^{w^j}) \geq \Delta_F
\end{equation}
This holds for all $K$ including $K = \infty$: adding more surface-matched analogies cannot eliminate foresight risk caused by $D$-unidentifiable hidden mechanisms.
\end{theorem}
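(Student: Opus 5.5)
The argument is a two-point (Le Cam–style) indistinguishability argument. Because $\mathcal{O}_D^{\infty}(w^0)=\mathcal{O}_D^{\infty}(w^1)$, any surface-level method $f$ returns the same output in both worlds. We work with deterministic $f$, as announced before Theorem~\ref{thm-appdx:nonid}; randomized methods are handled by averaging over their internal randomness. Write $\hat Z_s^\star := f(\mathcal{O}_D^{\infty}(w^0)) = f(\mathcal{O}_D^{\infty}(w^1))$ and $\widehat P_F^\star$ for the common forecast.

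\emph{Identification claim.} Since $Z_s(w^0)\neq Z_s(w^1)$ and both are $\{0,1\}$-valued, exactly one of $\hat Z_s^\star \neq Z_s(w^0)$ and $\hat Z_s^\star \neq Z_s(w^1)$ holds. For deterministic $f$ the error probabilities are therefore $\{0,1\}$-valued and sum to $1$, so their maximum is $1 \ge \tfrac12$.

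For a randomized $f$ with internal randomness $\xi$, the law of $\hat Z_s$ is the same under $w^0$ and $w^1$. Hence
\[
\Pr_{w^0}(\hat Z_s\neq Z_s(w^0))+\Pr_{w^1}(\hat Z_s\neq Z_s(w^1)) = \Pr(\hat Z_s\neq 0)+\Pr(\hat Z_s\neq 1)\ge 1 .
\]
The maximum of two numbers summing to at least $1$ is at least $\tfrac12$. Taking the infimum over $f$ gives the first display.

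\emph{Forecast claim.} $\mathrm{TV}$ is a metric on distributions, so the triangle inequality gives
\[
2\Delta_F \le \mathrm{TV}(P_F^{w^0},P_F^{w^1}) \le \mathrm{TV}(\widehat P_F^\star,P_F^{w^0})+\mathrm{TV}(\widehat P_F^\star,P_F^{w^1}) \le 2\max_j \mathrm{TV}(\widehat P_F^\star,P_F^{w^j}).
\]
This yields the bound $\Delta_F$ for every $f$, and hence for the infimum. For randomized forecasts, either apply the same bound to the mixture forecast, or apply it pointwise in $\xi$ and take expectations, using convexity of $\mathrm{TV}$.

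\emph{All $K$.} Any analogy set $\mathcal{A}$ selected from $\mathcal U$ by surface similarity, of any size including $K=\infty$, is itself a function of $\mathcal{O}_D^{\infty}$. The same holds for the resulting forecast $\widehat P_F^{\mathcal A}$, since $\mathcal{O}_D^{\infty}$ already contains all of $\{D(E_k)\}_{E_k\in\mathcal U}$ and all similarity scores. Such pipelines are therefore covered by the infimum, and the bound is independent of $K$.

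The main obstacle is definitional rather than technical. We must make precise that a "surface-level method", and in particular any retrieval-plus-aggregation pipeline, is exactly a measurable function of $\mathcal{O}_D^{\infty}(w)$, with no hidden dependence on $w$ through $M(E)$ or $\psi_E$. Once that is fixed, both bounds are two-line consequences of output equality and the triangle inequality.
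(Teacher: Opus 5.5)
Your proposal is correct and takes essentially the same route as the paper. Identical surface observations force the estimator to have the same output law in both worlds, so the two error probabilities sum to at least one (the paper writes them as $r$ and $1-r$). The forecast bound then follows from the triangle inequality for $\mathrm{TV}$ applied pointwise in the internal randomness and averaged, and the all-$K$ claim holds because any surface-matched retrieval pipeline is a function of $\mathcal{O}_D^{\infty}$.
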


\begin{proof}
Let a possibly randomized surface-level estimator be written as
\[
\hat{Z}_s=\varphi(\mathcal{O}_D^\infty,R),
\]
where $R$ denotes internal randomness independent of the world. Since
\[
\mathcal{O}_D^\infty(w^0)=\mathcal{O}_D^\infty(w^1),
\]
the distribution of $\hat{Z}_s$ is the same under $w^0$ and $w^1$.

Without loss of generality, suppose
\[
Z_s(w^0)=0,
\qquad
Z_s(w^1)=1.
\]
Let
\[
r=\Pr(\hat{Z}_s=1).
\]
Because the estimator has the same output distribution in both worlds, its error probability in world $w^0$ is $r$, while its error probability in world $w^1$ is $1-r$. Hence
\[
\max_{j\in\{0,1\}}
\Pr_{w^j}(\hat{Z}_s\neq Z_s(w^j))
=
\max\{r,1-r\}
\geq
\frac12.
\]
Taking the infimum over all surface-level estimators gives
\[
\inf_{\hat{Z}_s=f(\mathcal{O}_D^\infty)}
\max_{j\in\{0,1\}}
\Pr_{w^j}(\hat{Z}_s\neq Z_s(w^j))
\geq
\frac12.
\]

For the foresight part, let a possibly randomized forecast be
\[
\widehat{P}_F=\Phi(\mathcal{O}_D^\infty,R).
\]
Again, because the full surface observation is identical in $w^0$ and $w^1$, the forecast has the same distribution under both worlds. For any realization of $R$, the triangle inequality for total variation gives
\[
\mathrm{TV}(P_F^{w^0},P_F^{w^1})
\leq
\mathrm{TV}(P_F^{w^0},\widehat{P}_F)
+
\mathrm{TV}(\widehat{P}_F,P_F^{w^1}).
\]
Taking expectation over $R$ if the method is randomized, we obtain
\[
\mathbb{E}_R\mathrm{TV}(P_F^{w^0},\widehat{P}_F)
+
\mathbb{E}_R\mathrm{TV}(\widehat{P}_F,P_F^{w^1})
\geq
\mathrm{TV}(P_F^{w^0},P_F^{w^1}).
\]
Therefore,
\[
\max_{j\in\{0,1\}}
\mathbb{E}_R\mathrm{TV}(\widehat{P}_F,P_F^{w^j})
\geq
\frac12
\mathrm{TV}(P_F^{w^0},P_F^{w^1}).
\]
Using the assumed separation
\[
\mathrm{TV}(P_F^{w^0},P_F^{w^1})\geq 2\Delta_F,
\]
we get
\[
\max_{j\in\{0,1\}}
\mathbb{E}_R\mathrm{TV}(\widehat{P}_F,P_F^{w^j})
\geq
\Delta_F.
\]
For deterministic forecasts, the expectations over $R$ disappear.

Finally, a finite-budget surface matcher with any budget $K$ is only a measurable function of $\mathcal{O}_D^\infty$. Since even the full surface observation is identical in the two worlds, no finite subset of surface-matched analogies can distinguish them. Thus the lower bound holds for all finite $K$, and also when $K=\infty$.
\end{proof}

\begin{lemma}[Position-wise Surface Lower Bound]
\label{lem-appdx:position-lower}
Let $H \subseteq \mathcal{S}_\star$ be a set of candidate hidden positions. Suppose there exist two admissible worlds $w^0,w^1$ such that
\[
\mathcal{O}_D^\infty(w^0)=\mathcal{O}_D^\infty(w^1),
\]
and for every $s\in H$,
\[
\mathrm{TV}(P_s^{w^0},P_s^{w^1})\geq 2\Delta_s.
\]
Define the restricted position-wise risk
\[
\mathcal{R}_H(\mathcal{A},w)
=
\sum_{s\in H}
\mu_s
\mathrm{TV}(\widehat{P}_s^{\mathcal{A}},P_s^w).
\]
Then every surface-level method satisfies
\[
\inf_{\mathcal{A}_D}
\max_{j\in\{0,1\}}
\mathcal{R}_H(\mathcal{A}_D,w^j)
\geq
\sum_{s\in H}
\mu_s\Delta_s.
\]
\end{lemma}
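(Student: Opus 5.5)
The proof runs the foresight half of Theorem~\ref{thm-appdx:nonid} once for each position in $H$ and then sums with the weights $\mu_s$. Fix any surface-level method $\mathcal{A}_D$, possibly randomized with internal randomness $R$ independent of the world. Its position-wise forecasts $\{\widehat{P}_s^{\mathcal{A}_D}\}_{s\in H}$ are measurable functions of $(\mathcal{O}_D^\infty,R)$. Since $\mathcal{O}_D^\infty(w^0)=\mathcal{O}_D^\infty(w^1)$, the joint law of these forecasts is the same under $w^0$ and $w^1$. In particular, for each realization of $R$ the method outputs one and the same collection of forecasts in both worlds.

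\textbf{Position-wise bound.} For each $s\in H$ and each realization of $R$, the triangle inequality for total variation gives
\[
2\Delta_s\le \mathrm{TV}(P_s^{w^0},P_s^{w^1})\le \mathrm{TV}(\widehat{P}_s,P_s^{w^0})+\mathrm{TV}(\widehat{P}_s,P_s^{w^1}).
\]
Multiply by $\mu_s\ge 0$ and sum over $s\in H$. This yields $\mathcal{R}_H(\mathcal{A}_D,w^0)+\mathcal{R}_H(\mathcal{A}_D,w^1)\ge 2\sum_{s\in H}\mu_s\Delta_s$. Taking expectations over $R$ preserves the inequality. Since the maximum of two numbers is at least their average, we get $\max_j \mathcal{R}_H(\mathcal{A}_D,w^j)\ge\sum_{s\in H}\mu_s\Delta_s$. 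The bound holds for every surface-level method, so it also holds for the infimum.

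\textbf{Main obstacle.} The only subtle point is the order of operations. We must not bound $\max_j$ separately for each position. Doing so would only give $\sum_s \mu_s\max_j(\cdot)\ge\sum_s\mu_s\Delta_s$, and in general $\sum_s\max_j$ exceeds $\max_j\sum_s$, so that inequality points the wrong way. Summing over positions first and only then passing from the sum of the two worlds' risks to the maximum avoids this. It works because the two worlds are fixed in advance and shared across all $s\in H$. This is exactly where we use that the same pair $w^0,w^1$ separates every $s\in H$.

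As in Theorem~\ref{thm-appdx:nonid}, we should also note that an analogy set of any budget $K$, including $K=\infty$, chosen by surface matching is itself a function of $\mathcal{O}_D^\infty$. So the bound is uniform in $K$.
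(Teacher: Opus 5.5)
Your proof is correct and follows the paper's argument. Identical surface observations force the same forecast $\widehat{P}_s$ in both worlds, the triangle inequality gives $\mathrm{TV}(\widehat{P}_s,P_s^{w^0})+\mathrm{TV}(\widehat{P}_s,P_s^{w^1})\ge 2\Delta_s$ at each position, and you sum with weights $\mu_s$ before passing from the average to the maximum; your explicit handling of internal randomness and your remark on that ordering are sound refinements the paper leaves implicit.
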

\begin{proof}
Since $\mathcal{O}_D^\infty(w^0)=\mathcal{O}_D^\infty(w^1)$, any surface-level method produces the same position-wise forecast in both worlds. Hence, for every $s\in H$,
\[
\widehat{P}_s^{\mathcal{A}_D}(w^0)
=
\widehat{P}_s^{\mathcal{A}_D}(w^1)
=
\widehat{P}_s^{\mathcal{A}_D}.
\]

By the triangle inequality for total variation,
\[
\mathrm{TV}(P_s^{w^0},P_s^{w^1})
\leq
\mathrm{TV}(P_s^{w^0},\widehat{P}_s^{\mathcal{A}_D})
+
\mathrm{TV}(\widehat{P}_s^{\mathcal{A}_D},P_s^{w^1}).
\]

Using the assumed separation
\[
\mathrm{TV}(P_s^{w^0},P_s^{w^1})\geq 2\Delta_s,
\]
we obtain
\[
\mathrm{TV}(\widehat{P}_s^{\mathcal{A}_D},P_s^{w^0})
+
\mathrm{TV}(\widehat{P}_s^{\mathcal{A}_D},P_s^{w^1})
\geq
2\Delta_s.
\]

Multiplying by $\mu_s$ and summing over $s\in H$ gives
\[
\mathcal{R}_H(\mathcal{A}_D,w^0)
+
\mathcal{R}_H(\mathcal{A}_D,w^1)
\geq
2
\sum_{s\in H}
\mu_s\Delta_s.
\]

Therefore,
\[
\max_{j\in\{0,1\}}
\mathcal{R}_H(\mathcal{A}_D,w^j)
\geq
\frac{1}{2}
\left[
\mathcal{R}_H(\mathcal{A}_D,w^0)
+
\mathcal{R}_H(\mathcal{A}_D,w^1)
\right]
\geq
\sum_{s\in H}
\mu_s\Delta_s.
\]

Taking the infimum over all surface-level methods yields
\[
\inf_{\mathcal{A}_D}
\max_{j\in\{0,1\}}
\mathcal{R}_H(\mathcal{A}_D,w^j)
\geq
\sum_{s\in H}
\mu_s\Delta_s.
\]
\end{proof}

\begin{corollary}[Mechanism Matching Breaks the Lower Bound]
\label{cor-appdx:mechanism}
Let $H\subseteq\mathcal{S}_\star$ be a set of hidden candidate positions satisfying the assumptions of Lemma~\ref{lem-appdx:position-lower}. Suppose further that mechanism-level evidence distinguishes the two worlds:
\[
\mathcal{O}_M^\infty(w^0)\neq \mathcal{O}_M^\infty(w^1),
\]
and that mechanism matching retrieves analogies covering every $s\in H$ with transfer error
\[
\mathrm{TV}(\widehat{P}_s^{\mathcal{A}_M},P_s^{w^j})\leq \alpha_s^{\mathrm{tr}}
\]
for all $s\in H$ and $j\in\{0,1\}$. Then
\[
\max_j \mathcal{R}_H(\mathcal{A}_M,w^j)
\leq
\sum_{s\in H}\mu_s\alpha_s^{\mathrm{tr}}.
\]
If
\[
\sum_{s\in H}\mu_s\alpha_s^{\mathrm{tr}}
<
\sum_{s\in H}\mu_s\Delta_s,
\]
then mechanism matching strictly dominates every surface method on hidden positions:
\[
\inf_{\mathcal{A}_M}
\max_j
\mathcal{R}_H(\mathcal{A}_M,w^j)
<
\inf_{\mathcal{A}_D}
\max_j
\mathcal{R}_H(\mathcal{A}_D,w^j).
\]
\end{corollary}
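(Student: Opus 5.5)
The corollary has two parts: an upper bound on the risk of the mechanism-matched analogy set, and a strict comparison with every surface method. I will prove them in that order, reusing Lemma~\ref{lem-appdx:position-lower} for the surface side.

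\textbf{Upper bound.} Fix $j\in\{0,1\}$. By the definition of the restricted risk,
\[
\mathcal{R}_H(\mathcal{A}_M,w^j)=\sum_{s\in H}\mu_s\,\mathrm{TV}(\widehat{P}_s^{\mathcal{A}_M},P_s^{w^j}).
\]
The hypothesis bounds each summand by $\mu_s\alpha_s^{\mathrm{tr}}$; this bound is the one Assumption~\ref{ass-appdx:transfer} supplies when the retrieved analogy and the target share position $s$. Because $\mu_s\ge 0$, summing over $s\in H$ preserves the inequality. The resulting bound $\sum_{s\in H}\mu_s\alpha_s^{\mathrm{tr}}$ does not depend on $j$, so it also bounds the maximum over $j$.

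\textbf{Surface lower bound.} The worlds $w^0,w^1$ satisfy the hypotheses of Lemma~\ref{lem-appdx:position-lower}: they have identical surface observations and, for each $s\in H$, position-wise separation $\mathrm{TV}(P_s^{w^0},P_s^{w^1})\ge 2\Delta_s$. The lemma therefore gives
\[
\inf_{\mathcal{A}_D}\max_j\mathcal{R}_H(\mathcal{A}_D,w^j)\ \ge\ \sum_{s\in H}\mu_s\Delta_s .
\]

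\textbf{Strict comparison.} The infimum over mechanism-level methods is at most the value attained by the particular $\mathcal{A}_M$ in the hypothesis. Chaining the inequalities gives
\[
\inf_{\mathcal{A}_M}\max_j\mathcal{R}_H \;\le\; \sum_{s\in H}\mu_s\alpha_s^{\mathrm{tr}} \;<\; \sum_{s\in H}\mu_s\Delta_s \;\le\; \inf_{\mathcal{A}_D}\max_j\mathcal{R}_H ,
\]
where the middle strict inequality is the corollary's assumption.

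\textbf{Main obstacle.} The only delicate point is one of consistency rather than computation. Lemma~\ref{lem-appdx:position-lower} forces a surface method to output the same $\widehat P_s$ in both worlds. A mechanism method, by contrast, may output different forecasts in $w^0$ and $w^1$, since it is a function of $\mathcal{O}_M^\infty(w^j)$. This is legitimate exactly because $\mathcal{O}_M^\infty(w^0)\neq\mathcal{O}_M^\infty(w^1)$ is assumed; otherwise the same triangle-inequality argument would force $\alpha_s^{\mathrm{tr}}\ge\Delta_s$, contradicting the strict inequality. I will note this explicitly, and treat the transfer-error bound for both $j$ as a given hypothesis, not something derived here.
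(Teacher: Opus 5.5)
Your proof is correct and follows the paper's argument: you sum the per-position transfer bounds using $\mu_s\ge 0$ to get a $j$-independent upper bound, invoke Lemma~\ref{lem-appdx:position-lower} for the surface lower bound, and chain the two through the assumed strict inequality. Your added remark is also correct, and the paper leaves it implicit, since its proof never uses the assumption $\mathcal{O}_M^\infty(w^0)\neq\mathcal{O}_M^\infty(w^1)$: that assumption is what keeps the hypotheses from being jointly vacuous, because a world-invariant forecast would force $\alpha_s^{\mathrm{tr}}\ge\Delta_s$ for every $s\in H$ by the same triangle inequality.
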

\begin{proof}
By the mechanism-transfer condition, for every $s\in H$ and every $j\in\{0,1\}$,
\[
\mathrm{TV}(\widehat{P}_s^{\mathcal{A}_M},P_s^{w^j})
\leq
\alpha_s^{\mathrm{tr}}.
\]
Therefore,
\[
\mathcal{R}_H(\mathcal{A}_M,w^j)
=
\sum_{s\in H}
\mu_s
\mathrm{TV}(\widehat{P}_s^{\mathcal{A}_M},P_s^{w^j})
\leq
\sum_{s\in H}
\mu_s
\alpha_s^{\mathrm{tr}}.
\]
Taking the maximum over $j$ preserves the same upper bound:
\[
\max_j
\mathcal{R}_H(\mathcal{A}_M,w^j)
\leq
\sum_{s\in H}
\mu_s
\alpha_s^{\mathrm{tr}}.
\]

By Lemma~\ref{lem-appdx:position-lower}, every surface-level method satisfies
\[
\inf_{\mathcal{A}_D}
\max_j
\mathcal{R}_H(\mathcal{A}_D,w^j)
\geq
\sum_{s\in H}
\mu_s
\Delta_s.
\]
If
\[
\sum_{s\in H}\mu_s\alpha_s^{\mathrm{tr}}
<
\sum_{s\in H}\mu_s\Delta_s,
\]
then the mechanism-level upper bound lies strictly below the surface-level lower bound. Hence there exists a mechanism-level method whose worst-case hidden-position risk is strictly smaller than the worst-case risk of every surface-level method.
\end{proof}
\begin{theorem}[Cross-Analogy Confirmation]
\label{thm-appdx:recovery}
Let $Z_s = \mathbbm{1}[s \in \mathcal{S}_T^{\mathrm{true}}]$ and
$X_{k,s} = \mathbbm{1}[\text{analogy } E_k \text{ confirms position } s]$.
Let $\pi_s=P(Z_s=1)$ and $X_{\mathcal A,s}=\{X_{k,s}:E_k\in\mathcal A\}$.

Assume $X_{k,s} \mid Z_s$ are conditionally independent with
\[
q_{k,s}=P(X_{k,s}=1 \mid Z_s=1,E_k\in\mathcal A),
\qquad
p_{k,s}=P(X_{k,s}=1 \mid Z_s=0,E_k\in\mathcal A),
\]
where $q_{k,s} > p_{k,s}$. The probabilities are understood conditional on the selection rule that produced $\mathcal A$.

Then:
\[
\log \frac{P(Z_s = 1 \mid X_{\mathcal{A},s})}{P(Z_s = 0 \mid X_{\mathcal{A},s})}
=
\log\frac{\pi_s}{1-\pi_s}
+
\sum_{E_k \in \mathcal{A}}
\left[
X_{k,s} \log\frac{q_{k,s}}{p_{k,s}}
+
(1-X_{k,s}) \log\frac{1-q_{k,s}}{1-p_{k,s}}
\right].
\]
The Bayes factor grows multiplicatively: each independent confirming analogy multiplies odds by $q_{k,s}/p_{k,s}$.

In the homogeneous positive-confirmation case, $q_{k,s}=q_s$, $p_{k,s}=p_s$, and $X_{k,s}=1$ for all $E_k\in\mathcal A$, the number of confirmations needed to reach posterior confidence $1-\delta$ is
\[
n_s^*
=
\left\lceil
\frac{
\log\frac{1-\delta}{\delta}
-
\log\frac{\pi_s}{1-\pi_s}
}{
\log\frac{q_s}{p_s}
}
\right\rceil.
\]
Under the calibrated regime $(\pi_s = 0.5, q_s = 1, p_s = 0.2)$, this gives $n_s^*=2$ for $\delta=0.05$.

For foresight, letting
\[
\delta_s=P(Z_s=0\mid X_{\mathcal A,s}),
\]
the posterior expected risk at position $s$ satisfies
\[
\mathbb{E}[\mathrm{TV}(\widehat{P}_s^{\mathcal{A}_M}, P_s^T) \mid X_{\mathcal{A},s}]
\leq
(1 - \delta_s)\alpha_s^{\mathrm{tr}} + \delta_s.
\]
\end{theorem}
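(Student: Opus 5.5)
Three claims need proof: the log-odds identity, the threshold $n_s^*$, and the posterior risk bound. The proof is essentially Bayes' rule plus conditional independence.

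\textbf{Log-odds identity.} Write the posterior odds as
\[
\frac{P(Z_s=1\mid X_{\mathcal A,s})}{P(Z_s=0\mid X_{\mathcal A,s})}=\frac{\pi_s}{1-\pi_s}\cdot\frac{P(X_{\mathcal A,s}\mid Z_s=1)}{P(X_{\mathcal A,s}\mid Z_s=0)}.
\]
The normalizing constant $P(X_{\mathcal A,s})$ cancels in the ratio. Because we condition on the selection rule that produced $\mathcal A$, $\mathcal A$ can be treated as fixed.

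Conditional independence given $Z_s$ factorizes each likelihood into a product over $E_k\in\mathcal A$. Each Bernoulli factor can be written as $q_{k,s}^{X_{k,s}}(1-q_{k,s})^{1-X_{k,s}}$, and similarly with $p_{k,s}$. Taking logarithms yields the displayed sum directly. Under $Z_s=0$, a confirming observation has probability $p_{k,s}$, so an additional $X_{k,s}=1$ multiplies the odds by $q_{k,s}/p_{k,s}>1$. This is the multiplicative Bayes factor claim.

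One caveat: if $p_{k,s}=0$, or if $q_{k,s}=1$ and $X_{k,s}=0$, a logarithm becomes infinite. I will either assume $0<p_{k,s}$ or interpret the identity in the extended reals. The calibrated example has $q_s=1$, but there every $X_{k,s}=1$, so the $(1-q)$ term never appears.

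\textbf{Threshold $n_s^*$.} In the homogeneous all-confirming case, the log-odds after $n$ confirmations equals $\log\frac{\pi_s}{1-\pi_s}+n\log\frac{q_s}{p_s}$. The condition that the posterior $P(Z_s=1\mid\cdot)$ is at least $1-\delta$ is equivalent to the posterior odds being at least $(1-\delta)/\delta$. Since $\log(q_s/p_s)>0$, solving the linear inequality in $n$ and taking the smallest integer gives the ceiling formula.

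For the calibrated regime, $\log(19)/\log(5)\approx 1.83$, so $n_s^*=2$.

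\textbf{Risk bound.} Decompose the conditional expectation on the events $Z_s=1$ and $Z_s=0$, with posterior weights $1-\delta_s$ and $\delta_s$. On $Z_s=1$, position $s$ is genuinely active in the target and is occupied by a confirmed analogy. Assumption~\ref{ass-appdx:transfer} then bounds the TV by $\alpha_s^{\mathrm{tr}}$. On $Z_s=0$, we use only the trivial bound $\mathrm{TV}\le 1$. Summing the two contributions gives $(1-\delta_s)\alpha_s^{\mathrm{tr}}+\delta_s$.

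\textbf{Main obstacle.} The delicate step is justifying that the premises of Assumption~\ref{ass-appdx:transfer} hold on the event $Z_s=1$. Those premises are the same structural position, both factors active, and the source having progressed further. I will make this explicit by reading a confirmation $X_{k,s}=1$ together with $Z_s=1$ as meaning that some $E_k\in\mathcal A$ supplies a factor $v$ with $\psi_{E_k}(v)=s$ that is active and has progressed beyond the target's current stage. I will also take $\widehat P_s^{\mathcal A_M}$ to be the forecast transferred from such an $E_k$, or a mixture of such forecasts; convexity of TV keeps the bound $\alpha_s^{\mathrm{tr}}$ for a mixture.

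Everything else is routine algebra.
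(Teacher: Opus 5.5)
Your proposal is correct and matches the paper's proof step for step. The log-odds identity comes from writing the posterior odds as prior odds times a factorized Bernoulli likelihood ratio. The threshold comes from solving $\frac{\pi_s}{1-\pi_s}(q_s/p_s)^n\ge\frac{1-\delta}{\delta}$ and taking the ceiling; for the calibrated case the paper checks $5<19<25$ directly. The risk bound comes from splitting the conditional expectation over $Z_s$, using the transfer bound $\alpha_s^{\mathrm{tr}}$ on one branch and the trivial bound $\mathrm{TV}\le 1$ on the other.

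Your extra care fills in points the paper leaves implicit: the $\log 0$ conventions, the explicit reading of when Assumption~\ref{ass-appdx:transfer} applies on $\{Z_s=1\}$, and convexity of TV for mixed forecasts. Note that this reading needs at least one $X_{k,s}=1$ to supply a transferring source. The paper's proof tacitly assumes the same coverage.
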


\begin{proof}
Let
\[
X_{\mathcal{A},s}=\{X_{k,s}:E_k\in\mathcal{A}\}.
\]
For a fixed realization $x_{\mathcal{A},s}$, define the likelihood under the structural hypothesis $Z_s=1$:
\[
L_1(x_{\mathcal{A},s})
=
P(X_{\mathcal{A},s}=x_{\mathcal{A},s}\mid Z_s=1).
\]
By conditional independence,
\[
L_1(x_{\mathcal{A},s})
=
\prod_{E_k\in\mathcal{A}}
q_{k,s}^{x_{k,s}}
(1-q_{k,s})^{1-x_{k,s}}.
\]
Similarly, under the coincidence hypothesis $Z_s=0$,
\[
L_0(x_{\mathcal{A},s})
=
P(X_{\mathcal{A},s}=x_{\mathcal{A},s}\mid Z_s=0)
=
\prod_{E_k\in\mathcal{A}}
p_{k,s}^{x_{k,s}}
(1-p_{k,s})^{1-x_{k,s}}.
\]

By Bayes' theorem,
\[
P(Z_s=1\mid X_{\mathcal{A},s})
=
\frac{
\pi_s L_1(X_{\mathcal{A},s})
}{
\pi_s L_1(X_{\mathcal{A},s})
+
(1-\pi_s)L_0(X_{\mathcal{A},s})
}.
\]
Likewise,
\[
P(Z_s=0\mid X_{\mathcal{A},s})
=
\frac{
(1-\pi_s) L_0(X_{\mathcal{A},s})
}{
\pi_s L_1(X_{\mathcal{A},s})
+
(1-\pi_s)L_0(X_{\mathcal{A},s})
}.
\]
Therefore the posterior odds are
\[
\frac{
P(Z_s=1\mid X_{\mathcal{A},s})
}{
P(Z_s=0\mid X_{\mathcal{A},s})
}
=
\frac{\pi_s}{1-\pi_s}
\cdot
\frac{L_1(X_{\mathcal{A},s})}{L_0(X_{\mathcal{A},s})}.
\]
The likelihood ratio is
\[
\frac{L_1(X_{\mathcal{A},s})}{L_0(X_{\mathcal{A},s})}
=
\prod_{E_k\in\mathcal{A}}
\left(\frac{q_{k,s}}{p_{k,s}}\right)^{X_{k,s}}
\left(\frac{1-q_{k,s}}{1-p_{k,s}}\right)^{1-X_{k,s}}.
\]
Taking logarithms gives
\[
\log \frac{P(Z_s = 1 \mid X_{\mathcal{A},s})}{P(Z_s = 0 \mid X_{\mathcal{A},s})}
=
\log\frac{\pi_s}{1-\pi_s}
+
\sum_{E_k\in \mathcal{A}}
\left[
X_{k,s}\log\frac{q_{k,s}}{p_{k,s}}
+
(1-X_{k,s})\log\frac{1-q_{k,s}}{1-p_{k,s}}
\right].
\]

If $X_{k,s}=1$ for every $E_k\in\mathcal{A}$, then the Bayes factor reduces to
\[
BF_s(\mathcal{A})
=
\prod_{E_k\in\mathcal{A}}
\frac{q_{k,s}}{p_{k,s}}.
\]
Thus every independent confirming analogy multiplies the posterior odds by $q_{k,s}/p_{k,s}$.

In the homogeneous case $q_{k,s}=q_s$ and $p_{k,s}=p_s$, with $K$ positive confirmations,
\[
BF_s(K)
=
\left(\frac{q_s}{p_s}\right)^K.
\]
The posterior odds are
\[
\frac{
P(Z_s=1\mid K\text{ positives})
}{
P(Z_s=0\mid K\text{ positives})
}
=
\frac{\pi_s}{1-\pi_s}
\left(\frac{q_s}{p_s}\right)^K.
\]
To achieve posterior confidence at least $1-\delta$, we require
\[
\frac{\pi_s}{1-\pi_s}
\left(\frac{q_s}{p_s}\right)^K
\geq
\frac{1-\delta}{\delta}.
\]
Solving for $K$ yields
\[
K
\geq
\frac{
\log\frac{1-\delta}{\delta}
-
\log\frac{\pi_s}{1-\pi_s}
}{
\log\frac{q_s}{p_s}
}.
\]
Hence the minimum integer number of confirmations is
\[
n_s^*
=
\left\lceil
\frac{
\log\frac{1-\delta}{\delta}
-
\log\frac{\pi_s}{1-\pi_s}
}{
\log\frac{q_s}{p_s}
}
\right\rceil.
\]
For the calibrated regime
\[
\pi_s=0.5,\qquad q_s=1,\qquad p_s=0.2,\qquad \delta=0.05,
\]
we have prior odds equal to $1$, and the required posterior odds are
\[
\frac{1-\delta}{\delta}
=
\frac{0.95}{0.05}
=
19.
\]
Since
\[
\frac{q_s}{p_s}=5,
\]
one confirmation gives odds $5<19$, while two confirmations give odds $25>19$. Therefore
\[
n_s^*=2.
\]

For the foresight-risk bound, define
\[
\delta_s
=
P(Z_s=0\mid X_{\mathcal{A},s}).
\]
With posterior probability $1-\delta_s$, position $s$ is truly structural. Under this event, the mechanism-transfer assumption gives
\[
\mathrm{TV}(\widehat{P}_s^{\mathcal{A}_M},P_s^T)
\leq
\alpha_s^{\mathrm{tr}}.
\]
With posterior probability $\delta_s$, the inferred position is coincidental or otherwise not truly structural. In that case we use the trivial upper bound
\[
\mathrm{TV}(\widehat{P}_s^{\mathcal{A}_M},P_s^T)
\leq
1.
\]
Therefore,
\[
\mathbb{E}
\left[
\mathrm{TV}(\widehat{P}_s^{\mathcal{A}_M},P_s^T)
\mid X_{\mathcal{A},s}
\right]
\leq
(1-\delta_s)\alpha_s^{\mathrm{tr}}
+
\delta_s\cdot 1.
\]
Thus,
\[
\mathbb{E}
\left[
\mathrm{TV}(\widehat{P}_s^{\mathcal{A}_M},P_s^T)
\mid X_{\mathcal{A},s}
\right]
\leq
(1-\delta_s)\alpha_s^{\mathrm{tr}}
+
\delta_s.
\]
This separates the two sources of foresight error: identification error $\delta_s$ and transfer error $\alpha_s^{\mathrm{tr}}$.
\end{proof}

\section{More Details of \adrb}
\label{appdx:adrb}

We list the rubric prompts used in \adrb as below.
\begin{tcolorbox}[title=Call 2 -- Per-Analogy Scoring (AVS) \& Cross-Analogy Reasoning (CARS) prompt, breakable]
\small
\setlength{\tabcolsep}{0.1mm}{
\textcolor{pink-color}{\textless{}task\textgreater{}}\\
You are evaluating the quality of analogical reasoning in a report. This covers BOTH individual analogy quality AND cross-analogy synthesis.\\
\\
TARGET EVENT: \{event\_name\} (as of \{cutoff\_date\})\\
ANALOGIES THE SYSTEM FOUND: \{extracted\_analogies\}\\
REFERENCE SET (expert-curated analogies from academic literature): \{reference\_set\}\\
\textcolor{pink-color}{\textless{}/task\textgreater{}}\\
\textcolor{pink-color}{\textless{}part-A: per-analogy scoring (AVS)\textgreater{}}\\
For each analogy in the system's list, evaluate:\\
\\
\textbf{1. factual\_existence:} ``PASS'' if real historical event, ``FAIL'' if hallucinated.\\
\\
\textbf{2. structural\_relevance (0--4)} --- how complete is the causal mechanism mapping?\\
\hspace*{2mm}0 = No structural relevance; pure surface match (same country/topic only).\\
\hspace*{2mm}1 = Shares one superficial feature (``both involve banks'').\\
\hspace*{2mm}2 = Identifies 1--2 genuine shared mechanisms but without causal chain depth.\\
\hspace*{2mm}3 = Identifies 3+ shared mechanisms with causal chain articulation (trigger $\rightarrow$ mediator $\rightarrow$ outcome).\\
\hspace*{2mm}4 = Complete structural mapping: preconditions, causal chains, mechanisms, and outcomes all explicitly aligned between source and target. Must name specific institutional actors or variables in BOTH events.\\
\\
\textit{Calibration} --- Score 4: ``Trust companies (1907) operated outside the National Banking System with $\sim$5\% reserves vs 25\% for national banks. Today's SIVs/conduits similarly operate outside FDIC coverage and bank capital requirements. Both were funded by demandable/rollable short-term liabilities (deposits then, ABCP now) against illiquid long-term assets. The structural role is identical: maturity-transforming intermediary outside the safety net.''\\
\textit{Calibration} --- Score 2: ``Like the S\&L crisis, this involves real estate losses leading to financial institution failures.''\\
\\
\textbf{3. is\_in\_reference\_set:} ``YES'' if matches a reference event, ``PARTIAL'' if sibling/related, ``NO'' otherwise.\\
\\
\textbf{4. novelty:} ``NOT\_NOVEL'' if in reference set; ``NOVEL\_VALID'' if real event with structural\_relevance $\geq$ 2 not in reference set; ``NOVEL\_INVALID'' otherwise.\\
\\
\textbf{5. difference\_awareness (0--2):}\\
\hspace*{2mm}0 = No limitations discussed.\\
\hspace*{2mm}1 = Mentions differences superficially (``the scale is different'').\\
\hspace*{2mm}2 = Identifies WHERE the analogy breaks down and WHY it matters for the analysis.\\
\textcolor{pink-color}{\textless{}/part-A\textgreater{}}\\
\textcolor{pink-color}{\textless{}part-B: cross-analogy reasoning (CARS)\textgreater{}}\\
Evaluate how the report uses multiple analogies TOGETHER. This is the critical test --- most reports list analogies pairwise but never synthesize across them.\\
\\
\textbf{B1. Invariant Extraction Quality (0--4).} Sub-classify each cross-analogy invariant the report identifies:\\
\hspace*{2mm}\textbf{L3-D (descriptive):} ``All three events share feature X'' --- shared attribute without specifying STRUCTURAL ROLE (trigger? amplifier? enabler? mediator?). Score weight: 0.4 per invariant.\\
\hspace*{2mm}\textbf{L3-S (structural):} X occupies the SAME STRUCTURAL ROLE in all events' causal chains, even when instantiated by different domain-specific variables. Score weight: 1.0 per invariant.\\
\\
\textit{Calibration} --- L3-S: ``Across 1907 (Heinze director interlocks), Japan 1990 (keiretsu cross-shareholdings), LTCM 1998 (OTC counterparty opacity), an opaque interconnection network serves as the AMPLIFIER that transforms localized shock into systemic crisis.''\\
\textit{Calibration} --- L3-D: ``In every case --- 1907, 1929, 2008 --- institutions built up huge leverage on illiquid assets.''\\
\\
\begin{tabular}{cp{0.78\linewidth}}
\toprule
Score & Criteria \\
\midrule
0 & No cross-analogy comparison attempted \\
1 & Notes shared features but no systematic analysis \\
2 & Identifies 1--2 invariants, mainly L3-D \\
3 & Identifies 3+ invariants, at least 1 is L3-S \\
4 & Systematic extraction + explicit universal vs domain-specific distinction + $\geq$2 L3-S invariants \\
\bottomrule
\end{tabular}\\
\\
\textbf{B2. Hidden Factor Inference (0--3).} Does the report infer SPECIFIC hidden factors in the TARGET EVENT based on patterns across multiple analogies?\\
\\
\begin{tabular}{cp{0.78\linewidth}}
\toprule
Score & Criteria \\
\midrule
0 & No inference of unobserved factors \\
1 & Generic statement (``there may be hidden risks'') --- L4-mention \\
2 & Names specific hidden factor + $\geq$1 analogy justification, but without completing cross-analogy derivation --- L4-partial \\
3 & Complete derivation: (a) names structural role recurring across analogies, (b) cites $\geq$2 analogies with specific instantiations, (c) infers specific hidden factor in target, (d) explains why it hasn't been identified yet --- L4-full \\
\bottomrule
\end{tabular}\\
\\
\textbf{B3. Analogy Differentiation (0--2).}\\
\\
\begin{tabular}{cp{0.78\linewidth}}
\toprule
Score & Criteria \\
\midrule
0 & Treats all analogies as interchangeable supporting examples \\
1 & Notes different analogies highlight different aspects \\
2 & Explicitly maps WHICH insight comes from WHICH analogy and WHY certain analogies are more informative for specific aspects \\
\bottomrule
\end{tabular}\\
\textcolor{pink-color}{\textless{}/part-B\textgreater{}}\\
\textcolor{pink-color}{\textless{}output\textgreater{}}\\
Output exactly ONE JSON object:\\
\texttt{\{}\\
\hspace*{2mm}\texttt{"per\_analogy": [\{"event\_name": "...", "factual\_existence": "PASS|FAIL",}\\
\hspace*{4mm}\texttt{"structural\_relevance": 0, "is\_in\_reference\_set": "YES|PARTIAL|NO",}\\
\hspace*{4mm}\texttt{"novelty": "NOT\_NOVEL|NOVEL\_VALID|NOVEL\_INVALID",}\\
\hspace*{4mm}\texttt{"difference\_awareness": 0\}],}\\
\hspace*{2mm}\texttt{"cross\_analogy\_reasoning": \{}\\
\hspace*{4mm}\texttt{"B1\_invariant\_extraction": \{"score": 0, "L3\_S\_count": 0, "L3\_D\_count": 0, "justification": "..."\},}\\
\hspace*{4mm}\texttt{"B2\_hidden\_factor\_inference": \{"score": 0, "L4\_claims": [...], "justification": "..."\},}\\
\hspace*{4mm}\texttt{"B3\_analogy\_differentiation": \{"score": 0, "justification": "..."\},}\\
\hspace*{4mm}\texttt{"CARS": (B1+B2+B3)/9 \},}\\
\hspace*{2mm}\texttt{"aggregate": \{ "structural\_precision": 0.0, "reference\_recall": 0.0,}\\
\hspace*{4mm}\texttt{"cross\_domain\_rate": 0.0, "novel\_valid\_count": 0,}\\
\hspace*{4mm}\texttt{"mean\_structural\_relevance": 0.0,}\\
\hspace*{4mm}\texttt{"mean\_AVS": mean of (struct\_rel + diff\_aware + novelty\_bonus)/10,}\\
\hspace*{4mm}\texttt{"ARQ": mean\_AVS$\times$0.6 + CARS$\times$0.4 \} \}}\\
\textcolor{pink-color}{\textless{}/output\textgreater{}}
}
\end{tcolorbox}

\begin{tcolorbox}[title=Call 3 -- Atomic Claim Decomposition (L1--L4) \& Hidden-Factor Detection prompt, breakable]
\small
\setlength{\tabcolsep}{0.1mm}{
\textcolor{pink-color}{\textless{}task\textgreater{}}\\
Decompose this analysis report into atomic analytical claims. Each claim should be a single assertion that can be independently evaluated. For each claim, classify its level.\\
\textcolor{pink-color}{\textless{}/task\textgreater{}}\\
\textcolor{pink-color}{\textless{}rubric\textgreater{}}\\
\textbf{L1 (Surface Fact):} A factual observation about the current situation, available from basic news/data. No historical reasoning involved. Also includes name-dropping a historical event WITHOUT mechanism mapping.\\
\hspace*{2mm}\textit{Example:} ``Subprime mortgage delinquencies have been rising since late 2006.''\\
\hspace*{2mm}\textit{Example (name-drop):} ``This is reminiscent of the 2008 financial crisis.'' [No mechanism mapping $\rightarrow$ L1]\\
\\
\textbf{L2 (Single-Analogy Insight):} An insight that explicitly draws on ONE specific historical event and maps a CONCRETE MECHANISM from that event to the current situation. Must name what structural element is shared and how it operates in both contexts.\\
\hspace*{2mm}\textit{Example:} ``Like the 1907 trust companies, today's SIVs operate outside the lender-of-last-resort perimeter, making them vulnerable to wholesale funding runs.'' [Specific mechanism: LoLR perimeter gap $\rightarrow$ run vulnerability]\\
\\
\textbf{L3-D (Cross-Analogy Descriptive Pattern):} Notes that multiple ($\geq$2) historical events share a common FEATURE, but without specifying the STRUCTURAL ROLE that feature plays.\\
\hspace*{2mm}\textit{Example:} ``In every major crisis --- 1907, 1929, 2008 --- excess leverage was present.'' [Shared feature, but is leverage a trigger, amplifier, or enabler?]\\
\\
\textbf{L3-S (Cross-Analogy Structural Pattern):} Identifies that multiple ($\geq$2) historical events share a mechanism occupying the SAME STRUCTURAL ROLE in the causal chain, even when instantiated by different domain-specific variables. Must explicitly name or clearly imply the structural role.\\
\hspace*{2mm}\textit{Example:} ``Across 1907 (Heinze director interlocks), Japan 1990 (keiretsu cross-shareholdings), and LTCM 1998 (OTC counterparty opacity), an opaque interconnection network serves as the AMPLIFIER that transforms a localized shock into systemic crisis.''\\
\\
\textbf{L4 (Hidden Factor Inference):} An inference about a SPECIFIC, CURRENTLY UNRECOGNIZED factor in the current situation, justified by cross-analogy evidence from $\geq$2 historical events. ALL of the following must be present:\\
\hspace*{2mm}(a) Names a SPECIFIC factor or mechanism (not ``there may be hidden risks'')\\
\hspace*{2mm}(b) Explains WHY this factor is likely present, citing $\geq$2 analogies\\
\hspace*{2mm}(c) The factor must be something not yet widely discussed in pre-cutoff reporting\\
\hspace*{2mm}(d) Must be a FACTOR or MECHANISM identification, NOT a probabilistic forecast or scenario prediction\\
\\
\hspace*{2mm}\textit{Valid L4:} ``Based on the recurring pattern of opaque amplification networks in 1907/1998, an analogous hidden interconnection likely exists --- possibly CDS counterparty exposure concentrated in a few dealers, or repo market concentration in clearing banks.''\\
\hspace*{2mm}\textit{NOT L4} (probabilistic forecast $\rightarrow$ L1): ``There is a 70\% probability of recession.''\\
\hspace*{2mm}\textit{NOT L4} (scenario $\rightarrow$ L1): ``Based on historical patterns, a 30--40\% market correction is likely.''\\
\hspace*{2mm}\textit{NOT L4} (too vague $\rightarrow$ L1): ``There may be hidden systemic risks.''\\
\hspace*{2mm}\textit{NOT L4} (known factor $\rightarrow$ L1): ``The ABCP market freeze is causing liquidity problems.''\\
\textcolor{pink-color}{\textless{}/rubric\textgreater{}}\\
\textcolor{pink-color}{\textless{}critical-rules\textgreater{}}\\
1. Probabilistic forecasts and scenario predictions are NEVER L4, regardless of historical references. L4 identifies a HIDDEN FACTOR, not an OUTCOME.\\
2. Name-dropping a historical event without mechanism mapping is L1, not L2.\\
3. Noting a shared feature across events without specifying its structural role is L3-D, not L3-S.\\
4. A claim can only be L3-S or L4 if it references MULTIPLE ($\geq$2) historical events.\\
5. If unsure between L3-D and L3-S, check: does the claim name or clearly imply a structural role (trigger, amplifier, mediator, enabler, outcome)? If yes $\rightarrow$ L3-S. If no $\rightarrow$ L3-D.\\
\textcolor{pink-color}{\textless{}/critical-rules\textgreater{}}\\
\textcolor{pink-color}{\textless{}input\textgreater{}}\\
REPORT: \{report\}\\
HIDDEN FACTORS FROM REFERENCE SET (use to check if any claims --- at ANY level --- match known hidden factors): \{hidden\_factors\}\\
\textcolor{pink-color}{\textless{}/input\textgreater{}}\\
\textcolor{pink-color}{\textless{}output\textgreater{}}\\
\texttt{\{ "claims": [\{}\\
\hspace*{2mm}\texttt{"claim": "exact text from report (may be condensed)",}\\
\hspace*{2mm}\texttt{"level": "L1 | L2 | L3-D | L3-S | L4",}\\
\hspace*{2mm}\texttt{"historical\_events\_referenced": ["event1", "event2"],}\\
\hspace*{2mm}\texttt{"mechanism\_identified": "..." or null,}\\
\hspace*{2mm}\texttt{"structural\_role": "trigger|amplifier|mediator|enabler|outcome" or null,}\\
\hspace*{2mm}\texttt{"hidden\_factor\_hit": "name from reference set" or null \}],}\\
\texttt{"aggregate": \{ "L1": 0, "L2": 0, "L3\_D": 0, "L3\_S": 0, "L4": 0,}\\
\hspace*{2mm}\texttt{"total\_claims": 0,}\\
\hspace*{2mm}\texttt{"HF\_at\_any": "count of distinct HFs hit at ANY level",}\\
\hspace*{2mm}\texttt{"HF\_at\_L4": "count of distinct HFs hit at L4 ONLY",}\\
\hspace*{2mm}\texttt{"HF\_total": "total HFs in reference set" \} \}}\\
\textcolor{pink-color}{\textless{}/output\textgreater{}}
}
\end{tcolorbox}

\begin{tcolorbox}[title=Call 4 -- Reasoning Analysis Score (RAS) prompt, breakable]
\small
\setlength{\tabcolsep}{0.1mm}{
\textcolor{pink-color}{\textless{}task\textgreater{}}\\
You are evaluating the analytical quality of a report about a historical situation. Score on six dimensions.\\
\\
EVENT: \{event\_name\}; TEMPORAL CUTOFF: \{cutoff\_date\}\\
The analyst was asked to write a comprehensive analysis as of the cutoff date.\\
REPORT: \{report\}\\
\\
Score each dimension according to the rubric. Be STRICT and CALIBRATED.\\
\textcolor{pink-color}{\textless{}/task\textgreater{}}\\
\textcolor{pink-color}{\textless{}calibration-guidance\textgreater{}}\\
- Scores of 3--4 (on 0--4 scales) or 3 (on 0--3 scales) should be RARE. They require genuinely exceptional analysis that goes beyond competent reporting.\\
- A well-written report by a frontier LLM that describes mechanisms competently but without historical grounding or structural novelty should typically score 2--3 on D1 and 1--2 on D4.\\
- The TOP score on each dimension requires something a generic deep-research report would NOT naturally produce --- typically historical grounding, structural abstraction, or calibrated uncertainty from precedent.\\
\textcolor{pink-color}{\textless{}/calibration-guidance\textgreater{}}\\
\textcolor{pink-color}{\textless{}D1: causal depth (0--4, weight $\times$2)\textgreater{}}\\
0: Purely descriptive; no causal claims.\\
1: Identifies causes but only immediate/proximate.\\
2: Identifies causal chains with 2+ steps (A$\rightarrow$B$\rightarrow$C); names some structural features.\\
3: Multi-level causal analysis: distinguishes proximate causes from structural causes and enabling conditions; identifies feedback loops and amplification mechanisms.\\
4: Everything in 3, PLUS: explicitly names the structural role each factor plays (trigger, amplifier, mediator, enabler); grounds the causal framework in historical precedent showing WHY these specific mechanisms should be expected; distinguishes this event's causal structure from superficially similar but structurally different events.\\
\\
\textit{Calibration} --- Score 3: ``The crisis operates through a self-reinforcing funding liquidity spiral: asset sales depress prices, triggering margin calls, forcing more sales.'' [Identifies feedback loop --- but no historical grounding or structural role labeling]\\
\textit{Calibration} --- Score 4: ``Three feedback loops operate simultaneously. Loop 1 (mark-to-market $\rightarrow$ haircut): ABX prices fall $\rightarrow$ dealers mark books $\rightarrow$ repo lenders widen haircuts $\rightarrow$ forced sales push ABX lower. This is structurally identical to the 1907 trust-company run mechanism, where opacity about which trusts held Heinze-linked assets triggered indiscriminate withdrawal. The TRIGGER is subprime losses; the AMPLIFIER is opacity-driven wholesale funding withdrawal; the MEDIATOR is the ABCP/repo conduit system.''\\
\textcolor{pink-color}{\textless{}/D1\textgreater{}}\\
\textcolor{pink-color}{\textless{}D2: mechanism specificity (0--3, weight $\times$2)\textgreater{}}\\
0: No specific mechanisms named.\\
1: Names mechanisms generically (``contagion'', ``panic'', ``leverage'', ``deleveraging'').\\
2: Specifies concrete transmission channels with institutional detail.\\
3: Specifies channels + quantitative data + named institutional actors.\\
\\
\textit{Calibration} --- Score 2: ``Banks are tightening lending to each other, as shown by widening LIBOR-OIS spreads.''\\
\textit{Calibration} --- Score 3: ``The effective federal funds rate printed at 5.41\% on Aug 9, 16bp above the 5.25\% target --- the largest deviation in over a year. The ECB allotted \texteuro94.8B to 49 banks at 4.00\% full allotment.''\\
\textcolor{pink-color}{\textless{}/D2\textgreater{}}\\
\textcolor{pink-color}{\textless{}D3: scope and coverage (0--3, weight $\times$2)\textgreater{}}\\
0: Covers only one dimension of the event.\\
1: Covers multiple dimensions but misses major ones.\\
2: Comprehensive coverage of known risk dimensions (housing, securitization, shadow banking, interbank, policy response).\\
3: Comprehensive + identifies under-discussed or emerging dimensions a standard analysis would miss (e.g., quant fund deleveraging as second-order effect; European bank dollar-funding dependence; MMF structural vulnerability; commercial real estate as next domino).\\
\textcolor{pink-color}{\textless{}/D3\textgreater{}}\\
\textcolor{pink-color}{\textless{}D4: nuance and uncertainty (0--3, weight $\times$1)\textgreater{}}\\
0: Entirely confident assertions with no hedging.\\
1: Some uncertainty language but generic.\\
2: Specific uncertainty about specific claims with reasoning.\\
3: Distinguishes high- from low-confidence claims; CALIBRATES uncertainty using specific historical precedent; identifies what specific information would change the assessment.\\
\\
\textit{Calibration} --- Score 2: ``The outcome is uncertain and depends on policy responses and the depth of housing correction.''\\
\textit{Calibration} --- Score 3: ``If LIBOR-OIS stays above 100bp, escalate to 1907 trajectory and expect multi-month dislocation; if ABCP issuance resumes within a month, the 1998 template applies --- sharp dislocation, swift resolution. The 1907/1998 distinction hinges on whether the problem is solvency (1907) or pure liquidity (1998).''\\
\textcolor{pink-color}{\textless{}/D4\textgreater{}}\\
\textcolor{pink-color}{\textless{}D5: internal consistency (0--2, weight $\times$1)\textgreater{}}\\
0: Contains contradictions or logically incompatible claims.\\
1: No contradictions but claims are loosely connected; analysis reads as a list of observations.\\
2: Claims form a coherent, mutually reinforcing analytical framework with a clear through-line.\\
\textcolor{pink-color}{\textless{}/D5\textgreater{}}\\
\textcolor{pink-color}{\textless{}D6: actionability (0--2, weight $\times$1)\textgreater{}}\\
0: Analysis is purely retrospective/academic with no forward-looking implications.\\
1: Identifies what to monitor but not specific thresholds or decision triggers.\\
2: Provides specific indicators with thresholds, scenario-conditional recommendations, decision-relevant framings (e.g., monitoring table with current value $\rightarrow$ escalation threshold $\rightarrow$ normalization threshold for LIBOR-OIS, ABCP outstanding, ABX.HE.AAA, etc.).\\
\textcolor{pink-color}{\textless{}/D6\textgreater{}}\\
\textcolor{pink-color}{\textless{}output\textgreater{}}\\
\texttt{\{ "D1\_causal\_depth": \{"score": 0, "justification": "..."\},}\\
\hspace*{2mm}\texttt{"D2\_mechanism\_specificity": \{"score": 0, "justification": "..."\},}\\
\hspace*{2mm}\texttt{"D3\_scope": \{"score": 0, "justification": "..."\},}\\
\hspace*{2mm}\texttt{"D4\_nuance": \{"score": 0, "justification": "..."\},}\\
\hspace*{2mm}\texttt{"D5\_consistency": \{"score": 0, "justification": "..."\},}\\
\hspace*{2mm}\texttt{"D6\_actionability": \{"score": 0, "justification": "..."\},}\\
\hspace*{2mm}\texttt{"RAS": (D1$\times$2 + D2$\times$2 + D3$\times$2 + D4 + D5 + D6) / 27 \}}\\
\textcolor{pink-color}{\textless{}/output\textgreater{}}
}
\end{tcolorbox}

\begin{tcolorbox}[title=Call 5 -- Foresight Quality Score (FQS) prompt, breakable]
\small
\setlength{\tabcolsep}{0.1mm}{
\textcolor{pink-color}{\textless{}task\textgreater{}}\\
You are evaluating the foresight quality of an analysis report written about a historical situation AS OF a specific cutoff date. You know what actually happened after the cutoff --- use this knowledge to evaluate the report's forward-looking claims.\\
\\
EVENT: \{event\_name\}; TEMPORAL CUTOFF: \{cutoff\_date\}\\
The analyst was asked to provide forward-looking assessment as of the cutoff date.\\
REPORT: \{report\}\\
KNOWN HIDDEN FACTORS (factors not widely recognized at cutoff but proved critical): \{hidden\_factors\}\\
\textcolor{pink-color}{\textless{}/task\textgreater{}}\\
\textcolor{pink-color}{\textless{}step-1: extract\textgreater{}}\\
Extract every claim that makes a prediction, projects a scenario, or assesses future risk. Include conditional statements (``if X, then Y''), probability assessments, and scenario descriptions. Be exhaustive.\\
\textcolor{pink-color}{\textless{}/step-1\textgreater{}}\\
\textcolor{pink-color}{\textless{}step-2: score each claim\textgreater{}}\\
\textbf{A. OUTCOME} (verified against what actually happened):\\
\hspace*{2mm}$+1.0$ Confirmed: development occurred substantially as predicted.\\
\hspace*{2mm}$+0.5$ Partially confirmed: directionally correct but details differ materially.\\
\hspace*{2mm}$+0.0$ Unconfirmed: did not occur (but not contradicted).\\
\hspace*{2mm}$-0.3$ Contradicted: opposite occurred or fundamentally wrong mechanism.\\
\\
\textbf{B. GROUNDING (0--3) --- BE STRICT.} This is the key discriminating dimension.\\
\hspace*{2mm}0: No historical analogy or evidence cited. Pure assertion or extrapolation. Includes ``historically, markets recover'' or ``past crises suggest...'' without naming a SPECIFIC event.\\
\hspace*{2mm}1: Names a specific historical event but provides NO mechanism mapping (``reminiscent of 1998''). The event is decoration, not analytical input.\\
\hspace*{2mm}2: Names a specific event AND aligns ONE structural element (trigger OR mechanism OR outcome --- but not all). Example: ``Following the 1907 template, trust-company-style runs on shadow banks are likely.''\\
\hspace*{2mm}3: Names a specific event AND aligns $\geq$2 structural roles AND names the corresponding variable in the CURRENT situation. Example: ``Based on 1907 (trust companies outside NYCH = TRIGGER: opacity-driven run), the current TRIGGER is ABCP conduit opacity; the AMPLIFIER is the same (maturity mismatch + wholesale funding); the likely OUTCOME, per the 1907 template, is sequential institutional failures until a coordinator (Morgan then, Fed now) provides selective support.''\\
\\
\textit{Calibration note:} Generic references to ``history shows'' or ``past crises'' with no specific event = Grounding 0. Naming ``1998'' without saying WHAT about 1998 applies = Grounding 1. Most foresight claims by non-synthesis agents will be Grounding 0 or 1.\\
\\
\textbf{C. SPECIFICITY (1--3):}\\
\hspace*{2mm}1: Vague/directional (``financial system will face significant stress'').\\
\hspace*{2mm}2: Names a causal pathway (``credit contraction via bank deleveraging will slow GDP growth'').\\
\hspace*{2mm}3: Names concrete entities, quantities, or timing with mechanism (``investment banks with subprime MBS exposure $>$10$\times$ capital face solvency risk within 6 months; the Bear Stearns funds are the template'').\\
\\
\textbf{D. DERIVABILITY (0--3):}\\
\hspace*{2mm}0: Impossible without post-cutoff knowledge $\rightarrow$ temporal violation.\\
\hspace*{2mm}1: Unlikely derivable from pre-cutoff evidence alone.\\
\hspace*{2mm}2: Plausible stretch from pre-cutoff evidence + analogical reasoning.\\
\hspace*{2mm}3: Clearly derivable from pre-cutoff evidence + analogies.\\
\\
\textbf{E. TEMPORAL COMPLIANCE:}\\
\hspace*{2mm}\textsc{Clean}: Based on pre-cutoff facts or analogical reasoning (citing full history of analogy events is allowed).\\
\hspace*{2mm}\textsc{Soft\_violation}: References post-cutoff trends about the TARGET event without naming specific post-cutoff events.\\
\hspace*{2mm}\textsc{Hard\_violation}: Explicitly references specific post-cutoff events, data, or outcomes about the TARGET event.\\
\\
\textbf{F. HIDDEN FACTOR HIT:} Does this claim identify or predict any of the known hidden factors listed above? A hit requires the claim to identify the FACTOR, not just predict an outcome that happens to relate to it.\\
\textcolor{pink-color}{\textless{}/step-2\textgreater{}}\\
\textcolor{pink-color}{\textless{}step-3: per-claim FQS\textgreater{}}\\
$w = 1.0$ if \textsc{Clean}, $0.5$ if \textsc{Soft\_violation}, $0.0$ if \textsc{Hard\_violation}.\\
$\mathrm{RQ} = (\mathrm{Grounding} + \mathrm{Specificity} + \mathrm{Derivability}) / 9$\\
$\mathrm{FQS} = w \times (\mathrm{Outcome} \times 0.5 + \mathrm{RQ} \times 0.5)$\\
\textcolor{pink-color}{\textless{}/step-3\textgreater{}}\\
\textcolor{pink-color}{\textless{}step-4: aggregate \& output\textgreater{}}\\
\texttt{\{ "foresight\_claims": [\{}\\
\hspace*{2mm}\texttt{"claim": "...", "outcome": \{"rating": "confirmed|partial|unconfirmed|contradicted",}\\
\hspace*{4mm}\texttt{"value": 0.0, "what\_actually\_happened": "..."\},}\\
\hspace*{2mm}\texttt{"grounding": \{"score": 0, "analogy\_cited": "..." or null,}\\
\hspace*{4mm}\texttt{"structural\_roles\_mapped": ["trigger","mechanism","outcome"] or [],}\\
\hspace*{4mm}\texttt{"justification": "..."\},}\\
\hspace*{2mm}\texttt{"specificity": \{"score": 0, "justification": "..."\},}\\
\hspace*{2mm}\texttt{"derivability": \{"score": 0, "justification": "..."\},}\\
\hspace*{2mm}\texttt{"temporal\_compliance": "CLEAN|SOFT\_VIOLATION|HARD\_VIOLATION",}\\
\hspace*{2mm}\texttt{"hidden\_factor\_hit": "..." or null, "RQ": 0.0, "FQS": 0.0 \}],}\\
\texttt{"aggregate": \{ "total\_foresight\_claims": 0, "FQS\_mean": 0.0,}\\
\hspace*{2mm}\texttt{"FQS\_depth": "mean FQS over claims with Grounding $\geq$ 2 AND Specificity $\geq$ 2",}\\
\hspace*{2mm}\texttt{"FQS\_depth\_n": "count meeting that threshold",}\\
\hspace*{2mm}\texttt{"temporal\_compliance\_rate": 0.0, "confirmed\_rate": 0.0,}\\
\hspace*{2mm}\texttt{"grounded\_rate": "fraction with Grounding $\geq$ 2",}\\
\hspace*{2mm}\texttt{"hidden\_factor\_hits": 0, "hidden\_factor\_total": 0 \} \}}\\
\textcolor{pink-color}{\textless{}/step-4\textgreater{}}
}
\end{tcolorbox}

\section{More Details of \ours}
\label{appdx:method}

\subsection{\ours for historical analogy generation}
\label{appdx:adr_gen}
The algorithmic description of \ours used for 
\begin{algorithm}[H]
        \caption{The \ours Framework}
        \label{alg:crha}
        \begin{algorithmic}[1]
        \STATE \textbf{Required:} Target event $E_T$; candidate analogy universe $\mathcal{U}$; LLM $\mathcal{M}$; representation extractor $\textsc{Repr}(\cdot)$ that produces $(D(E), M(E))$; pattern abstractor $\textsc{Abs}(\cdot)$ that produces the abstract causal pattern $\mathfrak{P}$ and its structural positions $\mathcal{S}_{\mathfrak{P}}$; candidate generator $\textsc{Gen}(\cdot)$; self-analogy filter $\textsc{Filter}(\cdot,\cdot)$; per-position alignment scorer $\textsc{Align}(\cdot,\cdot)$ that produces $\{X_{k,s}\}_{s \in \mathcal{S}_{\mathfrak{P}}}$ for analogy $E_k$; aggregate decider $\textsc{Decide}(\cdot)$; max reflection rounds $R$; candidates per round $K$; carryover size $M$;
        \STATE Extracting the target representations $(D(E_T), M(E_T)) \gets \textsc{Repr}(\mathcal{M}, E_T)$;
        \STATE Abstracting the target pattern $(\mathfrak{P}, \mathcal{S}_{\mathfrak{P}}) \gets \textsc{Abs}(\mathcal{M}, D(E_T), M(E_T))$, where $\mathcal{S}_{\mathfrak{P}}$ enumerates the structural positions to be confirmed by analogies;
        \STATE Sampling an initial candidate analogy set $\mathcal{A}^{(0)} \subset \mathcal{U}$ via $\mathcal{A}^{(0)} \gets \textsc{Gen}(\mathcal{M}, \mathfrak{P}, K)$;
        \STATE Initializing the tried-set $\mathcal{T} \gets \mathcal{A}^{(0)}$, the carryover pool $\mathcal{K} \gets \emptyset$, and the latest decision $\mathcal{D}^* \gets \texttt{null}$;
        \FOR{round $t \in [0, 1, \ldots, R]$}
        \STATE Initializing the per-round evaluation set $\mathcal{E}^{(t)} \gets \emptyset$;
        \FOR{each candidate analogy $E_k \in \mathcal{A}^{(t)}$}
        \IF{$\textsc{Filter}(\mathcal{M}, E_T, E_k)$ flags $E_k$ as alias, sub-instance, or proper subset of $E_T$, i.e., $E_k$ collapses to $E_T$ rather than residing in $\mathcal{U} \setminus \{E_T\}$ in the strict sense}
        \STATE Discarding $E_k$;
        \ELSE
        \STATE Extracting $(D(E_k), M(E_k)) \gets \textsc{Repr}(\mathcal{M}, E_k)$ via Wikipedia-grounded retrieval followed by mechanism extraction;
        \STATE Producing the position-level alignment evidence $\{X_{k,s}\}_{s \in \mathcal{S}_{\mathfrak{P}}} \gets \textsc{Align}(\mathcal{M}, (D(E_T), M(E_T)), (D(E_k), M(E_k)))$, where each $X_{k,s}$ records both the confirming match and the residual gap of analogy $E_k$ at position $s$;
        \STATE Updating $\mathcal{E}^{(t)} \gets \mathcal{E}^{(t)} \cup \{(E_k, \{X_{k,s}\}_s)\}$;
        \ENDIF
        \ENDFOR
        \STATE Merging the carryover from the previous round: $\mathcal{E}^{(t)} \gets \mathcal{E}^{(t)} \cup \mathcal{K}$;
        \STATE Computing the aggregate decision $\mathcal{D}^{(t)} \gets \textsc{Decide}(\mathcal{M}, \mathfrak{P}, \mathcal{E}^{(t)})$, which (i) prunes candidates violating five hard rejection rules (alias, sub-event, super-set, generic descriptor, scope-class mismatch), (ii) ranks the eligible analogies by the fidelity of the induced structural mapping $\phi_k$ from $E_k$ to $E_T$, equivalently by the support that $\{X_{k,s}\}_s$ provides for $\widehat{P}_s^{\mathcal{A}}$ across $s \in \mathcal{S}_{\mathfrak{P}}$ under mechanism specificity and scale alignment, and (iii) returns $\{E^{(t)*}, \text{top-}M, \text{is\_sufficient}, \text{common\_gap}, \text{next\_focus}\}$;
        \STATE Updating $\mathcal{D}^* \gets \mathcal{D}^{(t)}$;
        \IF{$\mathcal{D}^{(t)}.\text{is\_sufficient}$ holds, indicating that the target-side overlap $\Omega_{\mathrm{pos}}$ is adequately covered by the chosen analogy}
        \STATE \textbf{break};
        \ENDIF
        \STATE Updating the carryover pool $\mathcal{K} \gets \mathcal{D}^{(t)}.\text{top-}M$;
        \STATE Generating the next-round candidate analogies $\mathcal{A}^{(t+1)} \gets \textsc{Gen}(\mathcal{M}, \mathfrak{P}, \mathcal{D}^{(t)}.\text{common\_gap}, \mathcal{D}^{(t)}.\text{next\_focus}, \mathcal{T})$, conditioned on the under-confirmed positions in $\mathcal{S}_{\mathfrak{P}}$ and excluding any $E_k \in \mathcal{T}$;
        \STATE Removing duplicates: $\mathcal{A}^{(t+1)} \gets \mathcal{A}^{(t+1)} \setminus \mathcal{T}$;
        \IF{$\mathcal{A}^{(t+1)} = \emptyset$}
        \STATE \textbf{break};
        \ENDIF
        \STATE Updating the tried-set $\mathcal{T} \gets \mathcal{T} \cup \mathcal{A}^{(t+1)}$;
        \ENDFOR
        \STATE \textbf{return} the selected analogy $E^* \gets \mathcal{D}^*.E^{(t)*}$ together with its induced structural mapping $\phi^*: \mathcal{S}_{\mathfrak{P}} \to E^*$;
        \end{algorithmic}
\end{algorithm}

We present the prompts we used in \ours:

\begin{tcolorbox}[title=Candidate Generation prompt, breakable]
\small
\setlength{\tabcolsep}{0.1mm}{
\textcolor{pink-color}{\textless{}task\textgreater{}}\\
You are a robot for finding structurally analogous historical events based on causal analysis. Historical Analogy means identifying past events that share similar causal structures with a contemporary event, not merely similar topics or surface features.\\
\\
Given the causal profile of an input event (including its structural pattern, mechanism, and causal chain), output 5 historical events that share similar causal structures. Candidates may come from entirely different domains (e.g., a financial crisis may be structurally analogous to an epidemic) as long as the underlying causal mechanisms are similar. If there is any reflection, please modify the recommended events based on the reflection.\\
\textcolor{pink-color}{\textless{}/task\textgreater{}}\\
\textcolor{pink-color}{\textless{}format\textgreater{}}\\
\textbf{IMPORTANT:} Output ONLY a JSON list of 5 short event names, like \texttt{[``event1'',``event2'',``event3'',``event4'',``event5'']}. No explanations, no descriptions, no numbering. Each name should be a concise historical event name (under 50 characters).\\
\textcolor{pink-color}{\textless{}/format\textgreater{}}\\
\textcolor{pink-color}{\textless{}example\textgreater{}}\\
Input Event:\\
2008 Financial Crisis\\
Structural Pattern: systemic\_risk\_cascade\\
Mechanism: Interconnected financial instruments amplified localized failures into global systemic collapse\\
Causal Chain: Housing bubble + lax regulation $\rightarrow$ subprime mortgage defaults $\rightarrow$ CDO/CDS losses cascade $\rightarrow$ bank insolvency $\rightarrow$ credit freeze $\rightarrow$ global recession\\
Output: [``1997 Asian financial crisis'', ``South Sea Bubble'', ``Long-Term Capital Management collapse'', ``2010 European sovereign debt crisis'', ``1873 Vienna Stock Exchange crash'']\\
\textcolor{pink-color}{\textless{}/example\textgreater{}}\\
\textcolor{pink-color}{\textless{}history\textgreater{}}\\
\{chat\_history\}\\
\textcolor{pink-color}{\textless{}/history\textgreater{}}\\
\textcolor{pink-color}{\textless{}input\textgreater{}}\\
\{input\_type\}: \{input\}\\
Output:\\
\textcolor{pink-color}{\textless{}/input\textgreater{}}
}
\end{tcolorbox}

\begin{tcolorbox}[title=Self-Analogy / Alias / Sub-instance Filter prompt, breakable]
\small
\setlength{\tabcolsep}{0.1mm}{
\textcolor{pink-color}{\textless{}task\textgreater{}}\\
You will judge whether a candidate event should be FILTERED OUT as a self-analogy when proposing it as an analog for an input event.\\
\\
Input event: \{input\_name\}\\
Candidate event: \{candidate\_name\}\\
\textcolor{pink-color}{\textless{}/task\textgreater{}}\\
\textcolor{pink-color}{\textless{}rules\textgreater{}}\\
Return YES (filter out) ONLY if one of these three holds:\\
\\
\textbf{1. ALIAS} --- candidate is a different NAME for the SAME specific historical event.\\
\hspace*{2mm}YES patterns (templated; angle brackets denote slots, not specific events):\\
\hspace*{4mm}-- $\langle$event under codename$\rangle$ $\equiv$ $\langle$same event under descriptive popular name$\rangle$\\
\hspace*{4mm}-- $\langle$event under regional/local name$\rangle$ $\equiv$ $\langle$same event under translated or global name$\rangle$\\
\hspace*{4mm}-- $\langle$event under full institutional suffix$\rangle$ $\equiv$ $\langle$same event under truncated suffix or root noun$\rangle$\\
\hspace*{4mm}-- $\langle$war named by calendar A date$\rangle$ $\equiv$ $\langle$same war named by calendar B date$\rangle$\\
\hspace*{4mm}-- $\langle$crisis named by location$\rangle$ $\equiv$ $\langle$same crisis named by month/year$\rangle$\\
\\
\textbf{2. SUB-INSTANCE of GENERIC input} --- input names a generic CATEGORY, candidate is one specific instance.\\
\hspace*{2mm}YES patterns:\\
\hspace*{4mm}-- input ``$\langle$generic ceremony class$\rangle$'' + candidate ``$\langle$a regional or cultural variant of that ceremony$\rangle$''\\
\hspace*{4mm}-- input ``$\langle$class of recurring institutional events$\rangle$'' + candidate ``$\langle$one dated instance of that class$\rangle$''\\
\hspace*{4mm}-- input ``$\langle$class of accidents/disasters$\rangle$'' + candidate ``$\langle$one named incident of that kind$\rangle$''\\
\\
\textbf{3. PROPER SUBSET} --- input is a CONTAINER event (a multi-year war, movement, era, crisis, or campaign that comprises many sub-events) and the candidate is one named sub-event WITHIN it.\\
\hspace*{2mm}YES patterns (true proper subsets):\\
\hspace*{4mm}-- input ``$\langle$multi-year war$\rangle$'' + candidate ``$\langle$one named battle within that war$\rangle$''\\
\hspace*{4mm}-- input ``$\langle$decade-long geopolitical confrontation$\rangle$'' + candidate ``$\langle$one named crisis episode within that confrontation$\rangle$''\\
\hspace*{4mm}-- input ``$\langle$social movement spanning years$\rangle$'' + candidate ``$\langle$one named march/protest/campaign within that movement$\rangle$''\\
\hspace*{4mm}-- input ``$\langle$multi-stage operation$\rangle$'' + candidate ``$\langle$one named stage of that operation$\rangle$''\\
\\
\textbf{COUNTER-EXAMPLES} --- do NOT filter these (sibling, not subset):\\
\hspace*{2mm}-- input ``$\langle$conference A in a series$\rangle$'' + candidate ``$\langle$conference B in the same series$\rangle$'' $\rightarrow$ NO\\
\hspace*{2mm}-- input ``$\langle$war A$\rangle$'' + candidate ``$\langle$separate war between different parties, similar mechanism$\rangle$'' $\rightarrow$ NO\\
\hspace*{2mm}-- input ``$\langle$war A$\rangle$'' + candidate ``$\langle$war B as a sibling in a numbered series$\rangle$'' $\rightarrow$ NO\\
\\
If unsure between SAME-DOMAIN-SIBLING (NO) and PROPER-SUBSET (YES), ask: \emph{``Is the candidate event part of the input event's timeline / agenda / formal scope?''} --- if no, it is a sibling, not a subset.\\
\\
Return NO for everything else (genuinely different events sharing vocabulary; different events in same domain; cross-domain analogs).\\
\textcolor{pink-color}{\textless{}/rules\textgreater{}}\\
\textcolor{pink-color}{\textless{}reasoning\textgreater{}}\\
Reason step by step BEFORE giving the final answer:\\
1. Identify the specific historical event each name refers to. Are they the same real-world event under different names?\\
2. If different, check: is one a SUB-INSTANCE of a generic category named by the other?\\
3. If different, check: is one a PROPER SUBSET (named sub-event within a container event)?\\
4. State your final decision.\\
\textcolor{pink-color}{\textless{}/reasoning\textgreater{}}\\
\textcolor{pink-color}{\textless{}output\textgreater{}}\\
Reasoning: $\langle$2--3 sentences$\rangle$\\
Answer: YES or NO\\
\textcolor{pink-color}{\textless{}/output\textgreater{}}
}
\end{tcolorbox}

\begin{tcolorbox}[title=Per-Candidate 5-Dimension Match/Gap Evaluation prompt, breakable]
\small
\setlength{\tabcolsep}{0.1mm}{
\textcolor{pink-color}{\textless{}task\textgreater{}}\\
You will compare an input event against a candidate analogy along 5 causal dimensions. For each dimension, identify both what MATCHES and what does NOT match.\\
\textcolor{pink-color}{\textless{}/task\textgreater{}}\\
\textcolor{pink-color}{\textless{}input-event\textgreater{}}\\
Input Event: \{input\_name\}\\
\hspace*{2mm}Preconditions: \{input\_precond\}\\
\hspace*{2mm}Causal Chain: \{input\_chain\}\\
\hspace*{2mm}Mechanism: \{input\_mechanism\}\\
\hspace*{2mm}Outcome \& Impact: \{input\_outcome\}\\
\hspace*{2mm}Structural Pattern: \{input\_pattern\}\\
\textcolor{pink-color}{\textless{}/input-event\textgreater{}}\\
\textcolor{pink-color}{\textless{}candidate-event\textgreater{}}\\
Candidate Event: \{cand\_name\}\\
\hspace*{2mm}Preconditions: \{cand\_precond\}\\
\hspace*{2mm}Causal Chain: \{cand\_chain\}\\
\hspace*{2mm}Mechanism: \{cand\_mechanism\}\\
\hspace*{2mm}Outcome \& Impact: \{cand\_outcome\}\\
\hspace*{2mm}Structural Pattern: \{cand\_pattern\}\\
\textcolor{pink-color}{\textless{}/candidate-event\textgreater{}}\\
\textcolor{pink-color}{\textless{}reasoning\textgreater{}}\\
\textbf{REASONING-FIRST PROCESS} --- for EACH dimension, work through these steps:\\
\hspace*{2mm}Step 1: Read input's value for this dimension and candidate's value.\\
\hspace*{2mm}Step 2: Identify what specifically ALIGNS --- name the shared mechanism / role / dynamic. Be concrete.\\
\hspace*{2mm}Step 3: Identify what specifically does NOT align --- name the divergence.\\
\hspace*{2mm}Step 4: Fill the JSON match/gap fields based on Step 2/3 reasoning.\\
\\
\textbf{Rules:}\\
\hspace*{2mm}- If a dimension has no meaningful match, set MATCH to ``'' (empty string).\\
\hspace*{2mm}- If a dimension matches with no important gap, set GAP to ``'' (empty string).\\
\hspace*{2mm}- Avoid hedging like ``somewhat'' or ``loosely'' --- be specific.\\
\hspace*{2mm}- Do not output the reasoning steps; only the final JSON.\\
\textcolor{pink-color}{\textless{}/reasoning\textgreater{}}\\
\textcolor{pink-color}{\textless{}output\textgreater{}}\\
Output ONLY valid JSON, no other text:\\
\texttt{\{}\\
\hspace*{2mm}\texttt{"preconditions":}\hspace*{6mm}\texttt{\{"match": "...", "gap": "..."\},}\\
\hspace*{2mm}\texttt{"causal\_chain":}\hspace*{8mm}\texttt{\{"match": "...", "gap": "..."\},}\\
\hspace*{2mm}\texttt{"mechanism":}\hspace*{14mm}\texttt{\{"match": "...", "gap": "..."\},}\\
\hspace*{2mm}\texttt{"outcome\_impact":}\hspace*{6mm}\texttt{\{"match": "...", "gap": "..."\},}\\
\hspace*{2mm}\texttt{"structural\_pattern":} \texttt{\{"match": "...", "gap": "..."\}}\\
\texttt{\}}\\
\textcolor{pink-color}{\textless{}/output\textgreater{}}
}
\end{tcolorbox}

\begin{tcolorbox}[title=Aggregate Decision prompt (REJECT rules + ranking), breakable]
\small
\setlength{\tabcolsep}{0.1mm}{
\textcolor{pink-color}{\textless{}task\textgreater{}}\\
You have evaluated \{n\_cands\} candidate analogies for the input event below. The per-candidate match/gap summaries are provided.\\
\textcolor{pink-color}{\textless{}/task\textgreater{}}\\
\textcolor{pink-color}{\textless{}input-event\textgreater{}}\\
Input Event: \{input\_name\}\\
\hspace*{2mm}Preconditions: \{input\_precond\}\\
\hspace*{2mm}Causal Chain: \{input\_chain\}\\
\hspace*{2mm}Mechanism: \{input\_mechanism\}\\
\hspace*{2mm}Outcome \& Impact: \{input\_outcome\}\\
\hspace*{2mm}Structural Pattern: \{input\_pattern\}\\
\\
Per-candidate match/gap summary (5 dimensions each):\\
\{candidate\_summaries\}\\
\textcolor{pink-color}{\textless{}/input-event\textgreater{}}\\
\textcolor{pink-color}{\textless{}reject-rules\textgreater{}}\\
\textbf{HARD REJECTION RULES} --- apply BEFORE picking closest. A candidate that violates any of these is DISQUALIFIED, even if its chain match looks tight. All examples below are TEMPLATED (angle brackets denote slot types).\\
\\
\textbf{REJECT-1: ALIAS} --- same event under different name.\\
\hspace*{2mm}Patterns:\\
\hspace*{4mm}-- $\langle$event design / planning / pilot phase$\rangle$ disqualified for input that IS that design phase.\\
\hspace*{4mm}-- $\langle$ceremony's broadcast or telecast$\rangle$ disqualified for input that IS the ceremony.\\
\hspace*{4mm}-- $\langle$war named by calendar A or duration$\rangle$ disqualified for $\langle$same war named by calendar B$\rangle$.\\
\hspace*{4mm}-- $\langle$truncated form of input name$\rangle$ disqualified for $\langle$full input name$\rangle$ when both refer to one specific incident.\\
\\
\textbf{REJECT-2: SUB-EVENT / SUB-PERIOD of the input.}\\
\hspace*{2mm}Patterns:\\
\hspace*{4mm}-- $\langle$one named battle$\rangle$ disqualified for input ``$\langle$the multi-year war containing it$\rangle$''.\\
\hspace*{4mm}-- $\langle$key episode$\rangle$ disqualified for input ``$\langle$the decade-long movement containing it$\rangle$''.\\
\hspace*{4mm}-- $\langle$a stage / operation / sub-campaign$\rangle$ disqualified for input ``$\langle$the larger campaign that contains it$\rangle$''.\\
\hspace*{4mm}-- $\langle$named opening or final phase$\rangle$ disqualified for input ``$\langle$the encompassing event spanning that phase$\rangle$''.\\
\\
\textbf{REJECT-3: DIRECT SUPER-SET} --- input is a sub-event of the candidate.\\
\hspace*{2mm}Patterns:\\
\hspace*{4mm}-- $\langle$broader movement or campaign$\rangle$ disqualified for input ``$\langle$one named action within that movement$\rangle$''.\\
\hspace*{4mm}-- $\langle$encompassing war or era$\rangle$ disqualified for input ``$\langle$one operation / single dated incident within it$\rangle$''.\\
\\
\textbf{REJECT-4: GENERIC DESCRIPTOR of the same event type.}\\
\hspace*{2mm}Patterns:\\
\hspace*{4mm}-- $\langle$category label$\rangle$ (``Tournament'', ``Election'', ``Disaster'', ``Coup'', ``Industrial accident'', ``Ceremony'') disqualified for input ``$\langle$a specific dated/named instance$\rangle$''.\\
\hspace*{4mm}-- $\langle$unspecified-year placeholder$\rangle$ (``Historical X'', ``Old X'', ``Past X'') disqualified for input ``$\langle$a specific named X with a date$\rangle$''.\\
\\
\textbf{REJECT-5: SCOPE-CLASS MISMATCH} --- candidate's temporal/magnitude class is at least one order of magnitude different from the input on a scale-dependent axis.\\
\hspace*{2mm}Patterns (DISQUALIFY when):\\
\hspace*{4mm}-- input is a $\langle$century-long era / multi-decade period$\rangle$ AND candidate is a $\langle$single-day / single-week named incident$\rangle$.\\
\hspace*{4mm}-- input is a $\langle$multi-year movement / campaign$\rangle$ AND candidate is a $\langle$one-day protest / single dated action$\rangle$.\\
\hspace*{4mm}-- input has impact magnitude $\geq$1 OoM different from candidate on the relevant axis (casualty count, economic impact, geographic spread, role cardinality), AND the matched mechanism is scale-dependent.\\
\hspace*{2mm}\emph{EXCEPTION:} if the candidate is a representative episode that captures the input era's DEFINING mechanism (e.g., a single decisive battle for a war), allow it under sub-event reasoning --- but then it should also be checked under REJECT-2.\\
\\
If a candidate matches ANY rejection rule, exclude it. If after rejection ZERO candidates remain, fall back to the highest-ranked non-rejected candidate even if mechanism match is weak.\\
\textcolor{pink-color}{\textless{}/reject-rules\textgreater{}}\\
\textcolor{pink-color}{\textless{}reasoning\textgreater{}}\\
\textbf{REASONING-FIRST PROCESS} --- work through these steps explicitly:\\
\\
\textbf{Step 1:} For each of the \{n\_cands\} candidates, walk the rejection rules above (REJECT-1 through REJECT-5). Mark each as ELIGIBLE or REJECTED (with reason).\\
\textbf{Step 2:} Among ELIGIBLE candidates, rank by:\\
\hspace*{2mm}(a) \textbf{MECHANISM SPECIFICITY:} PREFER candidates whose causal\_chain match describes a SPECIFIC mechanism (concrete role + dynamic + outcome). DISPREFER candidates whose match is just a CATEGORY label (e.g., ``both attacks'', ``both bubbles'') even if ``famous canonical''.\\
\hspace*{2mm}(b) \textbf{SCALE MATCH} (within $\sim$1 order of magnitude, on these axes):\\
\hspace*{4mm}-- Temporal class: input duration class $\approx$ candidate duration class (era / campaign / event must align, modulo REJECT-5 exception).\\
\hspace*{4mm}-- Magnitude: casualty / economic / geographic-impact within $\sim$1 OoM.\\
\hspace*{4mm}-- Role cardinality: 1-on-1 / 1-vs-many / many-vs-many should match.\\
\hspace*{2mm}A candidate with specific mechanism but mismatched scale by $\geq$1 OoM ranks BELOW one with both mechanism and scale match. Tight specific mechanism + matched scale $>$ famous canonical pair.\\
\textbf{Step 3:} Pick top-ranked ELIGIBLE as closest.\\
\textbf{Step 4:} Decide \texttt{is\_sufficient} (yes if mechanism match is reasonable AND scale roughly aligns; minor gaps OK).\\
\textbf{Step 5:} If not sufficient, identify common\_gap and next\_round\_focus.\\
\textbf{Step 6:} Output JSON with reasoning field FIRST.\\
\\
Now answer six questions for the JSON output:\\
\textbf{Q1:} Pick the CLOSEST ELIGIBLE candidate (after applying REJECT rules).\\
\textbf{Q2:} Is it SUFFICIENT? (mechanism reasonably matches AND scale roughly aligns).\\
\textbf{Q3:} REASONING --- 2--3 sentences walking your rejection check + ranking + why the chosen candidate is best.\\
\textbf{Q4:} If insufficient, COMMON GAP across all candidates.\\
\textbf{Q5:} If insufficient, NEXT-ROUND ANGLE: (a) different mechanism family, (b) different scale/scope, (c) different outcome valence, (d) different role structure, (e) cross-domain shift.\\
\textbf{Q6:} List TOP-3 ELIGIBLE candidates (best first), excluding rejected ones.\\
\textcolor{pink-color}{\textless{}/reasoning\textgreater{}}\\
\textcolor{pink-color}{\textless{}output\textgreater{}}\\
Output ONLY valid JSON, with reasoning FIRST so the LLM commits to the reasoning before the picks:\\
\texttt{\{}\\
\hspace*{2mm}\texttt{"reasoning": "Step 1 rejection check: ...}\\
\hspace*{4mm}\texttt{Step 2 ranking: ... Final pick justification.",}\\
\hspace*{2mm}\texttt{"closest\_candidate": "$\langle$exact eligible name$\rangle$",}\\
\hspace*{2mm}\texttt{"top\_3": ["$\langle$best$\rangle$", "$\langle$2nd$\rangle$", "$\langle$3rd$\rangle$"],}\\
\hspace*{2mm}\texttt{"is\_sufficient": true OR false,}\\
\hspace*{2mm}\texttt{"common\_gap": "..."} (only if NOT sufficient; ``'' if sufficient),\\
\hspace*{2mm}\texttt{"next\_round\_focus": "..."} (only if NOT sufficient; ``'' if sufficient)\\
\texttt{\}}\\
\textcolor{pink-color}{\textless{}/output\textgreater{}}
}
\end{tcolorbox}

\section{Rubric-based Evaluation for Historical Analogy Generation}
\label{appdx:his_gen_eval}

We provide our new evaluation protocols for~\citet{li2025past}.

\begin{tcolorbox}[title=Extraction prompt -- 8-field structural decomposition, breakable]
\small
\setlength{\tabcolsep}{0.1mm}{
\textcolor{pink-color}{\textless{}task\textgreater{}}\\
You are an event-analysis assistant. Given an event description, extract eight structural fields capturing its analogical structure (Sourati 2024 ARN narrative elements + causal-chain decomposition).\\
\textcolor{pink-color}{\textless{}/task\textgreater{}}\\
\textcolor{pink-color}{\textless{}fields\textgreater{}}\\
\textbf{1. ACTORS} --- principal agents by structural ROLE (e.g., ``incumbent power'', ``rising challenger''), NOT proper names. 1--2 sentences.\\
\textbf{2. RELATIONSHIPS} --- how actors relate (alliance, rivalry, dependence, hierarchy). 1--2 sentences.\\
\textbf{3. ACTIONS} --- JSON array of 4--8 key actions in CAUSAL ORDER. Short phrases.\\
\textbf{4. GOALS} --- what each actor aims to achieve. 1--2 sentences.\\
\textbf{5. LOCATION} --- setting (geographical + institutional + domain). 1--2 sentences.\\
\textbf{6. THEME} --- highest-order causal pattern in one phrase (e.g., ``imperial overreach leads to quagmire'').\\
\textbf{7. CAUSAL\_CHAIN} --- JSON array of ordered steps with VALENCE. Each step: \texttt{\{"step": "$\langle$short phrase$\rangle$", "valence": -1|0|1\}} where $+1$ = building/positive trajectory, $-1$ = declining/negative, $0$ = neutral. Use STRICTLY $\{-1, 0, 1\}$ --- no $\pm 2$.\\
\textbf{8. CAUSAL\_ELEMENTS} --- JSON array of 5--8 specific causal factors, snake\_case short phrases. ONLY factors that play a CAUSAL role (drove or sustained the dynamic). NOT descriptions, dates, or proper names.\\
\hspace*{2mm}GOOD: \texttt{["imperial\_overextension", "asymmetric\_warfare", "domestic\_antiwar\_pressure"]}\\
\hspace*{2mm}BAD:\hspace*{2mm}\texttt{["nineteen sixty four", "Saigon", "war"]}\\
\textcolor{pink-color}{\textless{}/fields\textgreater{}}\\
\textcolor{pink-color}{\textless{}input\textgreater{}}\\
Event:\\
\{name\}: \{intro\}\\
\textcolor{pink-color}{\textless{}/input\textgreater{}}\\
\textcolor{pink-color}{\textless{}output\textgreater{}}\\
Output ONLY JSON, no other text:\\
\texttt{\{ "actors": "...", "relationships": "...", "actions": [...],}\\
\hspace*{2mm}\texttt{"goals": "...", "location": "...", "theme": "...",}\\
\hspace*{2mm}\texttt{"causal\_chain": [\{"step": "...", "valence": 1\}, ...],}\\
\hspace*{2mm}\texttt{"causal\_elements": [...] \}}\\
\textcolor{pink-color}{\textless{}/output\textgreater{}}
}
\end{tcolorbox}

\begin{tcolorbox}[title=Structural battery prompt -- chain isomorphism + direction + mapping (Items 1--3), breakable]
\small
\setlength{\tabcolsep}{0.1mm}{
\textcolor{pink-color}{\textless{}task\textgreater{}}\\
You will rate STRUCTURAL alignment between two events on three sub-items, each 0--3. Focus on STRUCTURE --- ignore proper names, dates, and domain.\\
\\
EVENT A: actions \{a\_actions\}; causal\_chain \{a\_chain\}; actors \{a\_actors\}; relationships \{a\_rel\}\\
EVENT B: actions \{b\_actions\}; causal\_chain \{b\_chain\}; actors \{b\_actors\}; relationships \{b\_rel\}\\
\textcolor{pink-color}{\textless{}/task\textgreater{}}\\
\textcolor{pink-color}{\textless{}item-1: chain isomorphism\textgreater{}}\\
Do A's action steps have functional counterparts in B, in roughly the same order?\\
\hspace*{2mm}0 = no correspondence (e.g., Tulip mania chain: speculation $\rightarrow$ bubble $\rightarrow$ crash vs Sermon on the Mount: gathering $\rightarrow$ teaching $\rightarrow$ dispersal).\\
\hspace*{2mm}1 = only $\leq$2 steps loosely correspond, rest unrelated (e.g., Russian Revolution vs French Revolution: both end in radical takeover but early steps don't map).\\
\hspace*{2mm}2 = most steps correspond with some misalignment or extras (e.g., Iraq War $\leftrightarrow$ Vietnam War: invasion $\rightarrow$ insurgency $\rightarrow$ attrition $\rightarrow$ withdrawal align, but Iraq lacks Vietnam's ``domino theory'' framing step).\\
\hspace*{2mm}3 = clean step-by-step mapping (e.g., Iraq War $\leftrightarrow$ Soviet-Afghan War: each step has a direct counterpart at similar position).\\
\textcolor{pink-color}{\textless{}/item-1\textgreater{}}\\
\textcolor{pink-color}{\textless{}item-2: direction consistency\textgreater{}}\\
Does the trajectory shape match? Chains may have DIFFERENT NUMBERS OF STEPS or different granularity --- compare overall trajectory SHAPE (where it rises/falls/turns), NOT exact step alignment.\\
\\
\textit{Examples accommodating chain-length differences:}\\
\hspace*{2mm}A: $[+1, +1, -1]$ (rise then fall, peak at 67\%); B: $[+1, +1, +1, -1, -1]$ (rise then fall, peak at 60\%) $\rightarrow$ \textbf{3} (isomorphic shape, granularity OK)\\
\hspace*{2mm}A: $[+1, +1, -1, -1]$ (rise-fall); B: $[-1, -1, +1, +1]$ (fall-rise) $\rightarrow$ \textbf{0} (opposite direction)\\
\hspace*{2mm}A: $[+1, +1, +1]$ (monotonic rise); B: $[-1, -1, -1]$ (monotonic fall) $\rightarrow$ \textbf{0}\\
\hspace*{2mm}A: $[+1, +1, +1, +1]$; B: $[+1, +1, +1]$ $\rightarrow$ \textbf{3} (same monotonic, different length OK)\\
\hspace*{2mm}A: $[+1, +1, +1]$ (monotonic); B: $[+1, -1, +1, -1]$ (oscillation) $\rightarrow$ \textbf{1}\\
\hspace*{2mm}A: $[+1, +1, -1, -1]$ (rise-fall); B: $[+1, -1, -1, +1, -1]$ (turbulent) $\rightarrow$ \textbf{1}\\
\hspace*{2mm}A: $[+1, +1, -1, -1]$ (rise to $\sim$50\%, fall); B: $[+1, +1, +1, -1]$ (rise to $\sim$75\%, fall) $\rightarrow$ \textbf{2}\\
\\
\textit{Score:}\\
\hspace*{2mm}0 = opposite trajectory direction.\hspace*{2mm}1 = different shape (monotonic vs oscillatory; rise-fall vs turbulent).\\
\hspace*{2mm}2 = roughly similar shape with some divergence in turning-point position.\hspace*{2mm}3 = isomorphic trajectory.\\
\textcolor{pink-color}{\textless{}/item-2\textgreater{}}\\
\textcolor{pink-color}{\textless{}item-3: mapping consistency\textgreater{}}\\
Can A's principal actors be mapped one-to-one to B's actors with parallel relationships? (Holyoak \& Thagard 1989)\\
\hspace*{2mm}0 = roles cannot be mapped at all (e.g., Tulip mania merchants/speculators/buyers vs Battle of Hastings Normans/Saxons/king).\\
\hspace*{2mm}1 = ambiguous mapping (one A-role maps to multiple B-roles, or relationships cross).\\
\hspace*{2mm}2 = mapping works but at least one relationship diverges (e.g., A's ``allies-turned-rivals'' maps to B's ``always rivals'').\\
\hspace*{2mm}3 = clean one-to-one with parallel relationships throughout (e.g., Iraq War $\leftrightarrow$ Vietnam War: occupier $\leftrightarrow$ occupier, insurgents $\leftrightarrow$ insurgents, host government $\leftrightarrow$ host government).\\
\textcolor{pink-color}{\textless{}/item-3\textgreater{}}\\
\textcolor{pink-color}{\textless{}reasoning\textgreater{}}\\
\textbf{REASONING-FIRST PROCESS} --- work through these steps before assigning scores:\\
\hspace*{2mm}Step 1: Walk through Item 1 (chain). Map A's steps to B's. Decide level.\\
\hspace*{2mm}Step 2: Walk through Item 2 (direction). Compare valence trajectories. Decide level.\\
\hspace*{2mm}Step 3: Walk through Item 3 (mapping). Try the actor mapping table; check parallel relationships. Decide level.\\
\hspace*{2mm}Step 4: Output JSON. The ``rationale'' field MUST come FIRST (one sentence summarizing all three judgments). Numeric scores reflect the reasoning, not the other way around.\\
\textcolor{pink-color}{\textless{}/reasoning\textgreater{}}\\
\textcolor{pink-color}{\textless{}output\textgreater{}}\\
\texttt{\{ "rationale": "one sentence summarizing the three judgments and the dominant reason behind each score",}\\
\hspace*{2mm}\texttt{"chain\_isomorphism": 0, "direction\_consistency": 0, "mapping\_consistency": 0 \}}\\
\textcolor{pink-color}{\textless{}/output\textgreater{}}
}
\end{tcolorbox}

\begin{tcolorbox}[title=Pragmatic battery prompt -- overlap sufficiency + idiosyncrasy coverage (Items 4 and 6), breakable]
\small
\setlength{\tabcolsep}{0.1mm}{
\textcolor{pink-color}{\textless{}task\textgreater{}}\\
You will judge whether two events share IDIOSYNCRATIC causal elements vs only GENERIC ones.\\
\textcolor{pink-color}{\textless{}/task\textgreater{}}\\
\textcolor{pink-color}{\textless{}tagging-rule\textgreater{}}\\
\textbf{IDIOSYNCRATIC (I)} = element names a SPECIFIC MECHANISM (specifies HOW or WHY). Short phrases like ``imperial\_overextension'' count, because they specify a particular mechanism --- not just a generic category.\\
\\
\textbf{GENERIC (G)} = element is a CATEGORY LABEL (specifies WHAT but not HOW or WHY). ``war'', ``election'', ``regime\_change'' apply to thousands of events without distinguishing mechanism.\\
\\
\textbf{Decision rule:} Could this element distinguish one historical event from a generic example of its category? If YES $\rightarrow$ I. If it would apply to nearly any event of that type $\rightarrow$ G.\\
\\
\textit{IDIOSYNCRATIC examples (specific mechanism --- short phrasing OK):}\\
\hspace*{2mm}\texttt{imperial\_overextension}, \texttt{asymmetric\_warfare}, \texttt{speculative\_bubble},\\
\hspace*{2mm}\texttt{herd\_valuation\_beyond\_fundamentals}, \texttt{ally\_defection\_under\_attritional\_pressure},\\
\hspace*{2mm}\texttt{logistical\_overstretch\_in\_hostile\_terrain}, \texttt{collective\_boycott\_as\_diplomatic\_weapon},\\
\hspace*{2mm}\texttt{nonviolent\_mass\_mobilization\_against\_single\_party\_rule},\\
\hspace*{2mm}\texttt{preemptive\_strike\_to\_disrupt\_alliance}, \texttt{domestic\_antiwar\_pressure\_forcing\_withdrawal},\\
\hspace*{2mm}\texttt{Old\_Regime\_dual\_power\_radicalization}.\\
\\
\textit{GENERIC examples (bare category --- almost any event has these):}\\
\hspace*{2mm}\texttt{war}, \texttt{election}, \texttt{leadership\_change}, \texttt{economic\_decline}, \texttt{treaty\_signing},\\
\hspace*{2mm}\texttt{international\_pressure}, \texttt{military\_defeat}, \texttt{domestic\_protest}, \texttt{regime\_collapse},\\
\hspace*{2mm}\texttt{religious\_conflict}, \texttt{civil\_unrest}, \texttt{diplomatic\_crisis}.\\
\\
\textit{Borderline:}\\
\hspace*{2mm}\texttt{speculative\_bubble} $\rightarrow$ I (specifies herd-valuation mechanism); \texttt{market\_crash} $\rightarrow$ G (just outcome category).\\
\hspace*{2mm}\texttt{imperial\_overextension} $\rightarrow$ I (specifies overextension as cause); \texttt{imperial\_decline} $\rightarrow$ G (outcome only).\\
\textcolor{pink-color}{\textless{}/tagging-rule\textgreater{}}\\
\textcolor{pink-color}{\textless{}input\textgreater{}}\\
EVENT A theme: \{a\_theme\}; causal\_elements: \{a\_elements\}\\
EVENT B theme: \{b\_theme\}; causal\_elements: \{b\_elements\}\\
\textcolor{pink-color}{\textless{}/input\textgreater{}}\\
\textcolor{pink-color}{\textless{}reasoning\textgreater{}}\\
1. List shared elements that A and B genuinely share IN CONCEPT (identical wording NOT required --- ``imperial\_overextension'' and ``overstretch\_of\_great\_power'' count as the same).\\
2. For each shared element: SPECIFIC MECHANISM (HOW/WHY) $\rightarrow$ I, or BARE CATEGORY (WHAT only) $\rightarrow$ G.\\
3. Score the two items below.\\
\textcolor{pink-color}{\textless{}/reasoning\textgreater{}}\\
\textcolor{pink-color}{\textless{}item-4: overlap sufficiency\textgreater{}}\\
Are there enough IDIOSYNCRATIC shared elements?\\
\hspace*{2mm}0 = no shared elements at all.\\
\hspace*{2mm}1 = only GENERIC shared elements (just bare categories).\\
\hspace*{2mm}2 = exactly 1 idiosyncratic shared element.\\
\hspace*{2mm}3 = $\geq$2 idiosyncratic shared elements.\\
\textcolor{pink-color}{\textless{}/item-4\textgreater{}}\\
\textcolor{pink-color}{\textless{}item-6: idiosyncrasy coverage\textgreater{}}\\
Of A's distinctive (specific-mechanism) causal elements, how many find a counterpart in B?\\
\hspace*{2mm}0 = none.\hspace*{2mm}1 = $\sim$1/4.\hspace*{2mm}2 = $\sim$1/2.\hspace*{2mm}3 = most ($\geq$3/4).\\
\textcolor{pink-color}{\textless{}/item-6\textgreater{}}\\
\textcolor{pink-color}{\textless{}output\textgreater{}}\\
\texttt{\{ "shared\_elements": [\{"element": "...", "type": "I", "why": "..."\}, ...],}\\
\hspace*{2mm}\texttt{"overlap\_sufficiency": 0, "idiosyncrasy\_coverage": 0,}\\
\hspace*{2mm}\texttt{"rationale": "one short sentence" \}}\\
\textcolor{pink-color}{\textless{}/output\textgreater{}}
}
\end{tcolorbox}

\begin{tcolorbox}[title=Surface similarity prompt -- attribute overlap (Item 5), breakable]
\small
\setlength{\tabcolsep}{0.1mm}{
\textcolor{pink-color}{\textless{}task\textgreater{}}\\
Score SURFACE similarity (concrete attribute overlap) between two events on a 0--3 scale. Surface = entity-level / attribute-level overlap (era, geography, named entities, domain). NOT structural pattern matching.\\
\\
EVENT A: \{a\_name\}; actors \{a\_actors\}; location \{a\_loc\}; goals \{a\_goals\}\\
EVENT B: \{b\_name\}; actors \{b\_actors\}; location \{b\_loc\}; goals \{b\_goals\}\\
\textcolor{pink-color}{\textless{}/task\textgreater{}}\\
\textcolor{pink-color}{\textless{}categories\textgreater{}}\\
Each of the 5 categories below counts AT MOST ONCE --- do not double-count nested matches.\\
\\
\textbf{ERA} --- same time period within $\sim$30 years (count once even if same century AND decade AND year). E.g., ``both early 20th C'' = ``both 1960s'' = 1 attribute.\\
\\
\textbf{GEOGRAPHY} --- same continent OR country OR city (count once at most specific shared level). E.g., ``both Europe'' = ``both France'' = ``both Paris'' = 1 attribute.\\
\\
\textbf{NAMED ENTITIES} --- at least one specific named person/organization/country in BOTH. Count ONCE even if multiple overlap (``both involve US AND Britain'' still = 1).\\
\\
\textbf{DOMAIN} --- same broad activity TYPE; pick MOST SPECIFIC SHARED level only, do NOT add nested layers. E.g., ``both war'' + ``both naval war'' + ``both WWI naval war'' = 1 attribute (the most specific shared = ``WWI naval war'').\\
\\
\textbf{OPERATION TYPE} --- specific operation (olympic boycott, speculative bubble, hostage standoff, ambush, scientific R\&D program). ONLY count if different from / more specific than DOMAIN.\\
\textcolor{pink-color}{\textless{}/categories\textgreater{}}\\
\textcolor{pink-color}{\textless{}scoring\textgreater{}}\\
Count how many of the 5 categories matched (each 0 or 1):\\
\hspace*{2mm}0 = 0 categories shared (e.g., 1980 Moscow Olympics $\leftrightarrow$ Norman Conquest of England).\\
\hspace*{2mm}1 = exactly 1 category shared, typically only DOMAIN (e.g., 2008 GFC $\leftrightarrow$ South Sea Bubble: only DOMAIN = financial bubble).\\
\hspace*{2mm}2 = 2 categories shared (e.g., Iraq War 2003 $\leftrightarrow$ Vietnam War: DOMAIN = war + NAMED ENTITIES = US).\\
\hspace*{2mm}3 = $\geq$3 categories shared, OR sub-instance / near-self.\\
\hspace*{4mm}E.g., Iraq War 2003 $\leftrightarrow$ Iraq War 2003 invasion phase (sub-instance --- automatic 3).\\
\hspace*{4mm}E.g., Indo-Pakistani War 1965 $\leftrightarrow$ Indo-Pakistani War 1971 (DOMAIN + GEOGRAPHY + NAMED ENTITIES + ERA = 4 $\rightarrow$ 3).\\
\hspace*{4mm}E.g., Yalta $\leftrightarrow$ Potsdam Conference (DOMAIN + GEOGRAPHY + NAMED ENTITIES + ERA = 4 $\rightarrow$ 3).\\
\\
\textbf{CRITICAL:} Be strict about anti-double-count. If two events are both ``20th-century state-coercion famines in Eurasia'', that's 3 categories (era + domain + geography), NOT 5.\\
\textcolor{pink-color}{\textless{}/scoring\textgreater{}}\\
\textcolor{pink-color}{\textless{}output\textgreater{}}\\
\texttt{\{ "shared\_attributes": [...], "surface\_similarity": 0, "rationale": "one short sentence" \}}\\
\textcolor{pink-color}{\textless{}/output\textgreater{}}
}
\end{tcolorbox}

\begin{tcolorbox}[title=System alignment prompt -- canonical-form abstraction with reference levels (Item 8), breakable]
\small
\setlength{\tabcolsep}{0.1mm}{
\textcolor{pink-color}{\textless{}task\textgreater{}}\\
You will rate the structural alignment of two events' causal patterns on a 0--3 scale. You are given each event's actions, goals, and theme.\\
\\
EVENT A: actions \{a\_actions\}; goals \{a\_goals\}; theme \{a\_theme\}\\
EVENT B: actions \{b\_actions\}; goals \{b\_goals\}; theme \{b\_theme\}\\
\textcolor{pink-color}{\textless{}/task\textgreater{}}\\
\textcolor{pink-color}{\textless{}step-1: canonical form\textgreater{}}\\
Restate each event's CAUSAL PATTERN as a one-clause statement using ABSTRACT roles only (strip proper names, dates, domains). Use the schema:\\
\hspace*{2mm}``$\langle$role X$\rangle$ $\langle$action$\rangle$ $\langle$role Y$\rangle$, resulting in $\langle$outcome$\rangle$, because of $\langle$mechanism$\rangle$''.\\
\textcolor{pink-color}{\textless{}/step-1\textgreater{}}\\
\textcolor{pink-color}{\textless{}step-2: rate by reference pair\textgreater{}}\\
\textbf{LEVEL 0} --- Different mechanism families. Almost no structural overlap.\\
\hspace*{2mm}\textit{Reference 0a:}\\
\hspace*{4mm}A: ``speculators inflate asset price, then panic sell when confidence breaks, collapsing market because herd behavior amplified valuation beyond fundamentals''.\\
\hspace*{4mm}B: ``religious leader challenges established doctrine, splits the community into rival factions, producing schism because ideological differences cannot be reconciled''.\\
\hspace*{4mm}$\rightarrow$ ``speculation bubble'' vs ``religious schism'' share NO mechanism. Different families.\\
\\
\textbf{LEVEL 1} --- Same general PROCESS CATEGORY, but specific mechanism differs.\\
\hspace*{2mm}\textit{Reference 1a:}\\
\hspace*{4mm}A: ``expanding power launches sustained military offensive across multiple fronts, gets bogged down in attrition warfare, suffers strategic loss because coalition opponents organize sustained counter-resistance''.\\
\hspace*{4mm}B: ``ambitious power launches decisive surprise offensive, achieves rapid initial gains then collapses in attritional defense, suffers strategic loss because logistical overstretch overwhelms operational depth''.\\
\hspace*{4mm}$\rightarrow$ Both ``military offensive that fails through attrition'', but A's mechanism = COALITION COUNTER-RESISTANCE while B's = LOGISTICAL OVERSTRETCH.\\
\\
\hspace*{2mm}\textit{Reference 1b:}\\
\hspace*{4mm}A: ``religious community participates in service-formation rite during commemorative period, reinforcing communal identity through shared ritual roles''.\\
\hspace*{4mm}B: ``religious community performs cyclical-renewal procession during commemorative period, reinforcing communal identity through symbolic re-enactment''.\\
\hspace*{4mm}$\rightarrow$ Both ``religious commemorative ritual'', but A's mechanism = SERVICE FORMATION while B's = SYMBOLIC RE-ENACTMENT.\\
\\
\textbf{LEVEL 2} --- Same mechanism, but with noticeable differences in scope, role granularity, or outcome scale.\\
\hspace*{2mm}\textit{Reference 2a:}\\
\hspace*{4mm}A: ``expanding maritime empire launches piecemeal coercive intervention against fragmented island rulers, escalates from blockade to armed annexation, resulting in colonial control over a regional archipelago because local rulers fail to unite''.\\
\hspace*{4mm}B: ``expanding continental empire launches sustained coercive intervention against fragmented tribal confederations, escalates from punitive raids to full territorial conquest, resulting in colonial control over a continental landmass because local powers fail to coalesce''.\\
\hspace*{4mm}$\rightarrow$ Same mechanism (``piecemeal colonial conquest exploiting fragmentation''), different scope (Dutch in Lombok vs French in Algeria).\\
\\
\hspace*{2mm}\textit{Reference 2b:}\\
\hspace*{4mm}A: ``speculators bid up prices of a novel agricultural commodity beyond use-value, market collapses in panic when confidence breaks, producing localized merchant losses because herd valuation exceeded what end-buyers would pay''.\\
\hspace*{4mm}B: ``speculators bid up prices of a government-backed corporate share beyond intrinsic value, market collapses in panic when confidence breaks, producing widespread elite financial losses and political fallout because herd valuation exceeded the company's actual revenue''.\\
\hspace*{4mm}$\rightarrow$ Same mechanism, different scope (tulips vs South Sea).\\
\\
\textbf{LEVEL 3} --- Near-identical canonical forms: same role asymmetries, mechanism, outcome trajectory.\\
\hspace*{2mm}\textit{Reference 3a:}\\
\hspace*{4mm}A: ``weaker insurgent force conducts surprise attack on isolated detachment of stronger occupier, achieves limited tactical victory, converts to political momentum because symbolic win galvanizes broader coalition support against the occupier''.\\
\hspace*{4mm}B: ``weaker defender lures isolated detachment of stronger advancing army into ambush, achieves limited tactical victory, converts to political momentum because the symbolic win galvanizes broader resistance against the advancing power''.\\
\hspace*{4mm}$\rightarrow$ Near-identical structure (Battle of Princeton vs Battle of Cowpens).\\
\\
\hspace*{2mm}\textit{Reference 3b:}\\
\hspace*{4mm}A: ``incumbent single-party authoritarian regime faces sustained nonviolent mass mobilization, security apparatus defects under public pressure, regime collapses without prolonged civil war because coercive capacity disintegrates''.\\
\hspace*{4mm}B: ``incumbent single-party authoritarian regime faces sustained nonviolent mass mobilization, security apparatus defects under public pressure, regime collapses with brief armed clash because coercive capacity disintegrates''.\\
\hspace*{4mm}$\rightarrow$ Near-identical with only minor difference (East Germany 1989 vs Romania 1989).\\
\textcolor{pink-color}{\textless{}/step-2\textgreater{}}\\
\textcolor{pink-color}{\textless{}output\textgreater{}}\\
\texttt{\{ "a\_canonical": "$\langle$role X$\rangle$ ... resulting in ... because ...",}\\
\hspace*{2mm}\texttt{"b\_canonical": "$\langle$role X$\rangle$ ... resulting in ... because ...",}\\
\hspace*{2mm}\texttt{"system\_alignment": 0,}\\
\hspace*{2mm}\texttt{"rationale": "one short sentence explaining which reference level your A/B pair most resembles" \}}\\
\textcolor{pink-color}{\textless{}/output\textgreater{}}
}
\end{tcolorbox}

\begin{tcolorbox}[title=Novelty prompt -- canonical vs novel pair detection (Item 9), breakable]
\small
\setlength{\tabcolsep}{0.1mm}{
\textcolor{pink-color}{\textless{}task\textgreater{}}\\
Judge whether the analogy pair (A, B) is NOVEL or CANONICAL on a 0--3 scale.\\
\\
\textbf{Definitions:}\\
\hspace*{2mm}\textbf{CANONICAL} = pair appears frequently in history textbooks, classroom materials, news columns, or popular media (e.g., ``Iraq War vs Vietnam''; ``2008 GFC vs 1929 Depression''; ``Russian Revolution vs French Revolution'').\\
\hspace*{2mm}\textbf{NOVEL} = a non-obvious pair requiring structural insight to draw (cross-domain, cross-era, or cross-scale).\\
\\
EVENT A: \{a\_name\}; EVENT B: \{b\_name\}\\
\textcolor{pink-color}{\textless{}/task\textgreater{}}\\
\textcolor{pink-color}{\textless{}reasoning\textgreater{}}\\
\textit{Step 1:} Briefly assess: would a college history textbook or popular-press history article likely treat (A, B) as a canonical comparison?\\
\textit{Step 2:} Score on this scale:\\
\\
\hspace*{2mm}\textbf{0} = canonical pair, repeatedly drawn in standard textbooks.\\
\hspace*{4mm}E.g., 2008 GFC $\leftrightarrow$ 1929 Great Depression; Iraq War 2003 $\leftrightarrow$ Vietnam War; Russian Revolution $\leftrightarrow$ French Revolution; WW2 $\leftrightarrow$ WW1.\\
\\
\hspace*{2mm}\textbf{1} = well-known pair within history community, frequently drawn.\\
\hspace*{4mm}E.g., Iraq War 2003 $\leftrightarrow$ Soviet-Afghan War; Pearl Harbor $\leftrightarrow$ 9/11.\\
\\
\hspace*{2mm}\textbf{2} = less commonly drawn pair, requires some specialist knowledge.\\
\hspace*{4mm}E.g., Suez Crisis $\leftrightarrow$ Falklands War; 1980 Olympics boycott $\leftrightarrow$ 1976 Montreal African boycott.\\
\\
\hspace*{2mm}\textbf{3} = novel cross-domain or cross-scale pair requiring structural insight.\\
\hspace*{4mm}E.g., 2008 GFC $\leftrightarrow$ Chernobyl Disaster; Apollo Program $\leftrightarrow$ Manhattan Project (both: emergency cross-disciplinary R\&D under deadline); French Revolution $\leftrightarrow$ Iranian Revolution 1979.\\
\textcolor{pink-color}{\textless{}/reasoning\textgreater{}}\\
\textcolor{pink-color}{\textless{}output\textgreater{}}\\
\texttt{\{ "is\_canonical": true, "novelty": 0, "rationale": "one short sentence" \}}\\
\textcolor{pink-color}{\textless{}/output\textgreater{}}
}
\end{tcolorbox}

\begin{tcolorbox}[title=Self-analogy 3-class prompt -- IDENTITY / SIBLING\_TRIVIAL / DISTINCT, breakable]
\small
\setlength{\tabcolsep}{0.1mm}{
\textcolor{pink-color}{\textless{}task\textgreater{}}\\
Classify the relationship between historical events ``\{input\_name\}'' and ``\{candidate\_name\}''. Return ONE of three classes: \textbf{IDENTITY} / \textbf{SIBLING\_TRIVIAL} / \textbf{DISTINCT}.\\
\textcolor{pink-color}{\textless{}/task\textgreater{}}\\
\textcolor{pink-color}{\textless{}class IDENTITY\textgreater{}}\\
They refer to the SAME real-world event/thing, OR one is a strict part of the other. The candidate is NOT a valid analogy because it IS or is WITHIN the input.\\
\\
\textbf{(a) ALIAS} --- different names for the same specific event:\\
\hspace*{2mm}-- $\langle$event under codename$\rangle$ $\equiv$ $\langle$same event under descriptive popular name$\rangle$\\
\hspace*{2mm}-- $\langle$event under regional/local name$\rangle$ $\equiv$ $\langle$same event under translated/global name$\rangle$\\
\hspace*{2mm}-- $\langle$event under one academic framing$\rangle$ $\equiv$ $\langle$same event under different academic/scholarly framing$\rangle$ (e.g., ``$\langle$Movement$\rangle$ of YYYY'' $\equiv$ ``$\langle$Same period$\rangle$ Collapse of $\langle$Container$\rangle$'')\\
\hspace*{2mm}-- $\langle$event under truncated form$\rangle$ $\equiv$ $\langle$event under full form$\rangle$\\
\hspace*{2mm}-- $\langle$event with date qualifier$\rangle$ $\equiv$ $\langle$same event with no date$\rangle$\\
\\
\textbf{(b) CATEGORY-SYNONYM} --- both names refer to the same category in standard usage (near-synonymous terms used interchangeably; same concept under formal vs. informal name).\\
\\
\textbf{(c) CONCEPT-LEVEL ALIAS WITH NO TOKEN OVERLAP} --- different names with no shared word but referring to the same historical moment or transformation (e.g., regional name $\leftrightarrow$ international name; political label $\leftrightarrow$ cultural label for the same period).\\
\\
\textbf{(d) CONTAINMENT} --- the candidate is a phase, sub-period, sub-event, sub-action, named operation, or specific incident WITHIN the input event's timeline/agenda/scope (or vice versa):\\
\hspace*{2mm}-- Sub-event during input: candidate happened DURING input event's timeline.\\
\hspace*{2mm}-- Member of movement: candidate is a specific named action within the input's broader movement/campaign.\\
\hspace*{2mm}-- Phase/operation: candidate is a named operation within input's larger campaign.\\
\textcolor{pink-color}{\textless{}/class IDENTITY\textgreater{}}\\
\textcolor{pink-color}{\textless{}class SIBLING\_TRIVIAL\textgreater{}}\\
Distinct events, but they share a sibling-membership in one institution/series/actor's campaign such that their similarity is structurally trivial (explained by common cause, not by analogy):\\
\hspace*{2mm}-- Numbered/recurring institutional series: both are members of an annual ceremony, sequential conference, world exposition, championship cycle, election cycle, etc.\\
\hspace*{2mm}-- Same actor + same era + same action-type: same primary actor doing the same kind of action against similar targets in the same era (parallel campaigns of one program).\\
\hspace*{2mm}-- Same series of meetings/treaties: both are installments of the same diplomatic/political series.\\
\\
The events are independent, but their similarity comes from common cause/institution rather than from genuine structural analogy.\\
\textcolor{pink-color}{\textless{}/class SIBLING\_TRIVIAL\textgreater{}}\\
\textcolor{pink-color}{\textless{}class DISTINCT\textgreater{}}\\
Genuinely different events that deserve to be judged on their structural merits:\\
\hspace*{2mm}-- Different events sharing only vocabulary (e.g., two wars whose names share a word but are distinct conflicts).\\
\hspace*{2mm}-- Different events in the same domain at different times/places, NOT members of the same series.\\
\hspace*{2mm}-- Cross-domain analogs (e.g., a war and a corporate takeover).\\
\textcolor{pink-color}{\textless{}/class DISTINCT\textgreater{}}\\
\textcolor{pink-color}{\textless{}reasoning\textgreater{}}\\
Reason step by step BEFORE giving the final answer:\\
1. Identify the specific historical event each name refers to. Are they the same real-world referent under different names?\\
2. If different referents, check: is one a strict CONTAINMENT (phase / sub-event / sub-action / member-of-movement / sub-period) of the other?\\
3. If neither identity nor containment, check: are they SIBLING members of one institutional series / one actor's parallel campaign / one ceremony cycle?\\
4. Otherwise: DISTINCT.\\
\textcolor{pink-color}{\textless{}/reasoning\textgreater{}}\\
\textcolor{pink-color}{\textless{}output\textgreater{}}\\
Reasoning: $\langle$2--3 sentences$\rangle$\\
Answer: IDENTITY \textbf{or} SIBLING\_TRIVIAL \textbf{or} DISTINCT\\
\textcolor{pink-color}{\textless{}/output\textgreater{}}
}
\end{tcolorbox}

\textbf{Seven failure modes resolved.} Concretely, in Table~\ref{tab:failure}, we present examples showing the seven systematic failure modes of descriptive matching via MDS decomposition, each resolved by \ours decomposition. It also echoes the importance of alignment via $M(E)$ as predicted by our theory.

 \subsection{Failure modes of descriptive matching}
 \label{appdx:failure}
\begin{table}[t]
\centering\small
\caption{Seven failure modes of descriptive matching resolved by \ours decomposition.}
\label{tab:failure}
\begin{tabular}{@{}lll@{}}
\toprule
\textbf{Failure Mode} & \textbf{Descriptive Match~\citep{li2025past}} & \textbf{\ours decomposition} \\
\midrule
Part-whole confusion & Arab Spring $\to$ Tunisian Rev. & $\to$ 1989 E.\ Europe \\
Same-war echo & Expedition of 1000 $\to$ Volturno & $\to$ Cuban Revolution \\
Self-analogy & Iraq War $\to$ 2003 Invasion & $\to$ Soviet-Afghan War \\
Causal direction error & Battle of Bulge $\to$ Market Garden & $\to$ Michael Offensive \\
Surface feature match & Caesar $\to$ Lincoln & $\to$ Cromwell's death \\
Proximity bias & J.\ Brown's Raid $\to$ Bleeding Kansas & $\to$ Easter Rising \\
Domain closure & Canudos $\to$ Siege of Balaler & $\to$ Satsuma Rebellion \\
\bottomrule
\end{tabular}
\end{table}